\newtheorem{definition}{Definition}[section]
\newcommand{\dT}{\top}
\algnewcommand\algorithmicparfor{\textbf{parfor}}
\algnewcommand\algorithmicpardo{\textbf{do}}
\algnewcommand\algorithmicendparfor{\textbf{end\ parfor}}
\newtheorem{theorem}{Theorem}
\newtheorem{corollary}{Corollary}
\newtheorem{lemma}{Lemma}
\newcommand*\Let[2]{\State #1 $\gets$ #2}
\algrenewcommand\algorithmicrequire{\textbf{Precondition:}}
\algrenewcommand\algorithmicensure{\textbf{Postcondition:}}
\definecolor{Gray}{gray}{0.80}
\DeclareMathOperator*{\arginf}{arginf} 
\DeclareMathOperator*{\argsup}{argsup} 
\begin{document}

\newtheorem{remark}{Remark}
\makeatletter 
\@addtoreset{equation}{section}
\makeatother  

\def\yincomment#1{\vskip 2mm\boxit{\vskip 2mm{\color{red}\bf#1} {\color{blue}\bf --Yin\vskip 2mm}}\vskip 2mm}
\def\squarebox#1{\hbox to #1{\hfill\vbox to #1{\vfill}}}
\def\boxit#1{\vbox{\hrule\hbox{\vrule\kern6pt
          \vbox{\kern6pt#1\kern6pt}\kern6pt\vrule}\hrule}}

\def\theequation{\thesection.\arabic{equation}}
\newcommand{\ds}{\displaystyle}

\newcommand{\bJ}{\mbox{\bf J}}
\newcommand{\bF}{\mbox{\bf F}}
\newcommand{\bM}{\mbox{\bf M}}
\newcommand{\bR}{\mbox{\bf R}}
\newcommand{\bZ}{\mboxZ}
\newcommand{\bX}{\mbox{\bf X}}
\newcommand{\bx}{\mbox{\bf x}}
\newcommand{\bQ}{\mbox{\bf Q}}
\newcommand{\bH}{\mbox{\bf H}}
\newcommand{\bh}{\mbox{\bf h}}
\newcommand{\bz}{\mboxZ}
\newcommand{\ba}{\mbox{\bf a}}
\newcommand{\be}{\mbox{\bf e}}
\newcommand{\bG}{\mboxG}
\newcommand{\bB}{\mbox{\bf B}}
\newcommand{\bb}{\mbox{\bf b}}
\newcommand{\bA}{\mbox{\bf A}}
\newcommand{\bC}{\mbox{\bf C}}
\newcommand{\bI}{\mbox{\bf I}}
\newcommand{\bD}{\mbox{\bf D}}
\newcommand{\bU}{\mbox{\bf U}}
\newcommand{\bc}{\mbox{\bf c}}
\newcommand{\bd}{\mbox{\bf d}}
\newcommand{\bs}{\mbox{\bf s}}
\newcommand{\bS}{\mbox{\bf S}}
\newcommand{\bV}{\mbox{\bf V}}
\newcommand{\bv}{\mbox{\bf v}}
\newcommand{\bW}{\mbox{\bf W}}
\newcommand{\bw}{\mbox{\bf w}}
\newcommand{\bg}{\mboxG}
\newcommand{\bu}{\mbox{\bf u}}
\def\bb{{\bf b}}

\newcommand{\bcU}{\boldsymbol{\cal U}}
\newcommand{\bbeta}{\boldsymbol{\beta}}
\newcommand{\bdelta}{\boldsymbol{\delta}}
\newcommand{\bDelta}{\boldsymbol{\Delta}}
\newcommand{\boldeta}{\boldsymbol{\eta}}
\newcommand{\bxi}{\boldsymbol{\xi}}
\newcommand{\bGamma}{\boldsymbol{\Gamma}}
\newcommand{\bSigma}{\boldsymbol{\Sigma}}
\newcommand{\balpha}{\boldsymbol{\alpha}}
\newcommand{\bOmega}{\boldsymbol{ R}}
\newcommand{\btheta}{\boldsymbol{\theta}}
\newcommand{\bmu}{\boldsymbol{\mu}}
\newcommand{\bnu}{\boldsymbol{\nu}}
\newcommand{\bgamma}{\boldsymbol{\gamma}}

\newtheorem{thm}{Theorem}[section]
\newtheorem{lem}{Lemma}[section]
\newtheorem{rem}{Remark}[section]
\newcolumntype{L}[1]{>{\raggedright\let\newline\\\arraybackslash\hspace{0pt}}m{#1}}
\newcolumntype{C}[1]{>{\centering\let\newline\\\arraybackslash\hspace{0pt}}m{#1}}
\newcolumntype{R}[1]{>{\raggedleft\let\newline\\\arraybackslash\hspace{0pt}}m{#1}}

\newcommand{\tabincell}[2]{\begin{tabular}{@{}#1@{}}#2\end{tabular}}

\title{\bf Nonparametric Functional Approximation with Delaunay Triangulation}
\author{Yehong Liu$^1$}
\author{Guosheng Yin$^2$}
\affil{
Department of Statistics and Actuarial Science\\
The University of Hong Kong\\
Pokfulam Road, Hong Kong\\
Email: $^1$liuyh@hku.hk and $^2$gyin@hku.hk}
\renewcommand\Authands{ and }
\maketitle

\begin{abstract}
We propose a differentiable nonparametric algorithm, the Delaunay triangulation learner (DTL), to solve the functional approximation problem on the basis of a $p$-dimensional feature space.
By conducting the Delaunay triangulation algorithm on the data points, the DTL partitions the feature space into a series of $p$-dimensional simplices in a geometrically optimal way, and fits a linear model within each simplex.
We study its theoretical properties by exploring the geometric properties of the Delaunay triangulation, and compare its performance with other statistical learners in numerical studies.

\vspace{0.5cm}
\end{abstract}

\noindent{KEY WORDS:} Convex optimizations, Curvature regularization, Nonparametric regression, Piecewise linear, Prediction.

\section{Introduction}
In recent years, the great success of the deep neural network (DNN) has ignited the machine learning community in formulating predictive models as solutions to differentiable optimization problems. The distributions of samples, e.g., images and languages, are assumed to be concentrated in the regions of some low-dimensional smooth functionals (manifolds), such that even if the samples are not strictly on the manifold, the error can be very small. More specifically, the manifold assumption can be summarized in three folds:
\begin{itemize}
\item [(1)] The underlying model (i.e., the principle functional) for the data distribution is on a low-dimensional functional.
\item [(2)] The principal functional is smooth but complicated in shape.
\item [(3)] The errors between the the functional and real data are very small relative to the variance of the functional.
\end{itemize}
These assumptions are the prerequisites of the success of DNN. First, the DNN is a universal approximator, which ensures its flexibility in shape when approximating the principal surface of the model. Second, the sparse techniques used in DNN, e.g., dropout and pooling, make it possible to regularize the network to concentrate on low-dimensional manifolds. Third, the sample size used in the training process of DNN is usually large, which ensures that DNN can well approximate the principle functional. Therefore, DNN is a successful example of a nonparametric, flexible and differentiable approximator, yet with its only weakness on interpretability.
We focus on the problem of approximating a low-dimensional smooth functional with small errors or without errors, and propose an interpretable, geometrically optimal, nonparametric and differentiable approach, the Delaunay triangulation learner (DTL).

The DTL has a number of attractive properties:
\begin{itemize}
\item [(1)] DTL fits a piecewise linear model, which is well established and can be easily interpreted.
\item [(2)] DTL naturally separates the feature space in a geometrically optimal way, and it has local adaptivity in each separated region when fitting a smooth functional.
\item [(3)] Compared with the general triangulation methods (e.g., random triangulation), where the geometric structure of the triangles can be difficult to analyze, the Delaunay triangulation makes it possible for many stunning geometrical and statistical properties, which lays out a new direction for statistical research on piecewise linear models.
\item [(4)] Based on the construction of the DTL, we can define the regularization function geometrically to penalize the roughness of the fitted function.
\item [(5)] DTL can accommodate multidimensional subspace interactions in a flexible way, which is useful when the output of the model is dependent on the impact of a group of covariates.
\item [(6)] DTL is formulated as a differentiable optimization problem, which can be solved via a series of well-known gradient-based optimizers.
\end{itemize}
To better illustrate the advantages of the DTL both theoretically and empirically, we present the low-dimensional settings where there are only a small number of features ($p\ll n$), while leaving extensions to high-dimensional settings in our future work.

\section{Delaunay Triangulation}\label{Delaunay_Triangulation}
For ease of exposition, we use `triangle' and `simplex' exchangeably throughout this paper, as triangle is a two-dimensional simplex. All definitions and theories are established on general dimension ($p\geq2$).
In geometry, triangulation is often used to determine the location of a point by forming simplex to the point from other known points.
Let $\bold{P}$ be a set of $n$ points in the $p$-dimensional space ($n \geq p+1$), and the convex hull of the points has a nonzero volume.
The Delaunay triangulation of the points is defined as follows.
\begin{definition}
Given a set of points $\bold{P}$ in the $p$-Euclidean space,
the Delaunay triangulation $DT(\bold{P})$ is a triangulation such that no point in $\bold{P}$ is inside the circumscribed spheres of other triangles in $DT(\bold{P})$.
\end{definition}
This definition is illustrated in Figure \ref{fig:Delaunay_plots} (a).
Geometrically, it has been well established that for any given points $\bold{P}$ in general position (the convex hull of $\bold{P}$ is non-degenerate in $p$-dimensional space), there exists
only one Delaunay triangulation $DT(\bold{P})$, i.e., the Delaunay triangulation is unique \citep{Delaunay1934}.
Intuitively, the Delaunay triangulation algorithm results in a group of simplices that are most regularized in shape, in comparison to any other type of triangulation.
As shown in Figure \ref{fig:Delaunay_plots} (b) and (c), in a two-dimensional space, the mesh constructed by the Delaunay triangulation has fewer skinny triangles compared with that by a random triangulation.
Figure \ref{fig:Delaunay_Triangulation_surface} shows a saddle surface and its Delaunay triangulation.
\begin{figure}[H]
	\subfloat[]{%
    \includegraphics[width=0.3\linewidth]{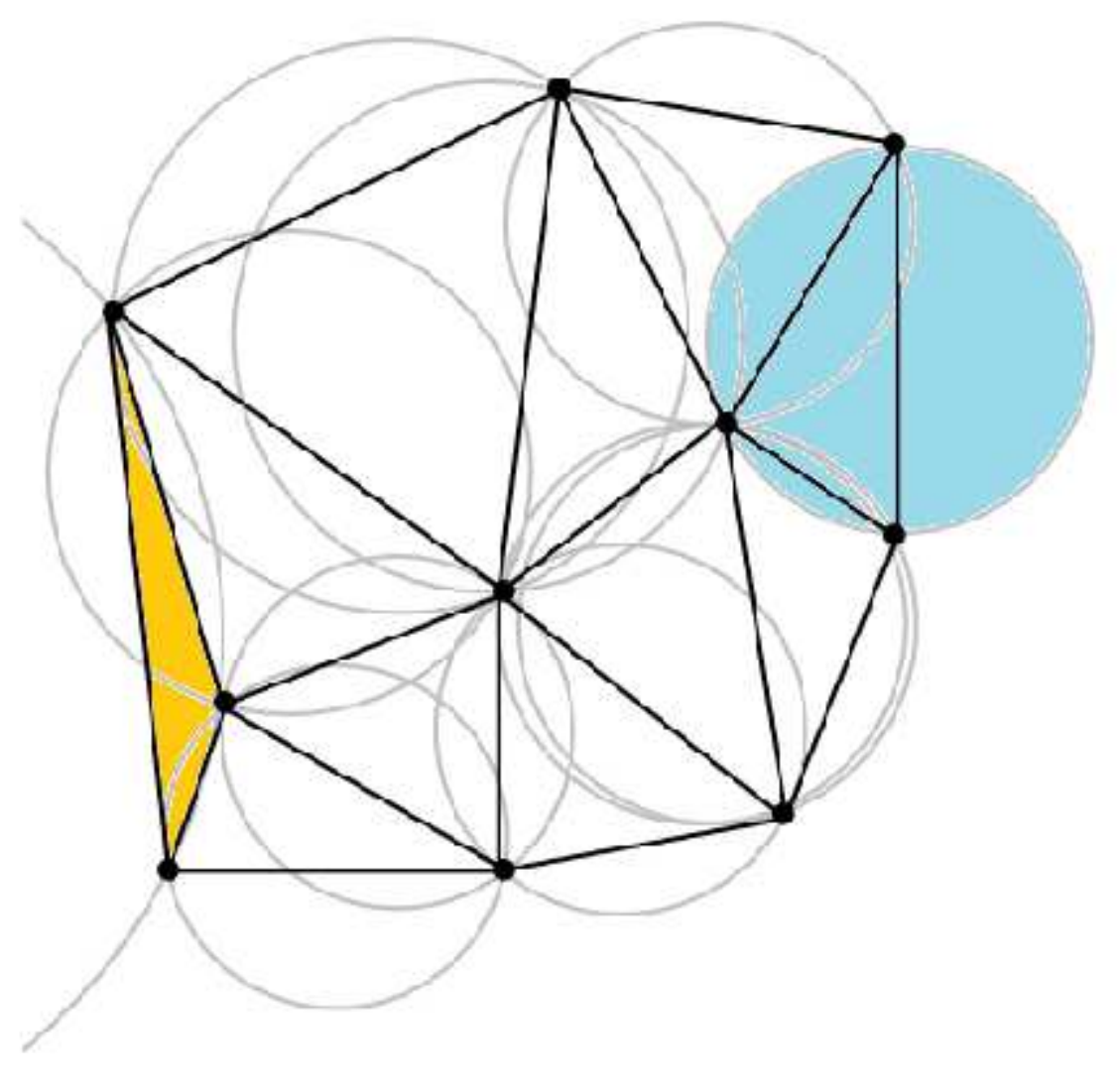}
    }
    \hfill
    \subfloat[]{%
    \includegraphics[width=0.3\linewidth]{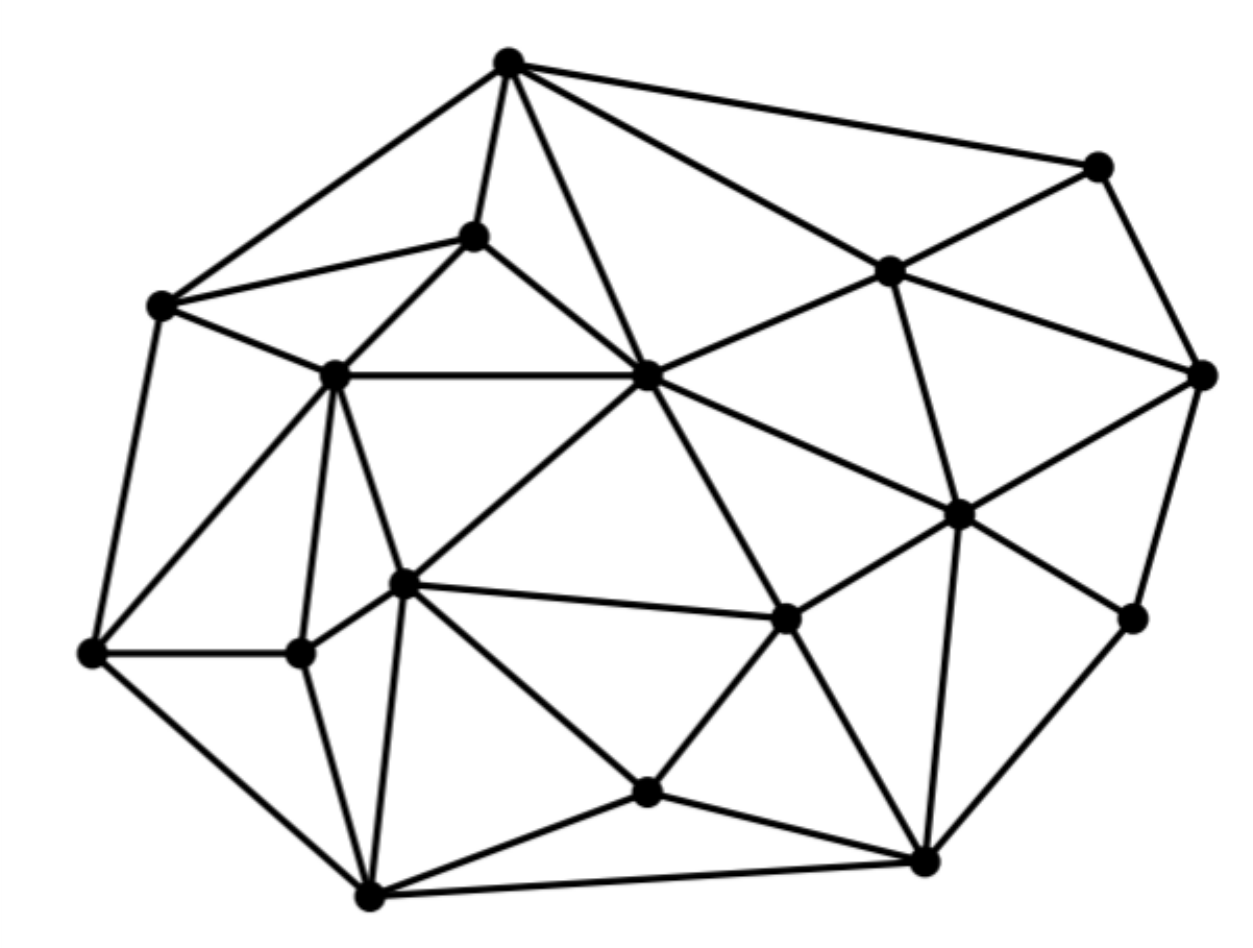}
    }
    \hfill
    \subfloat[]{%
    \includegraphics[width=0.3\linewidth]{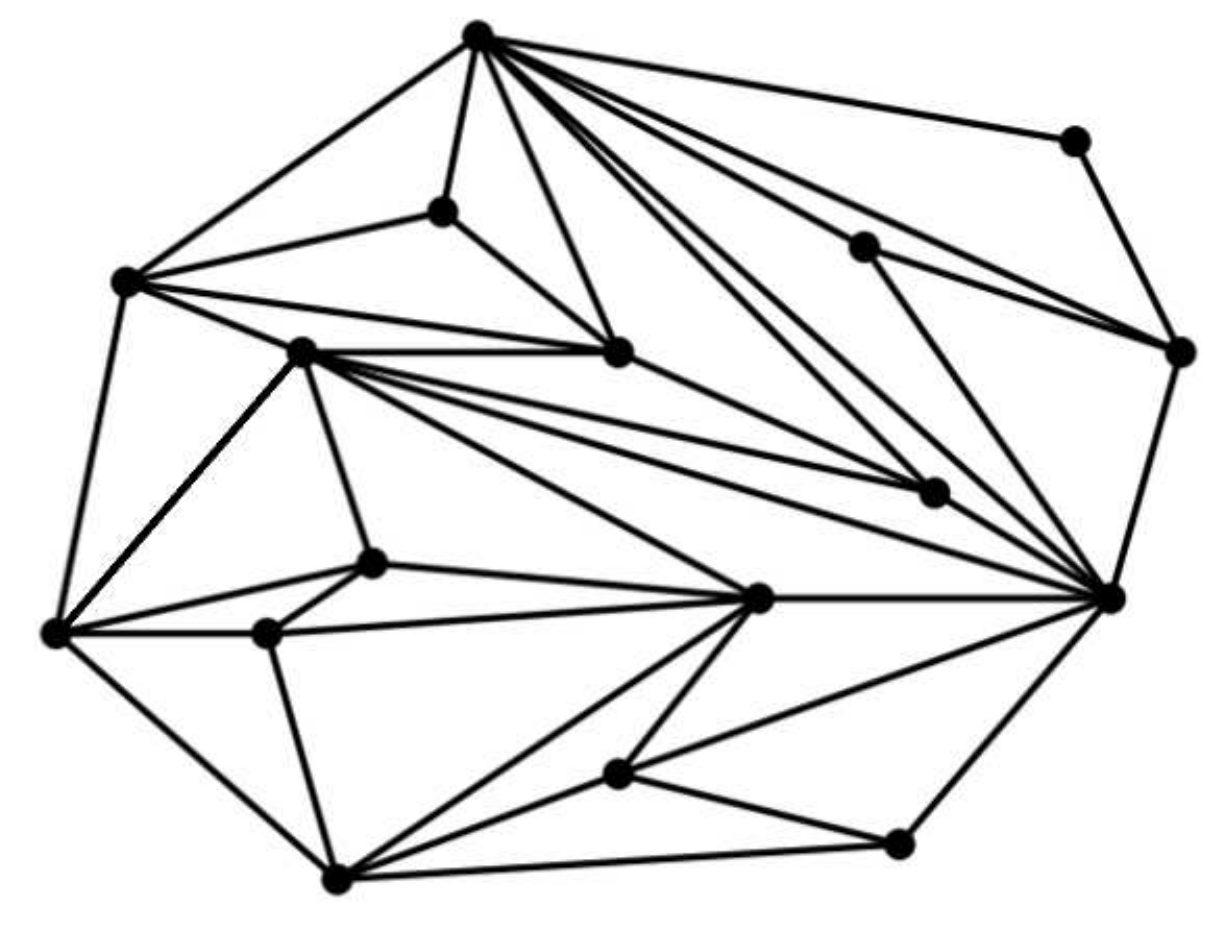}
    }
    \hfill
    \caption{(a) Illustration of the Delaunay triangulation: The circumcircle of any triangle in $DT(\bold{P})$ does not include any other point in $\bold{P}$, e.g., the circumcircle in green does not contain any other point. In a two-dimensional space, the Delaunay triangulation maximizes the smallest angle in all the triangles, e.g., the smallest angle of the $DT(\bold{P})$ is the smallest one in the yellow triangle; (b) Delaunay triangulation; (c) random triangulation.}
    \label{fig:Delaunay_plots}
\end{figure}

Several algorithms can be used to implement the Delaunay triangulation, including the flip algorithm, the Bowyer--Watson algorithm \citep{Bowyer1981}, and \citep{Watson1981}, and the divide-and-conquer paradigm \citep{Cignoni1998}.
The flip algorithm is fairly straightforward as it constructs the triangulation of points and flips the edges until every triangle is of the Delaunay type. It takes $O(n^2)$ time for the edge-flipping operation, where $n$ is the number of points.
The Bowyer--Watson algorithm is relatively more efficient as it takes $O(n \log(n))$ time, by repeatedly adding one vertex at a time and re-triangulating the affected parts of the graph.
In the divide-and-conquer algorithm, a line is drawn recursively to split the vertices into two groups, and the Delaunay triangulation is then implemented for each group. The computational time can be reduced to $O(n \log \log n)$ owing to some tuning techniques.
For higher dimensional cases ($p>2$),
\cite{Fortune1992} proved that the plane-sweep approach produces the worst case time complexity of $O(n^{[p/2]})$ \citep{Klee2015}.
In this paper, the Delaunay triangulation algorithm is implemented by using the Python SciPy package \citep{Jones2001}.

\begin{figure}[t]
    \subfloat[]{%
    \includegraphics[width=0.5\linewidth]{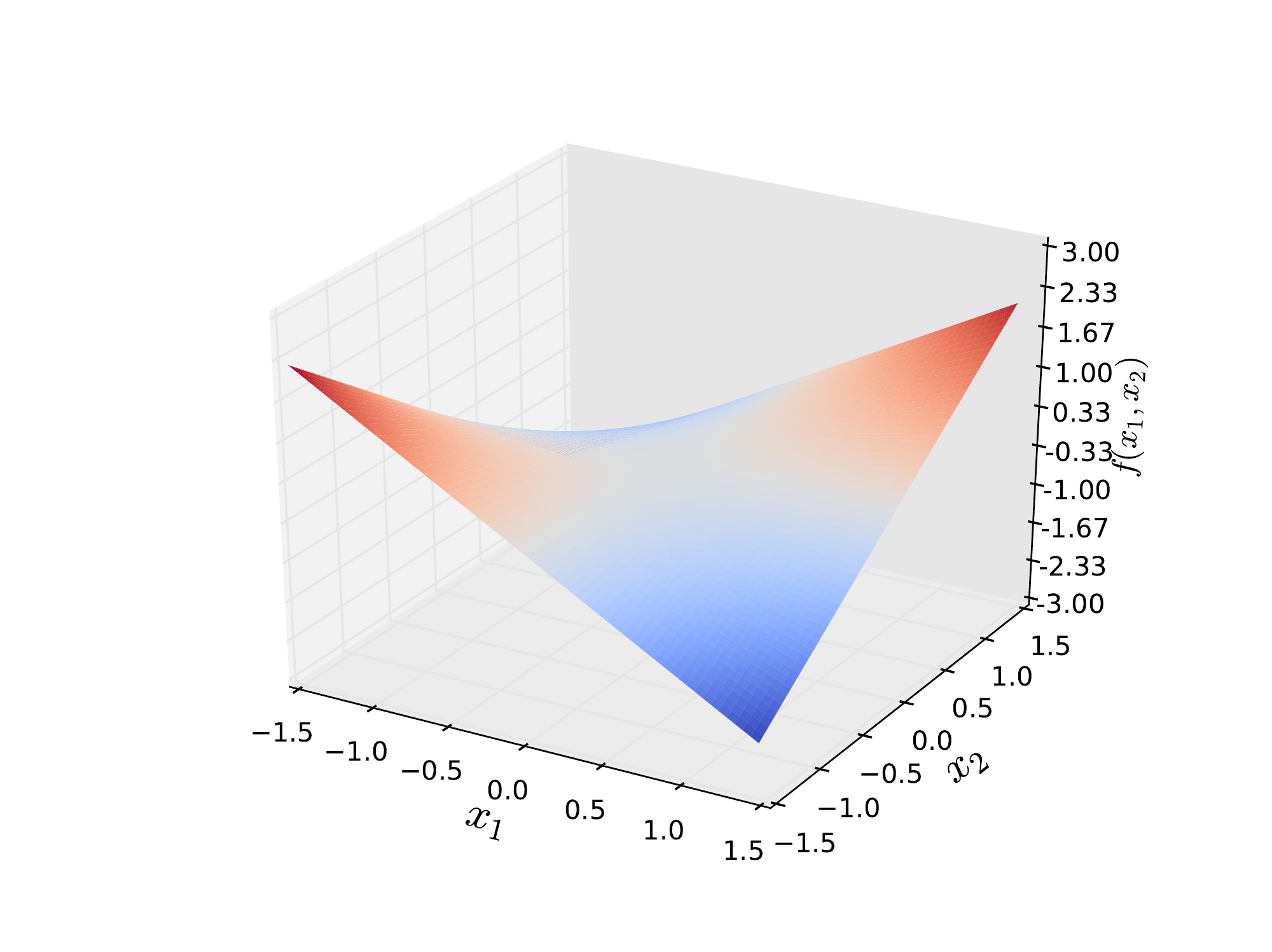}
    }
    \hfill
    \subfloat[]{%
    \includegraphics[width=0.5\linewidth]{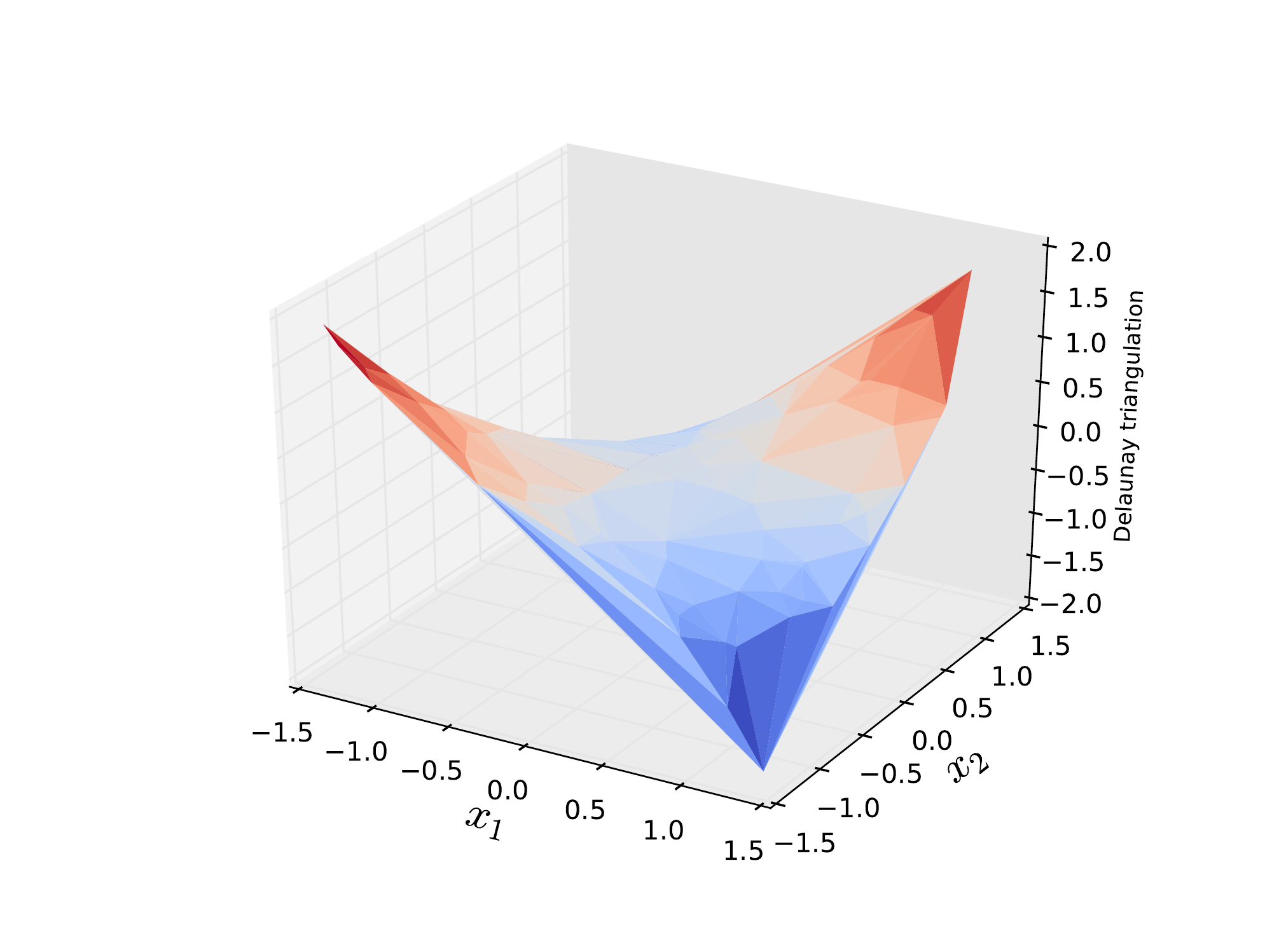}
    }
    \hfill
    \caption{(a) The true function $f(x_1, x_2)=x_1 x_2$, (b) the Delaunay triangulation of the surface based on 100 random samples.}
    \label{fig:Delaunay_Triangulation_surface}
\end{figure}

\section{Delaunay Triangulation Learner}\label{DTL}
\subsection{Formalization of DTL}
As shown in Figure \ref{fig:Delaunay_Triangulation_surface}, given the data $\left\{\mathcal{X}, \mathcal{Y}\right\}$ from a probabilistic model (joint distribution) $\mathcal{P_{X, Y}}$, where $\mathcal{Y}=\left\{y_i \right\}_{i=1}^n$ and $\mathcal{X}=\left\{\bold{x}_i \right\}_{i=1}^n$, the DTL is formalized as follows.

\begin{itemize}

\item[(1)] Delaunay Partition\\
We partition the feature space with a triangle mesh by conducting Delaunay triangulation on $\mathcal{X}=\left\{\bold{x}_i \right\}_{i=1}^n$ and obtain a set of simplices, denoted as $\mathcal{D}\left(\mathcal{X}\right)$.
The convex hull of the Delaunay triangulation, denoted as $\mathcal{H}\left(\mathcal{X}\right)$, is unique by definition.
For any point $\bold{x}$ in the feature space, if $\bold{x}\in \mathcal{H}\left(\mathcal{X}\right)$, there exists a simplex in $\mathcal{D}\left(\mathcal{X}\right)$, denoted as $\mathcal{S}(\bold{x}, \mathcal{X})$, that contains $\bold{x}$,
\begin{eqnarray*}
\mathcal{S}(\bold{x}, \mathcal{X})=
\argsup_{\mathcal{S}\in \mathcal{D}(\mathcal{X})} I\left(\bold{x}\in \mathcal{S}\right), & \text{if  $\bold{x} \in \mathcal{H}(\mathcal{X})$},
\end{eqnarray*}
where $I\left(\bold{x}\in \mathcal{S}\right)$ is the indicator function.
If $\bold{x}\notin \mathcal{H}(\mathcal{X})$, there exists a point $\bold{x}(\mathcal{X})$ nearest to $\bold{x}$, with the corresponding response $y(\bold{x}(\mathcal{X})).$

\item[(2)] Parabolic Lifting\\
Let $\bold{\Psi}=\left(\psi_1,\ldots, \psi_n \right)^\dT$ denote a vector of location parameters.
We construct the DTL as a linear interpolation function based on $\left\{\mathcal{X}, \bold{\Psi}\right\}$, which is denoted as $F_{D}(\bold{x};  \bold{\Psi})$.
The linear interpolation function takes the form
\[
    F_{D}(\bold{x}; \bold{\Psi})=
\begin{cases}
    g(\bold{x}; \mathcal{S}(\bold{x}, \mathcal{X}), \bold{\Psi}), & \text{if  $\bold{x} \in \mathcal{H}(\mathcal{X})$},\\
   y(\bold{x}(\mathcal{X})), & \text{if $\bold{x}\not\in \mathcal{H}(\mathcal{X})$},
\end{cases}
\]
where $g(\bold{x}; \mathcal{S}(\bold{x}, \mathcal{X}), \bold{\Psi})$ is a $p$-dimensional linear function defined on the simplex $\mathcal{S}(\bold{x}, \mathcal{X})$, satisfying
\begin{eqnarray*}
g(\bold{x}_i; \mathcal{S}(\bold{x}_i, \mathcal{X}), \bold{\Psi})=\psi_{i}, \quad i=1, \ldots, n.
\end{eqnarray*}
\end{itemize}

In summary, the DTL partitions the feature space with a Delaunay triangulation algorithm. It is a nonparametric functional learner as the dimension of the parameter $\bold{\Psi}$ grows at the same pace as the sample size $n$.

\subsection{Optimization}
In the DTL functional optimization problem, one has a response variable $y$ and a vector of input variables $\bold{x}=\left(x_1,\ldots,x_p\right)^\dT$ in a $p$-dimensional feature space. Given the data $\left\{\mathcal{X}, \mathcal{Y}\right\}$, our goal is to obtain an estimator $\hat{F}_D$ by solving an optimization problem,
\begin{eqnarray*}
\bold{\Psi}^* = \arginf_\bold{\Psi} L(y, F_{D}(\bold{x}; \bold{\Psi})) + \lambda R(F_{D}(\bold{x}; \bold{\Psi})),
\end{eqnarray*}
where $L(y, F_{D}(\bold{x}; \bold{\Psi}))$ is the loss function, $\lambda$ is a tuning parameter and $R(F_{D}(\bold{x}; \bold{\Psi}))$ is a regularization function measuring the roughness of $F_{D}(\bold{x}; \bold{\Psi})$.
The loss function can be squared loss or absolute loss for regression problems, and exponential loss for classification problems.

As for the regularization function $R(\cdot)$, we propose using the total discrete curvature as a measure of roughness.
Suppose that a vertex ${\bf v}$ is inside the convex hull $\mathcal{H}(\mathcal{X})$, and let $\mathcal{N}({\bf v})$ denote the set of simplices that contain ${\bf v}$.
As a linear interpolation function, the DTL is strictly linear in each triangulated region. Through the function $F_D(\bold{x}; \bold{\Psi})$, each simplex $\mathcal{S}\in \mathcal{N}({\bf v})$ has a corresponding $p$-dimensional simplex $\mathcal{S}^F$ as its functional image. More specifically,
$\mathcal{S}^F= \{(\bold{x}, F_D(\mathcal{\bold{x}}, \bold{\Psi}))|\bold{x}\in\mathcal{S}\}$
is a $p$-dimensional simplex for any $\mathcal{S}$. Furthermore, we define a $(p+1)$-dimensional vector $\bold{n}(\mathcal{S})$ as the standard up-norm vector of $\mathcal{S}^F$, which satisfies
\begin{equation}\label{eqn:up_norm}
\begin{cases}
&\langle \bold{n}(\mathcal{S}), (\bold{G}^{\dT}, \bold{G}^{\dT}\bold{G})^{\dT}\rangle = 0,\\
&\langle \bold{n}(\mathcal{S})[1:p], \bold{G}^{\dT}\rangle = 0,\\
&\|\bold{n}(\mathcal{S})\|=1,\\
&\langle \bold{n}(\mathcal{S}), {\bf e}_{p+1}\rangle \geq 0,
\end{cases}
\end{equation}
where ${\bf e}_{p+1}=(\bold{0}_p, 1)^{\dT}$, $\bold{n}(\mathcal{S})[1:p]$ is the vector composed of the first $p$-dimensional elements of $\bold{n}(\mathcal{S})$, and $\bold{G}=\bold{\Delta}_{\bold{x}_i}^{-1}\Delta_{\psi_i}$ is a $p$-dimensional gradient vector of $\mathcal{S}^F$,
where
\begin{eqnarray*}
\bold{\Delta}_{\bold{x}_i}&=& \left(\bold{x}_i - \bold{x}_1,\ldots, \bold{x}_i - \bold{x}_{i-1}, \bold{x}_i - \bold{x}_{i+1}, \ldots, \bold{x}_i - \bold{x}_{p+1}\right)^{\dT},\\
\Delta_{\psi_i}&=& \left(\psi_i - \psi_1,\ldots, \psi_i - \psi_{i-1}, \psi_i - \psi_{i+1},\ldots, \psi_i - \psi_{p+1}\right)^{\dT}.
\end{eqnarray*}
Here, $\bold{x}_1, \ldots, \bold{x}_{p+1}$ are the vertices of the same simplex and $\psi_1, \ldots, \psi_{p+1}$ are the corresponding location parameters of the DTL.
The first equation in (\ref{eqn:up_norm}) ensures that the vector $\bold{n}(\mathcal{S})$ is orthogonal to the simplex $\mathcal{S}^F$. The second implies that the vector is normalized, and the third requires the vector to be in the upward direction.

\begin{figure}[t]
	\center
    \includegraphics[scale=0.35]{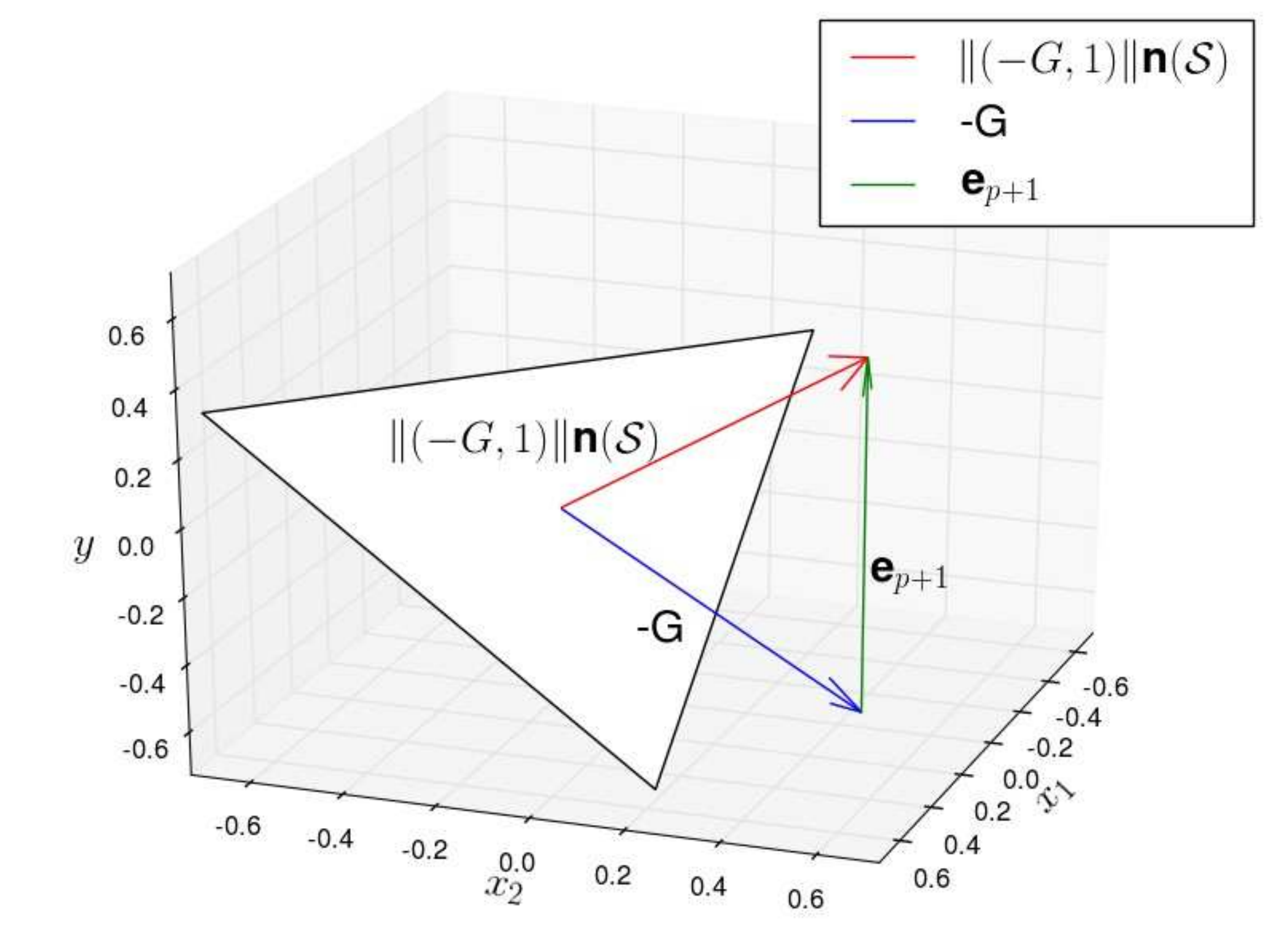}
    \caption{The up-norm vector $\bold{n}(\mathcal{S})$, the negative gradient vector $-\bold{G}$, and the vector of axis $\bold{e}_{p+1}$ in (\ref{eqn:up_norm}).}
    \label{fig:interpret_G}
\end{figure}
\begin{figure}[htbp]
	\center
    \includegraphics[scale=0.5]{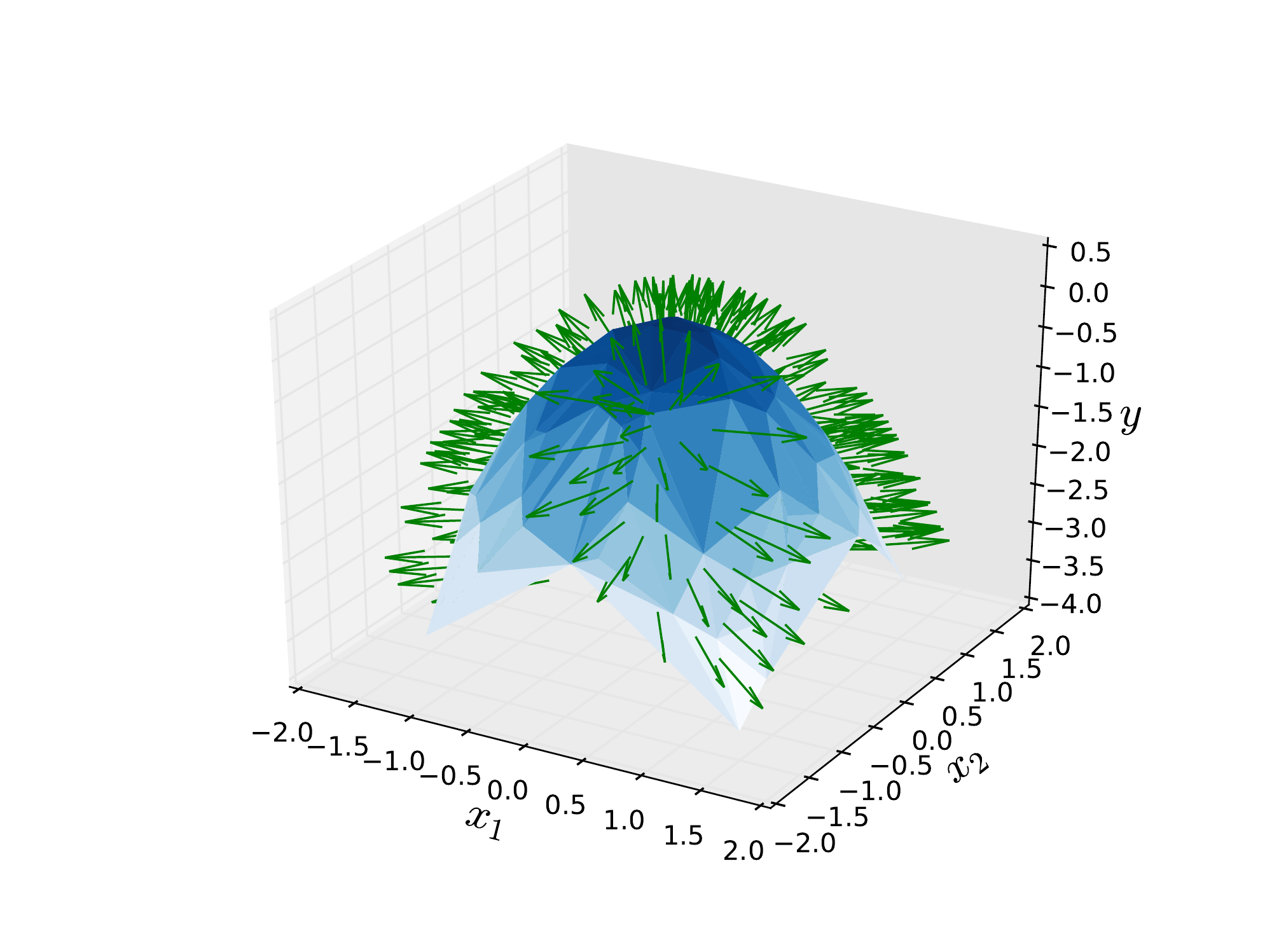}
    \caption{The up-norm vectors of a triangulated surface, where the outward arrows represent the up-norm vectors for each simplex.}
    \label{fig:Normvectors}
\end{figure}
As the up-norm vector shares the same projection as the gradient vector on the feature space, it takes the form of
$\bold{n}(\mathcal{S})=\frac{(-\bold{G}^{\dT}, c)^{\dT}}{\|(-\bold{G}^{\dT}, c)^{\dT}\|}$,
where $c$ is an unknown constant.
By solving the equations in (\ref{eqn:up_norm}), we obtain $c=1$, and thus the standard up-norm vector $\bold{n}(\mathcal{S})$ can be written as
\begin{eqnarray}\label{up-norm}
\bold{n}(\mathcal{S})=\frac{(-\bold{G}^{\dT}, 1)^{\dT}}{\|(-\bold{G}^{\dT}, 1)^{\dT}\|}.
\end{eqnarray}
Figure \ref{fig:interpret_G} exhibits the geometric view of $\bold{n}(\mathcal{S})$, and Figure \ref{fig:Normvectors} shows a plot of the standard up-norm vectors on a discrete surface.

Let $|\mathcal{N}({\bf v})|$ denote the number of elements in $\mathcal{N}({\bf v})$, and define the total discrete curvature as
\begin{eqnarray*}
K({\bf v}) = {
{|\mathcal{N}({\bf v})| \choose 2}}^{-1}{\sum_{\substack{ k < j \\ \mathcal{S}_k, \mathcal{S}_j \in \mathcal{N}({\bf v})}} \measuredangle \left(\bold{n}(\mathcal{S}_k), \bold{n}(\mathcal{S}_j)\right)},
\end{eqnarray*}
where $\measuredangle \left( \bold{n}(\mathcal{S}_k), \bold{n}(\mathcal{S}_j) \right)$ represents the degree of the angle between $\bold{n}(\mathcal{S}_k)$ and $\bold{n}(\mathcal{S}_j)$, and $|\mathcal{N}({\bf v})| \choose 2$ is the number of all possible combinations for $\left(\bold{n}(\mathcal{S}_k), \bold{n}(\mathcal{S}_j)\right)$.
The total discrete curvature is calculated by measuring the angle between each pair of the up-norm vectors and averaging the total degree of the angles grouped by each vertex.
A special case is that all the simplices of the DTL collectively form a hyperplane in the $(p+1)$-dimension, where all the standard up-norm vectors are identical and the total discrete curvature equals zero.
As the cosine of an angle formed by two unit vectors is equal to the inner product of the vectors, it is equivalent to defining the regularization function as
\begin{eqnarray}\label{eqn:regularization}
 R\left(F_{D}(\bold{x};\bold{\Psi})\right)
&=& \sum_{i=1}^n \left\{1 - {|\mathcal{N}({\bf v})| \choose 2}^{-1}\sum_{\substack{ k < j \\ \mathcal{S}_k, \mathcal{S}_j \in \mathcal{N}(\mathcal{\bold{x}}_i)}} \big\langle \bold{n}(\mathcal{S}_k), \bold{n}(\mathcal{S}_j)  \big\rangle
  \right\}.
\end{eqnarray}

Both the loss function and the regularization function are differentiable with respect to $\bold{\Psi}$, thus all the gradient-based optimizers can be applied to solve the optimization problem of the DTL.
The gradient of the objective function
$L\left(y, F_{D}(\bold{x}; \bold{\Psi})\right) + \lambda R\left(F_{D}(\bold{x}; \bold{\Psi})\right)$
is given by
$\nabla = \sum_{i=1}^n \frac{\partial L}{\partial \psi_i} +
\lambda \sum_{i=1}^n \frac{\partial  R}{\partial \psi_i}.$
Algorithm \ref{alg:Adam} displays the Adaptive Moment Estimation (Adam) algorithm \citep{Kingma2015} that iteratively updates the vector $\bold{\Psi}$, where $\beta_1=0.9$, $\beta_2=0.999$ and $\varepsilon=10^{-8}$.
\begin{algorithm}[h]
  \caption{Adam Algorithm for the DTL Optimization
    \label{alg:Adam}}
  \begin{algorithmic}[1]
    \Statex
      \Let{$t$}{$0$} \Let{$\bold{\Psi}^{(t)}$}{$\mathcal{Y}$, }\Let{$\bold{\Psi}^{(t+1)}$}{$\mathcal{Y}+\eta\bold{1}$}, where $\bold{1}=(1, \ldots, 1)^\dT,$
      \While{$\|\bold{\Psi}^{(t+1)}-\bold{\Psi}^{(t)}\|>\eta$ ($\eta$ is the convergence criterion)},
      \Let{$t$}{$t+1$}
        \Let{$\nabla_t$} { $\frac{\partial L}{\partial \bold{\Psi}}\big|_{\bold{\Psi}^{(t)}} +
\lambda \frac{\partial  R}{\partial \bold{\Psi}} \big|_{\bold{\Psi}^{(t)}}$}
        \Let {$m_t$}{$\beta_1^t m_{t-1} + (1-\beta_1^t)\nabla_t$}
        \Let {$v_t$}{$\beta_2^t v_{t-1} + (1-\beta_2^t)\nabla_t$}
        \Let {$\hat{m}_t$}{$\frac{m_t}{1-\beta_1^t}$}
        \Let {$\hat{v}_t$}{$\frac{v_t}{1-\beta_2^t}$}

		\Let{$\bold{\Psi}^{t+1}$}{$\bold{\Psi}^{t} - \frac{\varepsilon}{\sqrt{\hat{v}_t}+\varepsilon}\hat{m}_t \nabla_t$}, $\varepsilon$ is a small step size.

      \EndWhile
      \State \Return{$\bold{\Psi}^{(t+1)}$}
  \end{algorithmic}
\end{algorithm}

\subsection{DTL Advantages}
Figure \ref{fig:Delaunay_Triangulation_learner} shows a comparison between the DTL, decision tree, and multivariate adaptive regression splines (MARS), fitted on the same data generated from a saddle surface model, $y=\arctan(x_1+x_2)+ \epsilon$, where the noise $\epsilon$ follows the normal distribution $N(0, 0.01)$.
In comparison with the decision tree and MARS, the DTL appears to be smoother and more flexible in shape.
The tree-based method tends to be rough when estimating smooth functionals, as
it is not easy to smooth out by fitting more complicated models in the terminal nodes.
The MARS show better adaptivity in handling smooth nonlinear functions than decision trees.
The DTL provides an intrinsic way to partition the feature space with the Delaunay triangulation algorithm, and it has the advantage of balancing the
roughness and smoothness depending on the local availability of data and that can lead to better predictive accuracy.
Figure \ref{fig:Shrinkage_Behaviour} shows the behavior of the regularized DTL with different values of $\lambda$ when fitting a two-dimensional linear model under a squared loss function.

\begin{figure}[htbp]
    \subfloat[Original data]{%
    \includegraphics[width=0.5\linewidth]{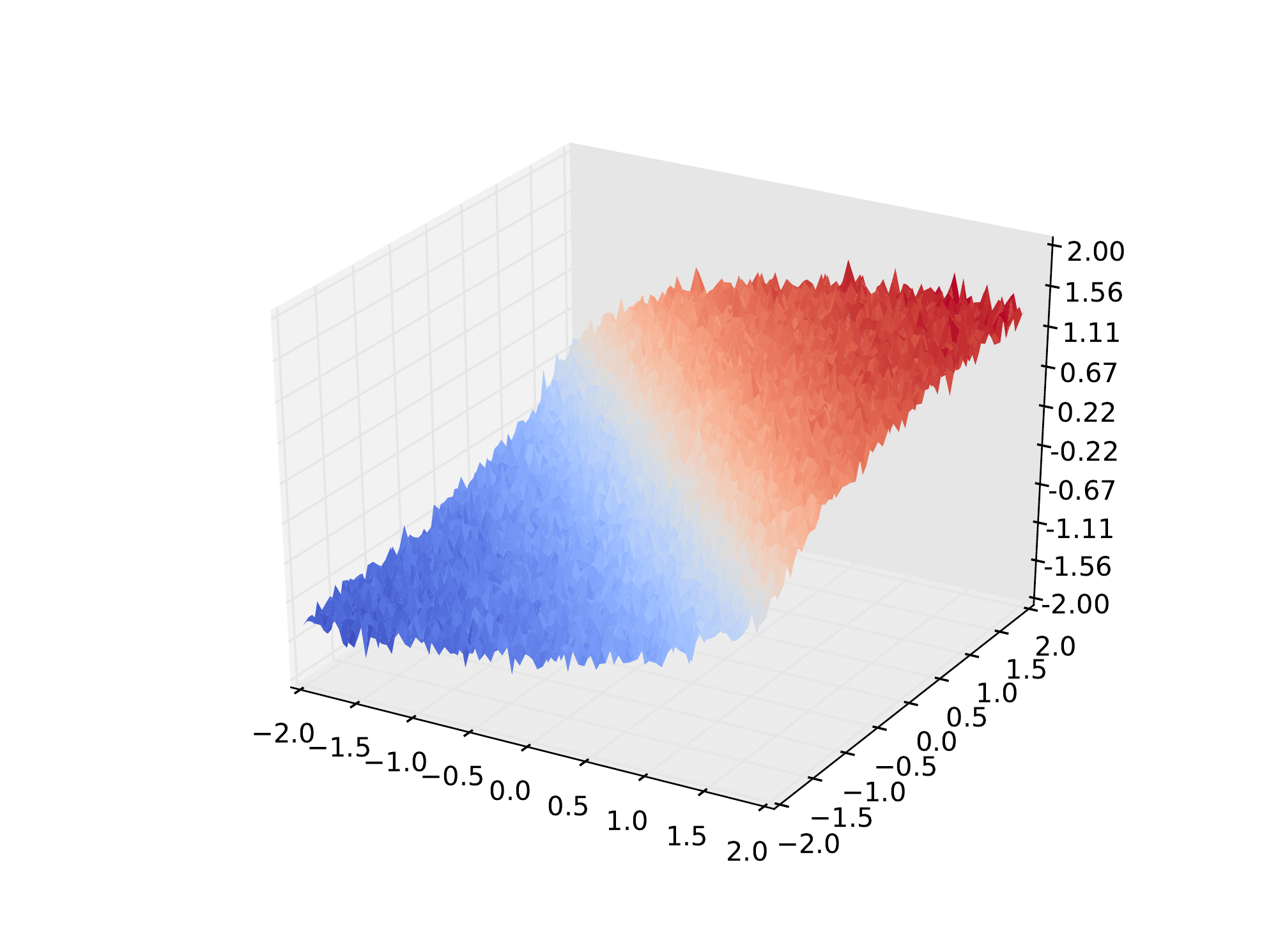}
    }
    \hfill
    \subfloat[Delaunay triangulation learner]{%
    \includegraphics[width=0.5\linewidth]{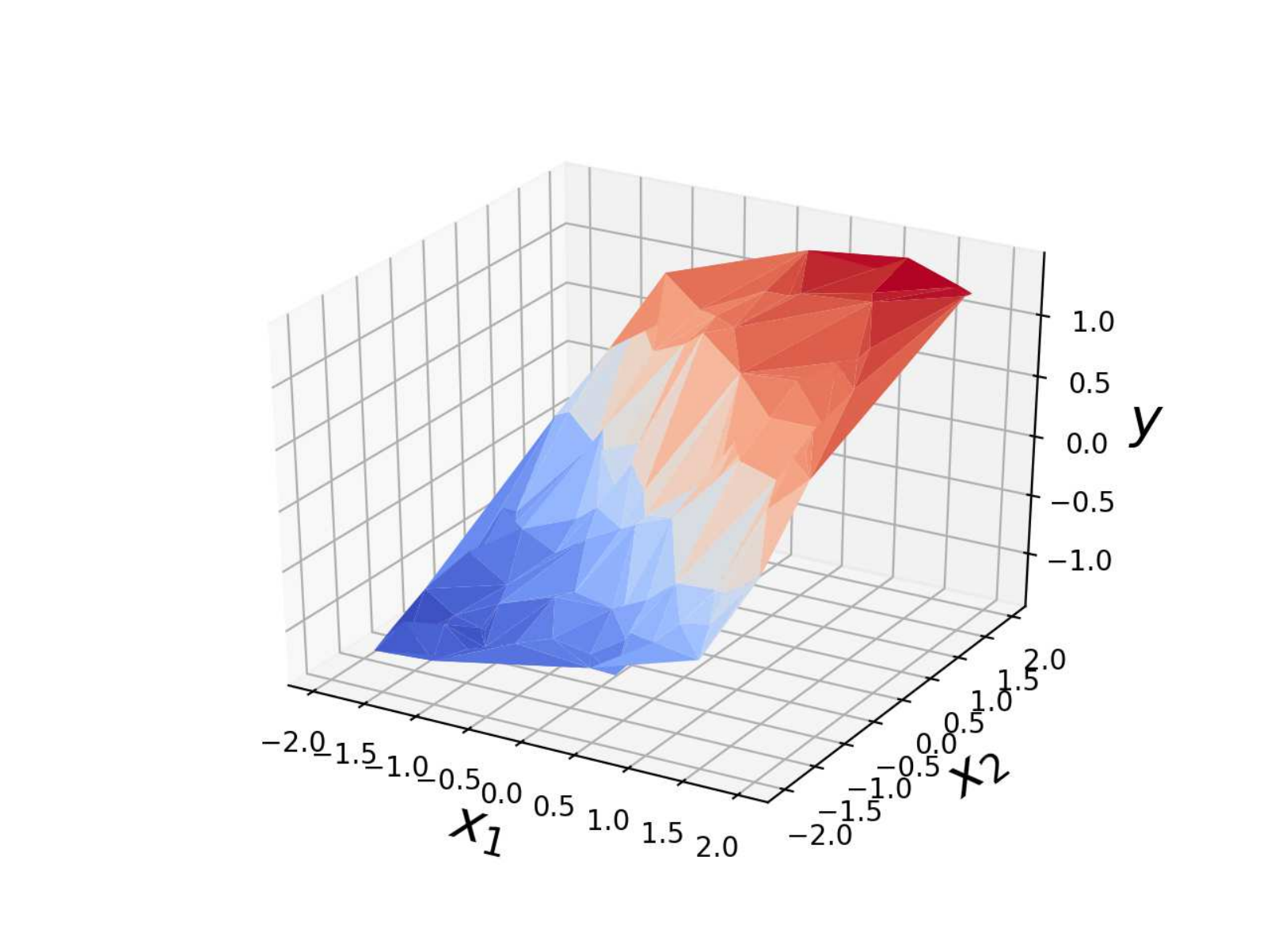}
    }
    \hfill
    \subfloat[Tree-based learner]{%
    \includegraphics[width=0.5\linewidth]{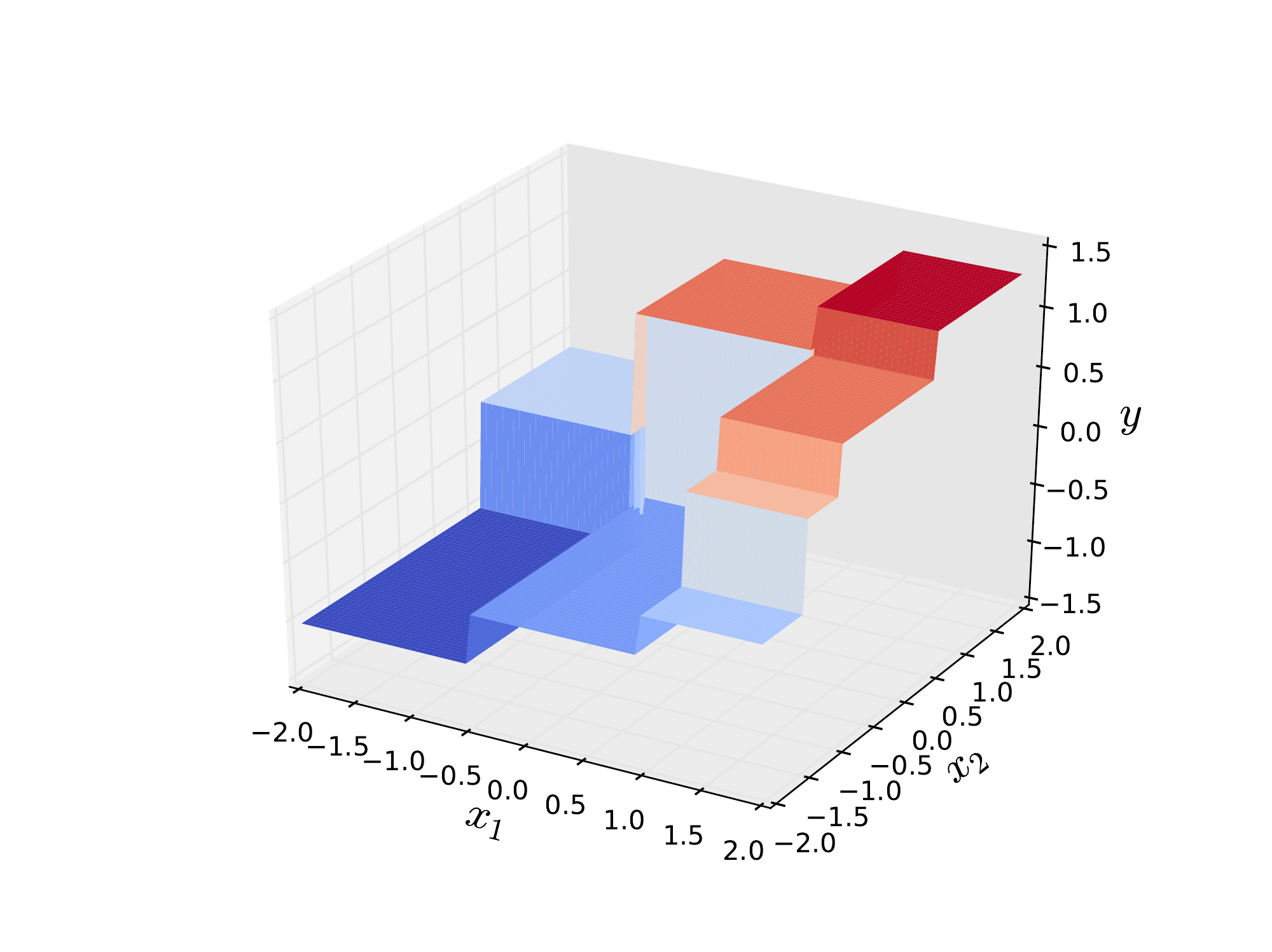}
    }
    \hfill
    \subfloat[Multivariate adaptive regression splines]{%
    \includegraphics[width=0.5\linewidth]{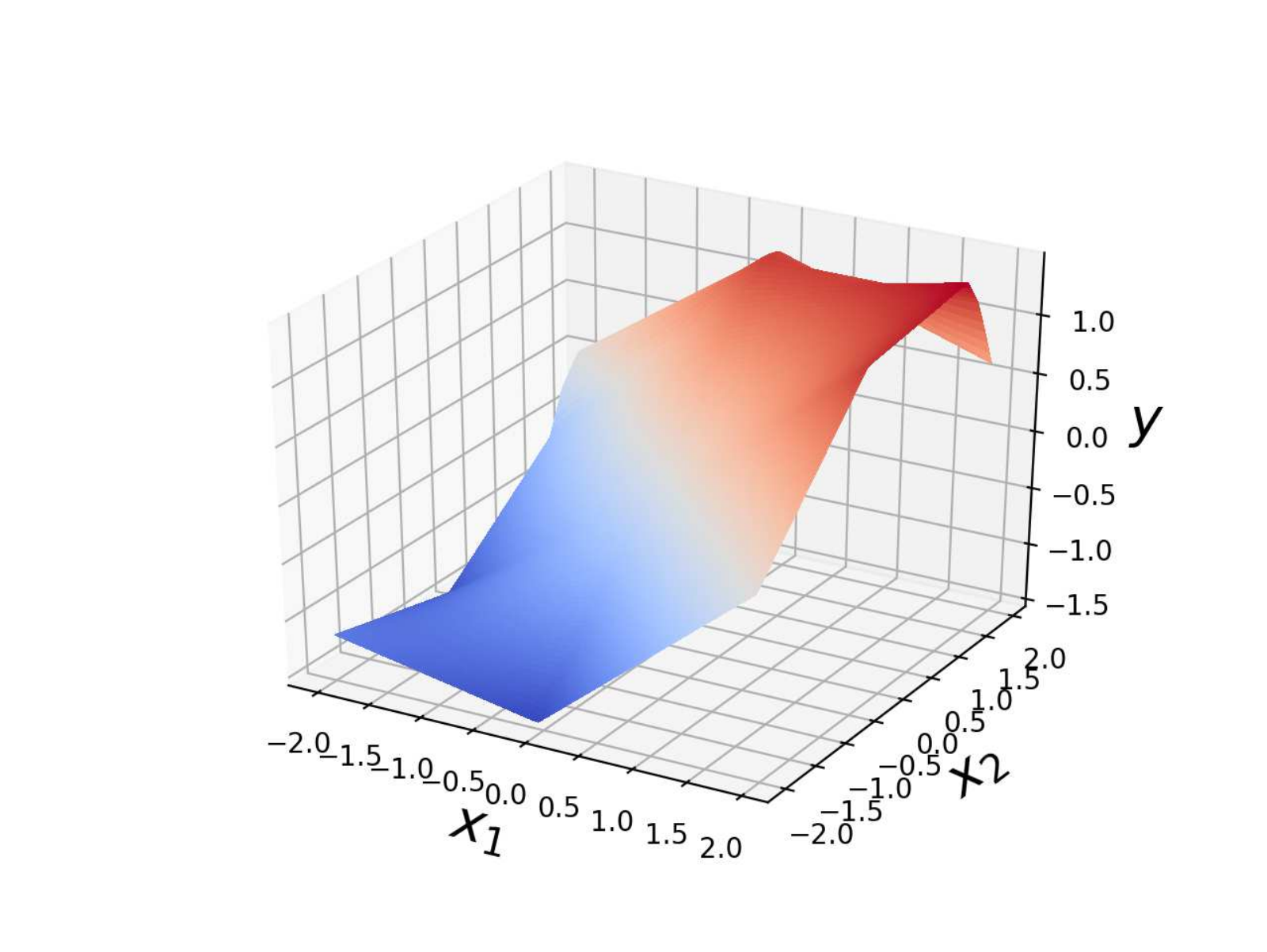}
    }
    \hfill
    \caption{(a) Samples generated from an arc-tangent model, (b) the Delaunay triangulation learner, (c) the tree-based learner, and (d) the Multivariate adaptive regression spline. }
    \label{fig:Delaunay_Triangulation_learner}
\end{figure}

\section{Theoretical Properties}\label{Geometrical and Statistical Properties}
We first study the geometrical optimality and asymptotic properties of the Delaunay triangulation mesh under general random distributions, and then these properties are used as cornerstones to find the statistical properties of the DTL.
\begin{figure}[htbp!]
    \subfloat[$\lambda=0$]{%
    \includegraphics[width=0.5\linewidth]{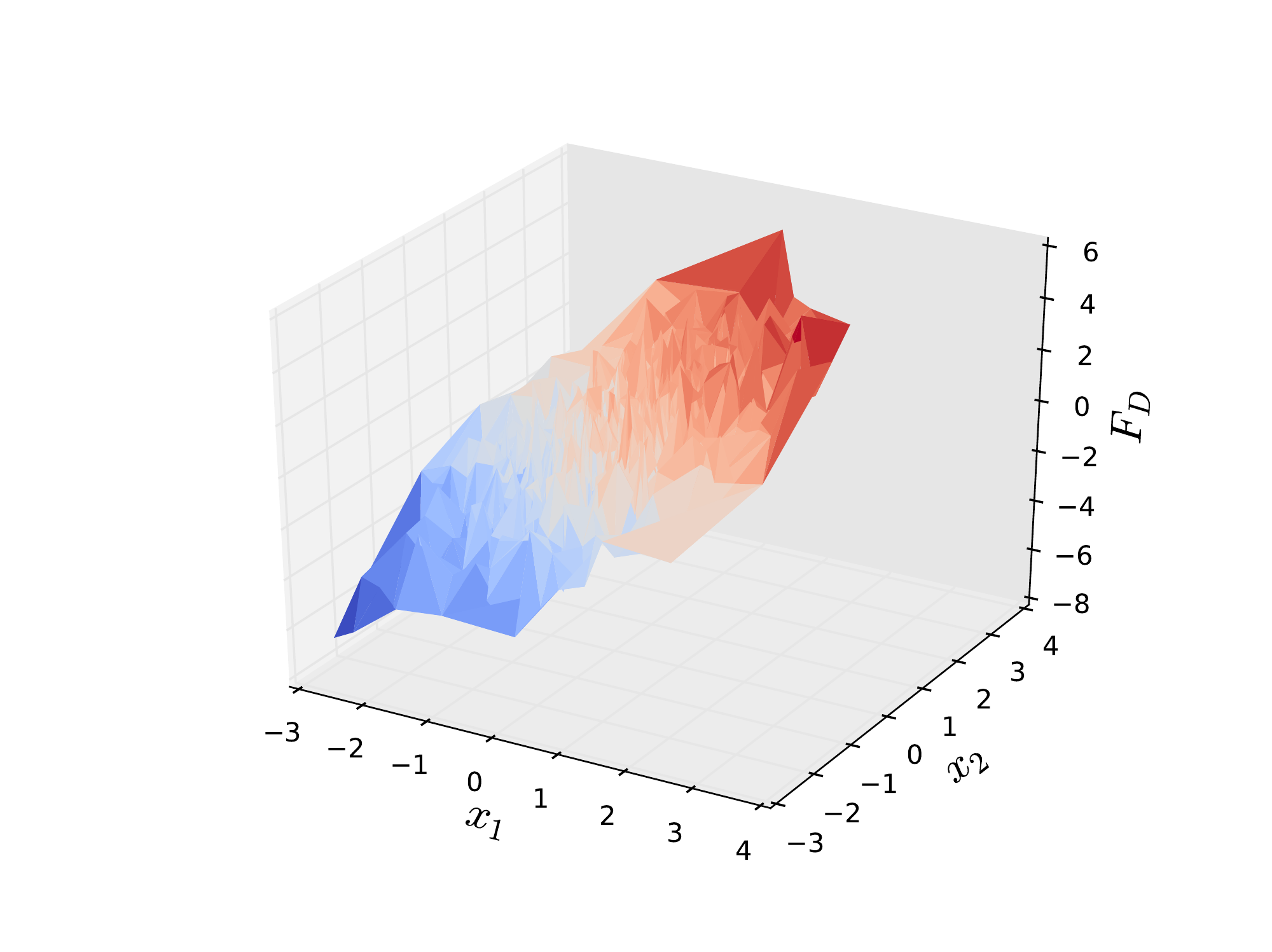}
    }
    \hfill
    \subfloat[$\lambda=1$]{%
    \includegraphics[width=0.5\linewidth]{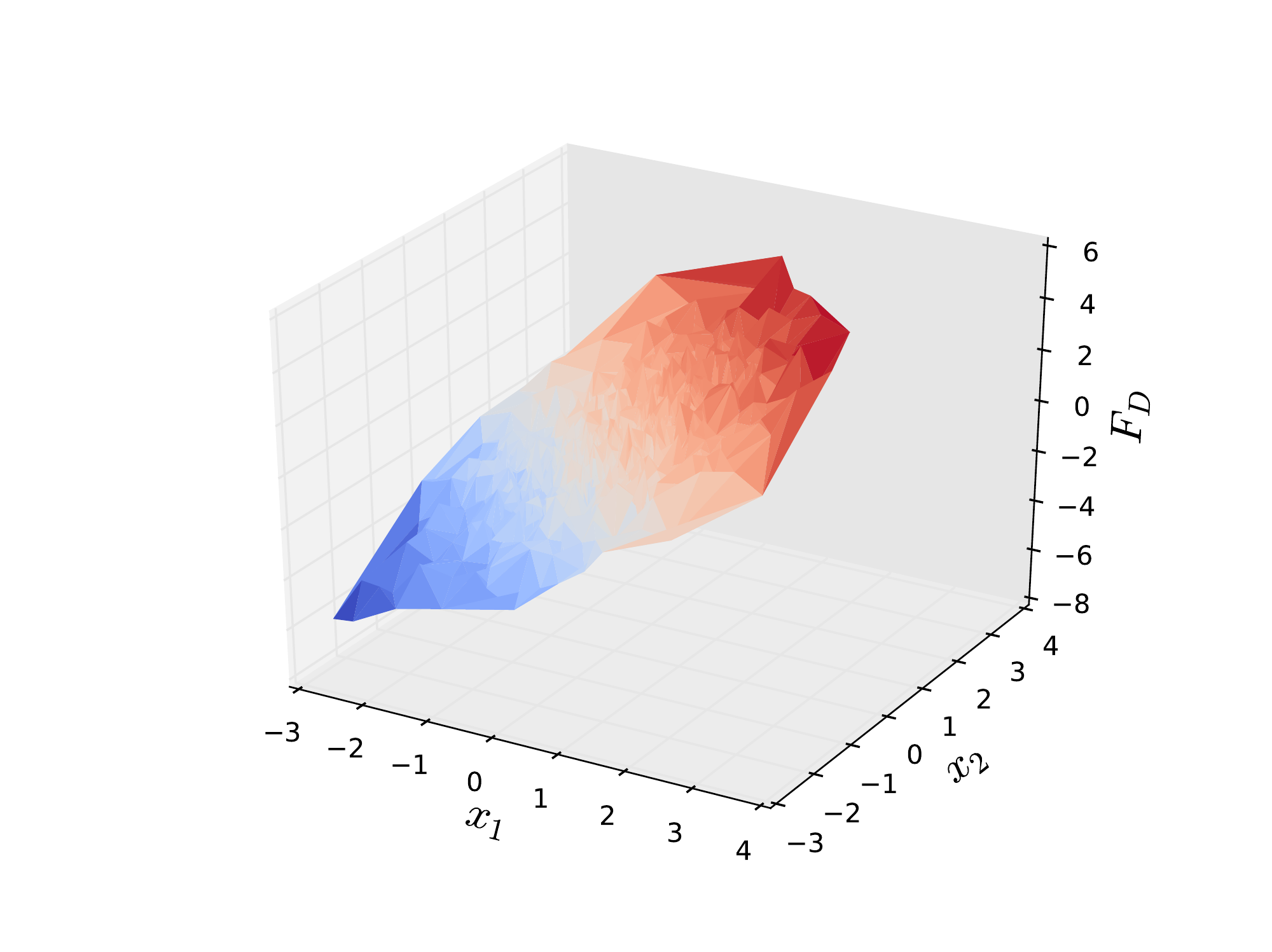}
    }
    \hfill
    \subfloat[$\lambda=2$]{%
    \includegraphics[width=0.5\linewidth]{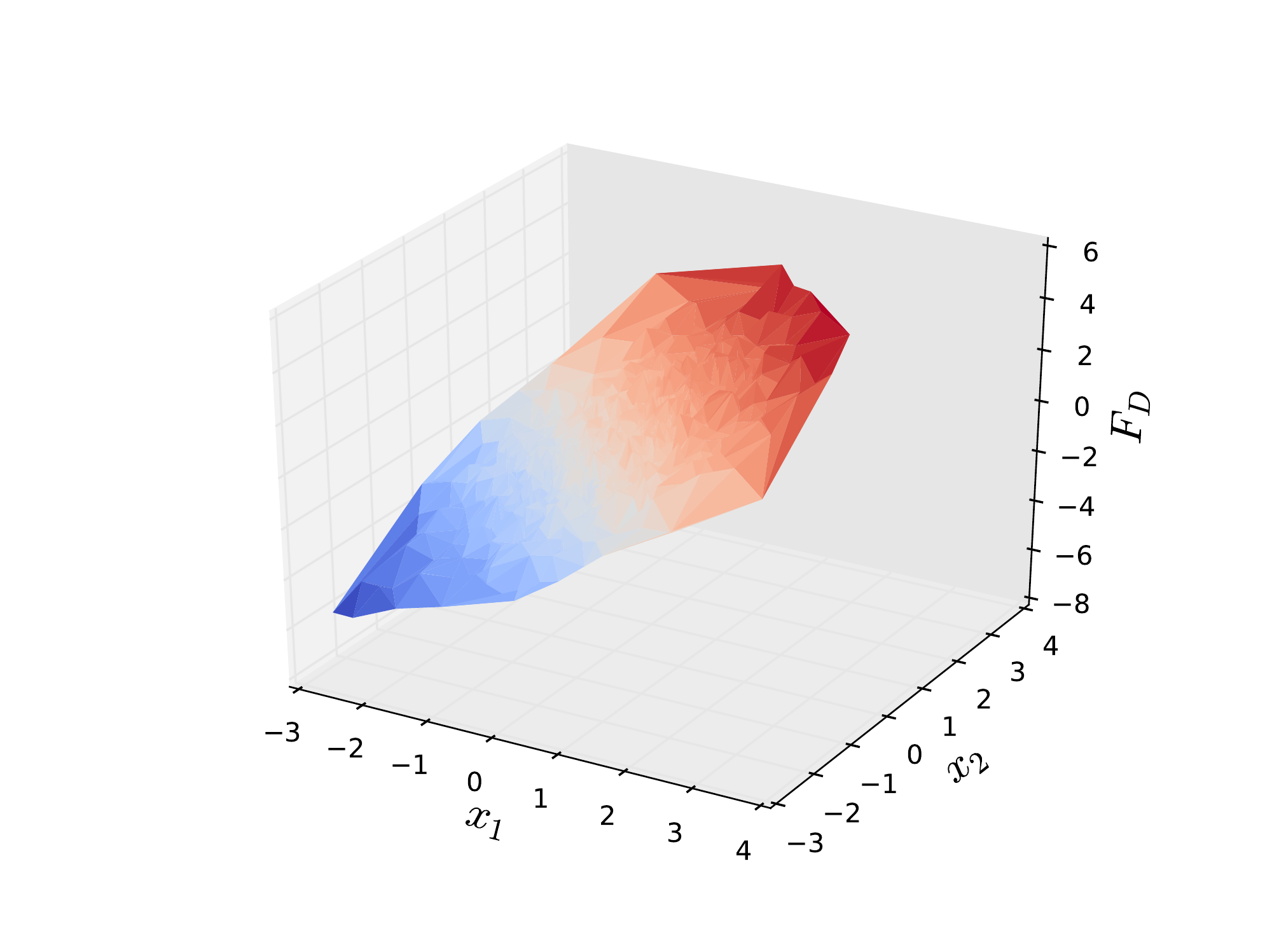}
    }
    \hfill
    \subfloat[$\lambda=10$]{%
    \includegraphics[width=0.5\linewidth]{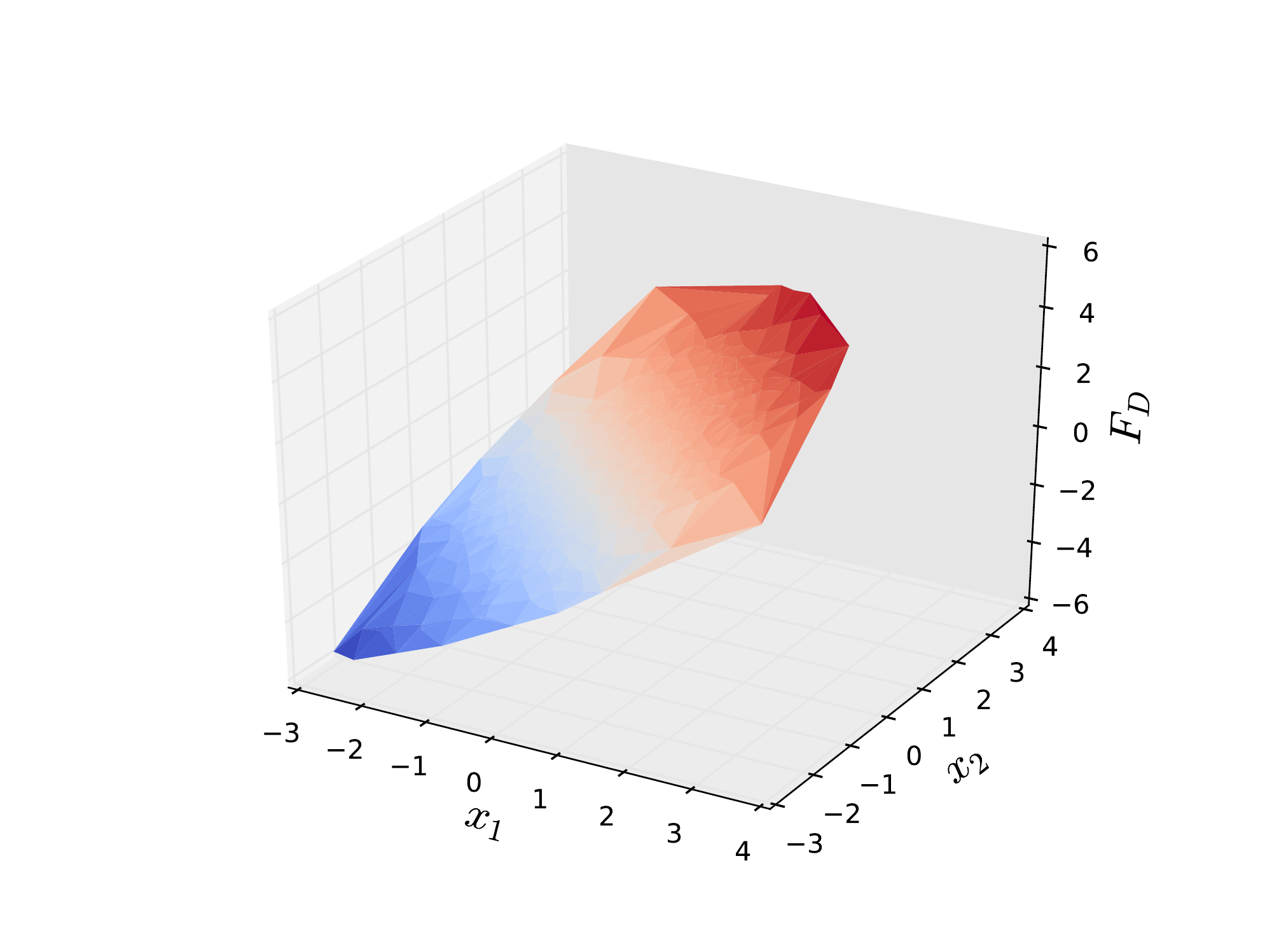}
    }
    \hfill
    \caption{Comparison of the smoothness of the regularized DTL using different values of $\lambda$, with the data from the linear model $y=X_1 + X_2 + \epsilon$. }
    \label{fig:Shrinkage_Behaviour}
\end{figure}

\subsection{Geometrical Optimality Properties}
We define a geometric loss function to analyze the geometric optimality of the Delaunay triangulation with random points.

\begin{definition}
Let $\mathcal{X}=\{\bold{X}_1, \ldots, \bold{X}_n\}$ $(n>p+1)$ be $n$ points in $R^p$ in general position with a convex hull $\mathcal{H}(\mathcal{X})$, and $\mathcal{T}$ be any triangulation of $\mathcal{X}$. For any point $\bold{X}\in R^p$, if $\bold{X}\in \mathcal{H}(\mathcal{X})$, the geometric loss function of the points is defined as
$$H(\bold{X}, \mathcal{X}, \mathcal{T})=\sum_{i=1}^{p+1}\lambda_i\|\bold{X}_{(i)}-\bold{X}\|^2, \text{ s.t.\ $\sum_{i=1}^{p+1} \lambda_i=1$, $\sum_{i=1}^{p+1} \lambda_i\bold{X}_{(i)}=\bold{X},$}$$
where $\bold{X}_{(i)}, i=1, \ldots, p+1$ are the vertices of the triangle in $\mathcal{T}$ that covers $\bold{X}$. Otherwise, if $\bold{X}\notin \mathcal{H}(\mathcal{X})$, we define $H(\bold{X}, \mathcal{X}, \mathcal{T})=0.$
\end{definition}

\begin{theorem}\label{geometric_optimal}
Let $\mathcal{X}=\{\bold{X}_1, \ldots, \bold{X}_n\}$ be i.i.d samples from a continuous density $f$, which is bounded away from zero and infinity on $[0,1]^p$.  Then, \begin{eqnarray*}
\mathbb{E}[H(\bold{X}, \mathcal{X}, \mathcal{D})]= \inf_{\mathcal{T}} \mathbb{E}[H(\bold{X}, \mathcal{X}, \mathcal{T})],
\end{eqnarray*}
where $\mathcal{D}$ is the Delaunay triangulation of $\mathcal{X}$.
\end{theorem}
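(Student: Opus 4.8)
The plan is to reduce the statement to a purely deterministic, \emph{pointwise} geometric domination that holds for every fixed configuration $\mathcal{X}$ in general position and every test point $\bold{X}$; once that is in hand, the expectation identity follows with no probabilistic effort. The density assumption will enter only at the very end, and only technically.

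First I would rewrite the loss in a more tractable form. Fix $\bold{X}\in\mathcal{H}(\mathcal{X})$ lying in a simplex of $\mathcal{T}$ with vertices $\bold{X}_{(1)},\dots,\bold{X}_{(p+1)}$ and barycentric weights $\lambda_1,\dots,\lambda_{p+1}$ (nonnegative, summing to one, with $\sum_i\lambda_i\bold{X}_{(i)}=\bold{X}$). Expanding $\|\bold{X}_{(i)}-\bold{X}\|^2$ and using the two barycentric constraints collapses all cross terms and yields the identity
\begin{equation*}
H(\bold{X},\mathcal{X},\mathcal{T})=\sum_{i=1}^{p+1}\lambda_i\|\bold{X}_{(i)}\|^2-\|\bold{X}\|^2 .
\end{equation*}
The subtracted term $\|\bold{X}\|^2$ is independent of the triangulation, so minimizing $H$ over $\mathcal{T}$ is equivalent to minimizing $\sum_i\lambda_i\|\bold{X}_{(i)}\|^2$, which is exactly the height at $\bold{X}$ of the piecewise-linear surface obtained by lifting each sample $\bold{X}_j$ to $(\bold{X}_j,\|\bold{X}_j\|^2)\in R^{p+1}$ and interpolating linearly over the simplices of $\mathcal{T}$.

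Next I would invoke the classical parabolic-lifting characterization of the Delaunay triangulation: the lifted Delaunay surface coincides with the \emph{lower convex hull} of $\{(\bold{X}_j,\|\bold{X}_j\|^2)\}_{j=1}^n$. Writing $g_{\mathcal{T}}(\bold{X})=\sum_i\lambda_i\|\bold{X}_{(i)}\|^2$ for the interpolated height under $\mathcal{T}$, the value of the lower hull at $\bold{X}$ equals the minimum of $\sum_j\mu_j\|\bold{X}_j\|^2$ over \emph{all} convex combinations $\mu$ with $\sum_j\mu_j\bold{X}_j=\bold{X}$ and $\sum_j\mu_j=1$. The barycentric weights induced by any triangulation $\mathcal{T}$ form one particular such feasible combination, so $g_{\mathcal{D}}(\bold{X})\le g_{\mathcal{T}}(\bold{X})$, and therefore $H(\bold{X},\mathcal{X},\mathcal{D})\le H(\bold{X},\mathcal{X},\mathcal{T})$ for every $\bold{X}\in\mathcal{H}(\mathcal{X})$; for $\bold{X}\notin\mathcal{H}(\mathcal{X})$ both sides are zero by definition, so the inequality holds everywhere.

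Finally, since this inequality holds surely and for every admissible triangulation, integrating over the test point $\bold{X}$ (and over the random sample $\mathcal{X}$) preserves it, giving $\mathbb{E}[H(\bold{X},\mathcal{X},\mathcal{D})]\le\mathbb{E}[H(\bold{X},\mathcal{X},\mathcal{T})]$ for all $\mathcal{T}$; because $\mathcal{D}$ is itself a triangulation of $\mathcal{X}$, the infimum on the right is attained by $\mathcal{D}$, which is the claimed equality. The density hypothesis is used only to legitimize this last passage: absolute continuity of $f$ on $[0,1]^p$ forces $\mathcal{X}$ to be almost surely in general position, so $\mathcal{D}$ is well-defined and unique, while the compact support bounds $H$ and makes all expectations finite (the ``bounded away from zero'' part is not actually needed here). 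The main obstacle is therefore not probabilistic but geometric: one must establish the lifting identity carefully, in particular justifying that the lower convex hull simultaneously \emph{interpolates} all lifted samples (so that it is a genuine triangulation surface agreeing with $\mathcal{D}$) and \emph{realizes the pointwise minimum} over all convex combinations (so that it dominates every competing triangulation). Everything else is a routine expansion and a monotonicity-of-expectation argument.
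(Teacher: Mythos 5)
Your proposal is correct, and its overall skeleton is the same as the paper's: reduce the claim to a deterministic, pointwise domination $H(\bold{X},\mathcal{X},\mathcal{D})\leq H(\bold{X},\mathcal{X},\mathcal{T})$ valid for every configuration in general position and every competing triangulation, then pass to expectations by monotonicity, with the infimum attained because $\mathcal{D}$ is itself an admissible triangulation. Where you genuinely diverge is in how that pointwise inequality is established. The paper treats it as a black box: it defines the simplex-wise loss $\tilde{H}$ and invokes Theorem 1 of Rajan (1991), which states that the Delaunay simplex minimizes $\tilde{H}$ among all simplices with vertices in $\mathcal{X}$ containing $\bold{X}$. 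You instead re-derive this fact from first principles: the expansion $\sum_i\lambda_i\|\bold{X}_{(i)}-\bold{X}\|^2=\sum_i\lambda_i\|\bold{X}_{(i)}\|^2-\|\bold{X}\|^2$ converts the loss into the height of the lifted interpolant above the paraboloid, and the classical parabolic-lifting characterization (the lifted Delaunay surface is the lower convex hull of the lifted samples) gives the domination, since the lower hull minimizes $\sum_j\mu_j\|\bold{X}_j\|^2$ over \emph{all} convex combinations representing $\bold{X}$, of which any triangulation's barycentric weights are a special case. In effect you have inlined the proof of the cited Rajan theorem; the remaining burden you correctly identify (that the lower hull interpolates every lifted sample, which follows because each lifted point lies on a strictly convex paraboloid and is therefore a vertex of the lower hull) is exactly the content of that citation. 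Your route buys self-containedness, a slightly stronger conclusion (domination against arbitrary convex combinations, not just triangulations), and the accurate observation that the distributional hypotheses serve only to guarantee almost-sure general position and finite expectations, with the ``bounded away from zero'' condition never used; the paper's route buys brevity at the cost of outsourcing the entire geometric content.
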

\begin{proof}
For any point $\bold{X}\in R^p$ and a simplex with vertices $\mathcal{X}_{p+1}=\{\bold{X}_1, \ldots, \bold{X}_{p+1}\},$ if $\bold{X}\in \mathcal{H}(\mathcal{X}_{p+1})$, define the generalized geometric loss function as $$\tilde{H}(\bold{X}, \mathcal{X}_{p+1})=\sum_{i=1}^{p+1}\lambda_i\|\bold{X}_{i}-\bold{X}\|^2, \text{ s.t.\ $\sum_{i=1}^{p+1} \lambda_i=1$, $\sum_{i=1}^{p+1} \lambda_i\bold{X}_{i}=\bold{X}.$}$$ Otherwise, if $\bold{X}\notin \mathcal{H}(\mathcal{X}_{p+1}),$
$\tilde{H}(\bold{X}, \mathcal{X}_{p+1})=0.$
By Theorem 1 in \cite{Rajan1991}, among all the triangles with vertices in $\mathcal{X}$ that contain the point
$\bold{X}$, the Delaunay triangle minimizes the geometric loss function $\tilde{H}(\bold{X}, \mathcal{X}_{p+1})$ of the triangle at the
point. Thus, for any triangulation $\mathcal{T}$, we have
\begin{eqnarray*}
H(\bold{X}, \mathcal{X}, \mathcal{D})= \inf_{\mathcal{X}_{p+1}\subset  \mathcal{X}}\tilde{H}(\bold{X}, \mathcal{X}_{p+1})\leq H(\bold{X}, \mathcal{X}, \mathcal{T})
\end{eqnarray*}
Thus,
\begin{eqnarray*}
\mathbb{E}[H(\bold{X}, \mathcal{X}, \mathcal{D})]= \inf_{\mathcal{T}} \mathbb{E}[H(\bold{X}, \mathcal{X}, \mathcal{T})].
\end{eqnarray*}

\end{proof}

\subsection{Geometrical Asymptotic Properties}\label{geometricasymp}

Since the DTL makes prediction for a point by fitting a linear interpolation function locally on the Delaunay triangle that covers the point, the asymptotic behavior of the covering triangle is crucial to the asymptotics of DTL in both regression and classification problems.
We show the convergence rate of the size of the covering triangle in terms of its average edge length.

\begin{lemma}\label{lemma:1}
Let $\bold{X}, \bold{X}_1, \ldots, \bold{X}_n$ take values on $[0, 1]^p$. Let $\bold{X}_{(1)}$ denote the nearest neighbour of $\bold{X}$ among $\bold{X}_1, \ldots, \bold{X}_n$.
Then, for $p\geq 2$,
$$\mathbb{E}\|\bold{X}_{(1)}-\bold{X}\|^2 \leq c_p\left(\frac{1}{n+1}\right)^{2/p},$$
where $c_p=\frac{4(1+\sqrt{p})^2}{V_p^{2/p}}$ and $V_p$ is the volume of a unit $p$-ball; and
for $p=1$, it reduces to
$$\mathbb{E}\|\bold{X}_{(1)}-\bold{X}\|^2 \leq \frac{2}{n+1}.$$
\end{lemma}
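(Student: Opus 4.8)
The plan is to avoid estimating the density of the points altogether and instead to extract a \emph{deterministic} packing bound, because the target constant $c_p = 4(1+\sqrt{p})^2/V_p^{2/p}$ --- a ball-volume quantity raised to the power $2/p$ --- is the unmistakable signature of a volume argument applied to the $p$-th powers of the nearest-neighbour distances and then converted to a second moment by Jensen's inequality. Concretely, I would regard $\bold{X}, \bold{X}_1, \ldots, \bold{X}_n$ as $n+1$ exchangeable points (the paper's standing i.i.d.\ assumption is all that is used, and the marginal law never enters the estimate, which is exactly what renders the bound distribution-free), set $\bold{X}_0 := \bold{X}$, and for each $j\in\{0,1,\ldots,n\}$ let $\rho_j$ denote the distance from $\bold{X}_j$ to its nearest neighbour among the remaining $n$ points. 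In this notation $R := \|\bold{X}_{(1)}-\bold{X}\| = \rho_0$.

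The heart of the proof is a packing estimate valid for \emph{every} configuration. I claim the balls $B(\bold{X}_j,\rho_j/2)$, $j=0,\ldots,n$, are pairwise disjoint: if two of them met, say with $\rho_i\le\rho_j$, then $\|\bold{X}_i-\bold{X}_j\| < \rho_i/2+\rho_j/2 \le \rho_j$, which is impossible since $\rho_j$ is by definition the smallest distance from $\bold{X}_j$ to any other point. Because each $\bold{X}_j\in[0,1]^p$ and every $\rho_j$ is at most the diameter $\sqrt{p}$ of the cube, all these balls lie inside the enlarged cube $[-\sqrt{p}/2,\,1+\sqrt{p}/2]^p$ of volume $(1+\sqrt{p})^p$. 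Comparing the total volume of the disjoint balls with that of the enclosing cube yields the deterministic inequality
\begin{equation*}
\sum_{j=0}^{n} V_p\left(\frac{\rho_j}{2}\right)^p \le (1+\sqrt{p})^p, \qquad\text{hence}\qquad \sum_{j=0}^{n} \rho_j^p \le \frac{2^p(1+\sqrt{p})^p}{V_p}.
\end{equation*}

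It then remains to pass from this pathwise bound to the stated moment bound. By exchangeability $\mathbb{E}[\rho_j^p]$ is the same for every $j$, so $\mathbb{E}[R^p] = \mathbb{E}[\rho_0^p] = (n+1)^{-1}\mathbb{E}\big[\sum_{j=0}^n \rho_j^p\big] \le 2^p(1+\sqrt{p})^p/\{(n+1)V_p\}$. For $p\ge 2$ the map $y\mapsto y^{2/p}$ is concave, so Jensen's inequality gives $\mathbb{E}[R^2] = \mathbb{E}[(R^p)^{2/p}] \le (\mathbb{E}[R^p])^{2/p}$, and substituting the previous bound produces exactly $c_p\,(1/(n+1))^{2/p}$. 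The case $p=1$ has to be argued separately precisely because this Jensen step reverses direction when $2/p>1$; there one instead uses $R\le 1$ to write $R^2\le R$, so that $\mathbb{E}[R^2]\le\mathbb{E}[R]=\mathbb{E}[\rho_0]\le 2/(n+1)$ follows directly from the same disjoint-interval packing with $V_1=2$.

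The main obstacle --- and the only step that genuinely requires care --- is the disjointness claim together with the containment in the enlarged cube, since everything downstream is an automatic consequence of exchangeability and a one-line Jensen argument. I would double-check the boundary constant by verifying that no ball of radius $\rho_j/2\le\sqrt{p}/2$ centred in $[0,1]^p$ can escape $[-\sqrt{p}/2,\,1+\sqrt{p}/2]^p$, as this is what pins down the factor $(1+\sqrt{p})^p$ and hence the exact constant $c_p$.
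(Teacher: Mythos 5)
Your proposal is correct and follows essentially the same route as the paper: the identical packing argument (disjoint balls $B(\bold{X}_j,\rho_j/2)$ inside the enlarged cube $[-\sqrt{p}/2,\,1+\sqrt{p}/2]^p$ giving $\sum_j \rho_j^p \le 2^p(1+\sqrt{p})^p/V_p$), the same exploitation of concavity of $y\mapsto y^{2/p}$, and the same $r^2\le r$ trick for $p=1$. The only cosmetic difference is the order of operations --- you take expectations via exchangeability first and then apply Jensen, whereas the paper applies the power-mean inequality pathwise to the empirical average $\frac{1}{n+1}\sum_i r_i^2$ and averages afterwards --- and your version has the small merit of making explicit the exchangeability step that the paper leaves implicit.
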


\begin{proof}
Let $\bold{X}_{(i, 1)}$ denote the nearest neighbor of $\bold{X}_i$, and let $r_i = \|\bold{X}_{(i,1)}-\bold{X}_i\|.$
Define the $p$-ball $B_i=\left\{\bold{x}\in \mathbb{R}^p: \|\bold{x}-\bold{X}_i\|<r_i/2\right\}.$
Note that the $p$-balls are disjoint, and $r_i\leq \sqrt{p}$, and we have that
$\cup_{i=1}^{n+1}B_i\subseteq [-\frac{\sqrt{p}}{2}, 1+\frac{\sqrt{p}}{2}]^p$.
Thus,
$$\mu \left(\cup_{i=1}^{n+1}B_i \right)\leq (1+\sqrt{p})^p,$$
where $\mu$ is the Lebesgue measure.
Hence,
\begin{eqnarray}\label{eqn:1}
\sum_{i=1}^{n+1}V_p \left(\frac{r_i}{2}\right)^p\leq (1+\sqrt{p})^p.
\end{eqnarray}
For $p\geq 2$, it can be shown that
\begin{eqnarray*}
\left(\frac{1}{n+1}\sum_{i=1}^{n+1}r_i^2\right)^{p/2} &\leq& \frac{1}{n+1}\sum_{i=1}^{n+1}r_i^{p}\\
&\leq&\frac{1}{n+1}\times \frac{2^p(1+\sqrt{p})^p}{V_p},
\end{eqnarray*}
which proves the case of $p\geq 2$.
For $p=1$, it is obvious that
$$\frac{1}{n+1}\sum_{i=1}^{n+1}r_i^2\leq \frac{2}{n+1}.$$
\end{proof}

\begin{lemma}\label{lemma:2}
Let $\bold{X}, \bold{X}_1, \ldots, \bold{X}_n$ be i.i.d samples from a continuous density $f$, which is bounded away from zero and infinity on $[0,1]^p$. Let $H_n$ denote the event that $\bold{X}$ falls inside the convex hull of  $ \bold{X}_1, \ldots, \bold{X}_n$. Then, $\mathbb{P}(H_n)\rightarrow 1$, as $n\rightarrow \infty$.
\end{lemma}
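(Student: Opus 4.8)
The plan is to condition on the test point $\bold{X}=\bold{x}$ and to show that, for every $\bold{x}$ in the \emph{interior} of $[0,1]^p$, the conditional probability that $\bold{x}$ escapes the convex hull of the independent samples tends to $0$; since $\bold{X}$ has a density it lies in the interior almost surely, and a dominated-convergence argument then transfers this conclusion to $\mathbb{P}(H_n)$. Throughout I would use that $\bold{X}$ is independent of $\bold{X}_1,\dots,\bold{X}_n$.

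First, I would fix an interior point $\bold{x}$ and build a geometric ``cage'' around it. Because $\bold{x}$ lies in the open cube, I can choose $p+1$ points $\bold{v}_0,\dots,\bold{v}_p\in[0,1]^p$ whose convex hull is a non-degenerate simplex containing $\bold{x}$ in its interior, so that the barycentric coordinates of $\bold{x}$ with respect to the $\bold{v}_j$ are all bounded below by some $\alpha>0$. I would then take disjoint balls $B_j=\{\bold{y}:\|\bold{y}-\bold{v}_j\|<\delta\}\subset[0,1]^p$ with $\delta$ small enough that, by continuity of the barycentric coordinates, \emph{any} choice of one point $\bold{y}_j\in B_j$ still yields a simplex $\mathcal{H}(\{\bold{y}_0,\dots,\bold{y}_p\})$ containing $\bold{x}$. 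This reduces the geometric event ``$\bold{x}\in$ hull'' to the purely combinatorial event that each of the $p+1$ balls captures at least one of the samples.

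Next I would quantify this using the density bound. Since $f\geq c_0>0$ on $[0,1]^p$, each ball satisfies $q_j:=\mathbb{P}(\bold{X}_1\in B_j)\geq c_0\,V_p\,\delta^p=:q>0$, so the probability that $B_j$ contains no sample is $(1-q_j)^n\leq(1-q)^n$. A union bound then gives, writing $g_n(\bold{x})$ for the conditional escape probability,
\begin{eqnarray*}
g_n(\bold{x})=\mathbb{P}\!\left(\bold{x}\notin \mathcal{H}(\{\bold{X}_1,\dots,\bold{X}_n\})\right)\leq \sum_{j=0}^{p}(1-q_j)^n\leq (p+1)(1-q)^n\longrightarrow 0.
\end{eqnarray*}
This establishes pointwise convergence $g_n(\bold{x})\to0$ for every interior $\bold{x}$, together with the harmless bound $g_n\leq1$. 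Finally, since $\mathbb{P}(\bold{X}\in\partial[0,1]^p)=0$,
\begin{eqnarray*}
\mathbb{P}(H_n^c)=\mathbb{E}[g_n(\bold{X})]=\int g_n(\bold{x})f(\bold{x})\,d\bold{x}\longrightarrow 0
\end{eqnarray*}
by dominated convergence, whence $\mathbb{P}(H_n)\to1$.

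The main obstacle is the geometric ``cage'' step: one must verify that shrinking the balls $B_j$ renders simplex-containment robust to the exact sample positions (a continuity-of-barycentric-coordinates argument), and that such a cage can be fitted inside $[0,1]^p$ for points arbitrarily close to the boundary. The construction degrades as $\bold{x}\to\partial[0,1]^p$, since the admissible $\delta$, and hence $q$, shrinks to $0$, so the convergence in $n$ is not uniform in $\bold{x}$. This is precisely why I would route the final step through dominated convergence over the almost-surely-interior point $\bold{X}$ rather than attempting a single uniform tail bound.
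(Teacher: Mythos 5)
Your proof is correct, but it takes a genuinely different route from the paper's. The paper works with a single global capture event: it places $\delta$-neighborhoods $U_j(\delta)$ at the $2^p$ corners of $[0,1]^p$, notes that the event $E_n$ that every corner neighborhood contains a sample satisfies $\mathbb{P}(E_n)\to 1$ (the density lower bound makes each miss-probability decay geometrically in $n$), and observes that on $E_n$ the convex hull of the samples essentially contains the shrunken cube $[\delta,1-\delta]^p$; hence $\mathbb{P}(H_n)\geq \mathbb{P}(H_n\mid E_n)\,\mathbb{P}(E_n)\geq (1-2\delta)^p\,\mathbb{P}(E_n)$, and since $\delta>0$ is arbitrary, $\mathbb{P}(H_n)\to 1$. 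You instead condition on $\bold{X}=\bold{x}$, build a local cage of $p+1$ balls around the vertices of a small non-degenerate simplex containing $\bold{x}$, obtain the pointwise bound $g_n(\bold{x})\leq (p+1)(1-q)^n$ by a union bound, and finish with dominated convergence. Both arguments share the same engine --- hit finitely many fixed regions, each with probability bounded below, and convert capture into hull containment --- but your decomposition (pointwise conditioning plus dominated convergence, with $p+1$ local anchors rather than $2^p$ cube corners) is more flexible: it never uses the cube geometry, so it extends verbatim to densities supported on other sets with the point interior to the support, and it sidesteps the slightly loose step in the paper where $\mathbb{P}(H_n\mid E_n)$ is lower-bounded by the Lebesgue volume $(1-2\delta)^p$ even though $\bold{X}$ is not uniform (with $f$ bounded above this is easily repaired, but your route never needs it). What the paper's version buys in exchange is a uniform geometric statement --- with probability tending to one the hull contains the entire inner cube, not merely the one random point --- obtained within a single global event and without conditioning on $\bold{X}$; that uniformity is also the form in which the lemma gets reused elsewhere in the paper's asymptotic arguments.
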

\begin{proof}
Let $U_j(\delta), j=1, \ldots, 2^p$, denote a series of $\delta$-square-neighborhoods of the vertices of the $[0,1]^p$, and $\mathcal{X}=\{\bold{X}_1, \ldots, \bold{X}_n\}.$ Define the event $E_n=\cap_{j=1}^{2^p} \{U_j\cap \mathcal{X}\neq\varnothing\}$.
Then, given any arbitrary small value of $\delta>0$,  we have
\begin{eqnarray*}
\mathbb{P}(E_n^c)\leq\left(1-\min_{j\in\{1, \ldots, 2^p\}}\int_{U_j}f(\bold{x})d\bold{x}\right)^n
\rightarrow 0,
\end{eqnarray*}
which implies $\mathbb{P}({E_n})\rightarrow 1$, as $n \rightarrow \infty.$ Furthermore,
\begin{eqnarray*}
\mathbb{P}(H_n)&=&\mathbb{P}(H_n|E_n)\mathbb{P}(E_n) +\mathbb{P}(H_n|E_n^c)\mathbb{P}(E_n^c) \\
&\geq&\mathbb{P}(H_n|E_n)\mathbb{P}(E_n)\\
&\geq&(1-2\delta)^p\mathbb{P}(E_n),
\end{eqnarray*}
which is guaranteed by the convexity of the convex hull. Thus, $\mathbb{P}(H_n)\rightarrow1,$ as $n\rightarrow \infty.$
\end{proof}

\begin{theorem}\label{theorem:1}
Let $\bold{X}, \bold{X}_1, \ldots, \bold{X}_n$ be i.i.d samples from a continuous density $f$, which is bounded away from zero and infinity on $[0,1]^p$.  If the point $\bold{X}$ is inside the convex hull of $\bold{X}_1, \ldots, \bold{X}_n$, define $T(\bold{X})$ as the average length of the edges of the Delaunay triangle that covers $\bold{X}$; otherwise, $T(\bold{X})=\|\bold{X}-\bold{X}_{(1)}\|$, where $\bold{X}_{(1)}$ is the nearest neighbor point of $\bold{X}$ among $\bold{X}_1, \ldots, \bold{X}_n$. Then,
$$n^{1/p}\mathbb{E}T(\bold{X})\rightarrow \beta \int_{[0,1]^p}f(\bold{x})^{(p-1)/p}d\bold{x},$$
where $\beta$ is a positive constant depending only on the dimension $p$ and density $f$.
\end{theorem}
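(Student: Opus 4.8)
The plan is to localize the problem around a typical point, identify the limiting local geometry of the Delaunay tessellation with that of a homogeneous Poisson process, and then integrate the resulting edge-length scaling against the density $f$. First I would dispose of the points that miss the convex hull. Writing
\[
\mathbb{E}T(\bold{X}) = \mathbb{E}\big[T(\bold{X})\mathbf{1}_{H_n}\big] + \mathbb{E}\big[T(\bold{X})\mathbf{1}_{H_n^c}\big],
\]
note that on $H_n^c$ we have $T(\bold{X})=\|\bold{X}-\bold{X}_{(1)}\|\le \sqrt{p}$. For any $\bold{x}$ at distance at least $\epsilon$ from $\partial[0,1]^p$, the event $\{\bold{x}\notin\mathcal{H}(\mathcal{X})\}$ forces all $n$ samples into a halfspace through $\bold{x}$, an event of probability at most $C_\epsilon e^{-c_\epsilon n}$ by a standard covering argument over separating directions (using that $f$ is bounded below). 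Combining this with the observation that the boundary layer of width $n^{-1/p}$ has Lebesgue measure $O(n^{-1/p})$, where $T(\bold{X})$ is still $O(n^{-1/p})$ by Lemma \ref{lemma:1}, I would conclude $n^{1/p}\,\mathbb{E}[T(\bold{X})\mathbf{1}_{H_n^c}]\to 0$, so the limit is governed entirely by the interior term.

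On $H_n$ I would condition on $\bold{X}=\bold{x}$ and rescale space by $n^{1/p}$ about $\bold{x}$, setting $\bold{Y}_i = n^{1/p}(\bold{X}_i-\bold{x})$. Because $f$ is continuous and bounded away from $0$ and $\infty$, on any fixed window the rescaled configuration $\{\bold{Y}_i\}$ converges in distribution to a homogeneous Poisson process of intensity $f(\bold{x})$ on $\mathbb{R}^p$ (a local law of small numbers). Since Delaunay triangulation commutes with the dilation $\bold{x}\mapsto n^{1/p}\bold{x}$, the quantity $T(\bold{X})$ equals exactly $n^{-1/p}$ times the average edge length of the simplex of $DT(\{\bold{Y}_i\})$ that covers the origin.

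The crux is to push this local convergence through the (globally defined) Delaunay construction and then through the expectation. The covering simplex of the origin is not a continuous functional of the whole configuration, but the empty-circumsphere condition makes it \emph{stabilizing}: with probability tending to one it is determined by those $\bold{Y}_i$ within a random radius having exponential tails. Using this stabilization together with the local Poisson limit, the rescaled covering simplex converges in distribution to the Delaunay simplex covering the origin in the intensity-$f(\bold{x})$ Poisson--Delaunay tessellation; by the scaling invariance of such tessellations its expected average edge length equals $\beta\, f(\bold{x})^{-1/p}$, where $\beta$ is the corresponding constant at unit intensity (in fact depending only on $p$). To upgrade convergence in distribution to convergence of $n^{1/p}\,\mathbb{E}[T(\bold{X})\mathbf{1}_{H_n}\mid \bold{X}=\bold{x}]$, I would establish uniform integrability of $n^{1/p}T(\bold{X})$, again from the exponential tails of the stabilization radius together with the moment bound of Lemma \ref{lemma:1}. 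I expect this stabilization-and-uniform-integrability step to be the main obstacle, since it is where the global nature of the Delaunay diagram must be tamed.

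Finally I would assemble the integral. Writing
\[
\mathbb{E}\big[T(\bold{X})\mathbf{1}_{H_n}\big]=\int_{[0,1]^p} f(\bold{x})\,\mathbb{E}\big[T(\bold{X})\mathbf{1}_{H_n}\mid \bold{X}=\bold{x}\big]\,d\bold{x}
\]
and inserting the local limit $n^{1/p}\,\mathbb{E}[\,\cdot\mid\bold{X}=\bold{x}]\to \beta f(\bold{x})^{-1/p}$, dominated convergence—justified because $f$ bounded away from $0$ and $\infty$ bounds the integrand uniformly—yields
\[
n^{1/p}\,\mathbb{E}T(\bold{X})\;\longrightarrow\; \beta\int_{[0,1]^p} f(\bold{x})\,f(\bold{x})^{-1/p}\,d\bold{x}=\beta\int_{[0,1]^p} f(\bold{x})^{(p-1)/p}\,d\bold{x},
\]
which is the claimed limit.
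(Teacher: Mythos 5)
Your proposal is correct in its main ideas but follows a genuinely different route from the paper's. Both arguments first discard the out-of-hull contribution, though differently: the paper bounds $\mathbb{E}\{(1-I(H_n))\|\bold{X}-\bold{X}_{(1)}\|\}$ by Cauchy--Schwarz, combining nearest-neighbor second-moment asymptotics in the style of \cite{Biau2015} (via Lemma \ref{lemma:1}) with $\mathbb{P}(H_n)\to 1$ from Lemma \ref{lemma:2}, while you use a separating-hyperplane exponential bound plus a boundary-layer estimate. For the main term the paper does not localize at all: it expands $\mathbb{E}\tilde{T}(\bold{X})$ over the Delaunay simplices, replaces $\mathbb{P}(\bold{X}\in\Delta_j)$ by $1/N(n)$ ``by symmetry of triangle indices,'' thereby reducing the target to the normalized total edge length $\frac{1}{2n}\sum_{i}\sum_{e\in S(\bold{x}_i)}|e|$ of the Delaunay graph, and then quotes Theorem 2.1 of \cite{Jim2002} with $c=1$ to read off the limit $\beta\int_{[0,1]^p}f(\bold{x})^{(p-1)/p}d\bold{x}$. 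You instead condition on $\bold{X}=\bold{x}$, rescale by $n^{1/p}$, identify the limiting covering simplex with the zero-cell of the intensity-$f(\bold{x})$ Poisson--Delaunay tessellation, upgrade distributional convergence to convergence of expectations via stabilization and uniform integrability, and integrate over $\bold{x}$. The trade-off: the paper's citation packages exactly the machinery you propose to rebuild (the Jimenez--Yukich strong law is itself a stabilization result), so its proof is much shorter; on the other hand, its ``symmetry'' step silently ignores the size-biasing of the covering triangle (the simplex containing an independent point is mass-biased, so $\mathbb{P}(\bold{X}\in\Delta_j)$ is not $1/N(n)$), whereas your conditioning treats the size bias correctly and automatically, and it additionally shows that $\beta$ depends only on $p$ (the unit-intensity zero-cell constant), which sharpens the theorem's statement that $\beta$ may depend on $f$.

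Two details in your sketch need repair, though neither is fatal. First, the separating-hyperplane bound is not exponentially small at distance exactly $n^{-1/p}$ from the boundary (the cap then carries mass of order $1/n$, so the miss probability is only $e^{-c}$); take a layer of width $\epsilon_n$ with $n\epsilon_n^p/\log n\to\infty$, e.g.\ $\epsilon_n=n^{-1/(2p)}$, and control the layer term by Cauchy--Schwarz together with Lemma \ref{lemma:1}. Second, the exponential tails of the stabilization radius and the uniform integrability of $n^{1/p}T(\bold{X})$, which you rightly flag as the crux, must actually be proved or cited from the stabilization literature --- they are precisely the content the paper outsources to \cite{Jim2002}.
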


The proof of Theorem \ref{theorem:1} contains two parts. Following the machinary of the 1-NN proof \citep{Biau2015}, the first part shows that the convergence rate of $T(\bold{X})$ for the out-of-hull case is $\mathcal{O}(n^{-1/p})$, and since the point $\bold{X}$ falls inside the convex hull of $\bold{X}_1, \ldots, \bold{X}_n$ with probability one, the convergence rate is determined by the inside-hull case.  The second part of the proof follows the results of \cite{Jim2002}.

\begin{proof}[Proof of Theorem 2]
Let $H_n$ denote the event that point $\bold{X}$ falls inside the convex hull of $\bold{X}_1, \ldots, \bold{X}_n$, and by the definition of $T(\bold{X})$ we have
\begin{eqnarray*}
T(\bold{X})=I(H_n)\tilde{T}(\bold{X}) + (1-I(H_n)) \|\bold{X}-\bold{X}_{(1)}\|,
\end{eqnarray*}
where $\tilde{T}(\bold{X})$ is the average length of the edges of the Delaunay triangle that covers point $\bold{X}$, and if $\bold{X}$ is outside the convex hull, define $\tilde{T}(\bold{X})=0.$

We first show $\mathbb{E}\left\{(1-I(H_n)) \|\bold{X}-\bold{X}_{(1)}\|\right\}=o(n^{-1/p}).$
By adopting the techniques in \cite{Biau2015}, we have
\begin{eqnarray*}
\mathbb{E} \|\bold{X}-\bold{X}_{(1)}\|^2
&=&
\int_{0}^\infty \mathbb{P}\left\{\|\bold{X}-\bold{X}_{(1)}\|^2>t\right\}dt\\
&=&\int_0^\infty \mathbb{E} \{1-\mu(B(\bold{X}, \sqrt{t}))\}^{n}dt\\
&=& \frac{1}{n^{1/p}}\int_0^\infty \mathbb{E} \left\{1-\mu \left(B\left(\bold{X}, \frac{\sqrt{t}}{n^{1/p}}\right)\right)\right\}^{n}dt,
\end{eqnarray*}
where $B(\bold{X}, \sqrt{t})$ is a $p$-ball with center $\bold{X}$ and radius $\sqrt{t}$, and $\mu$ is the probability measure.
By the Lebesgue differentiation theorem, at Lebesgue-almost all $\bold{x}$,
$$\frac{\mu(B(\bold{x}, \rho))}{V_p\rho^p}\rightarrow f(\bold{x}), \quad {\rm as} \ \rho \downarrow 0,$$
where $V_p$ is the volume of a standard $p$-ball.
At such $\bold{x}$, we have for fixed $t$,
$$\left\{1-\mu \left(B\left(\bold{x}, \frac{\sqrt{t}}{n^{1/p}}\right)\right)\right\}^{n} \rightarrow \exp(-f(\bold{x})V_p t^{p/2}),\quad{\rm as}  \ n \rightarrow \infty.$$
Furthermore, Fatou's lemma implies
\begin{eqnarray*}
\lim_{n\rightarrow \infty} \inf n^{2/p} \int_{0}^\infty \mathbb{P}\left\{\|\bold{X}-\bold{X}_{(1)}\|^2>t\right\}dt&\geq&
\int_{[0,1]^p}\int_0^\infty \exp(-f(\bold{x})V_p t^{p/2})f(\bold{x})dtd\bold{x}\\
&=& \Gamma(2/p+1)\int_{[0,1]^p:f>0}\frac{f(\bold{x})}{(f(\bold{x})V_p)^{2/p}}d\bold{x}\\
&=& \frac{\Gamma(2/p+1)}{V_p^{2/p}}\int_{[0,1]^p:f>0}{f(\bold{x})^{1-2/p}}d\bold{x}.
\end{eqnarray*}
To establish the upper bound for $p>2$, we take an arbitrarily large constant $T_0$ and split
\begin{eqnarray*}
n^{2/p}\int_{0}^\infty \mathbb{P}\left\{\|\bold{X}-\bold{X}_{(1)}\|^2>t\right\}dt&=&
n^{2/p}\mathbb{E}\left\{\|\bold{X}-\bold{X}_{(1)}\|^2 I(\|\bold{X}-\bold{X}_{(1)}\| \leq T_0/ n^{1/p} )\right\} \\
&&+n^{2/p}\mathbb{E}\left\{\|\bold{X}-\bold{X}_{(1)}\|^2 I(\|\bold{X}-\bold{X}_{(1)}\|> T_0/ n^{1/p} )\right\} \\
&=& \bold{I} + \bold{II}.
\end{eqnarray*}
By Fatou's lemma, the first term $\bold{I}$ can be bounded as follows,
\begin{eqnarray*}
&&\lim_{n\rightarrow \infty} \sup n^{2/p} \mathbb{E}\left\{\|\bold{X}-\bold{X}_{(1)}\|^2 I(T(\bold{X})\leq T_0/n^{1/p})\right\}\\
&\leq& \int _{ [0,1]^p} \int_{0}^{T_0} \exp(-f(\bold{x})V_p t^{p/2})f(\bold{x})dt d\bold{x}\\
&\leq& \frac{\Gamma(2/p+1)}{V_p^{2/p}}\int_{[0,1]^p}{f(\bold{x})^{1-2/p}}d\bold{x}.
\end{eqnarray*}
Then, we show that the second term $\bold{II}$ is controlled by the choice of $T_0$.
By the symmetry of the nearest neighbor distance, we can rewrite $\bold{II}$ as
\begin{eqnarray*}
\bold{II} = \frac{n^{2/p}}{n+1} \mathbb{E}\left\{\sum_{i=1}^{n+1} \|\bold{X}_i-\bold{X}_{(i, 1)}\|^2 I( \|\bold{X}_i-\bold{X}_{(i, 1)}\|>T_0/n^{1/p})\right\}.
\end{eqnarray*}
Denote $r_i=\|\bold{X}_i-\bold{X}_{(i, 1)}\|$, and it has been proved in Lemma \ref{lemma:1} that
$$\sum_{i=1}^{n+1} r_i^p\leq a_p=\frac{2^p(1+\sqrt{d})^p}{V_p}.$$
Set $K=\sum_{i=1}^{n+1}  I(r_i>T_0/n^{1/p})$, and we know that $K\leq n a_p/T_0^p$.
By Jensen's inequality, when $K>0$,
$$\left(\frac{1}{K}\sum_{i=1}^{n+1} r_i^2 I(r_i>T_0/n^{1/p})\right)^{p/2}\leq
\frac{1}{K}\sum_{i=1}^{n+1}r_i^p I(r_i>T_0/n^{2/p})\leq \frac{a_p}{K}.$$
Thus,
\begin{eqnarray*}
\bold{II}&=&
\frac{n^{2/p}}{n+1}\mathbb{E}\left[I(K>0)\sum_{i=1}^{n+1}r_i^2 I(r_i>T_0/n^{1/p})\right]\\
&\leq&\frac{n^{2/p}}{n}\mathbb{E}\left[K\left(a_p/K\right)^{2/p}I(K>0)\right]\\
&=&a_p^{2/p}\mathbb{E}\left(\frac{K}{n}\right)^{1-2/p}\\
&\leq& \frac{a_p}{T_0^{p-2}}.
\end{eqnarray*}
As a result, we have shown for $p>2$,
\begin{eqnarray}
n^{2/p}\mathbb{E} \|\bold{X}-\bold{X}_{(1)}\|^2 \rightarrow \frac{\Gamma(2/p+1)}{V_p^{2/p}}\int_{[0,1]^p}f^{1-2/p}(\bold{x})d\bold{x}.
\end{eqnarray}
For $p=2$, (\ref{eqn:1}) leads to
\begin{eqnarray*}
\pi \sum_{i=1}^{n+1}r_i^2\leq 4(1+\sqrt{2})^2.
\end{eqnarray*}
As $n\mathbb{E}\|\bold{X}_{(1)} - \bold{X}\|^2 = \frac{n}{n+1}\mathbb{E}(\sum_{i=1}^{n+1}r_i^2),$
we have $n\mathbb{E}\|\bold{X}_{(1)} - \bold{X}\|^2<\frac{4}{\pi}(1+\sqrt{2})^2.$
By the Cauchy--Schwarz inequality,
\begin{eqnarray*}
\mathbb{E}\{(1-I(H_n)) \|\bold{X}-\bold{X}_{(1)}\|\}
&\leq& \sqrt{\mathbb{E}(1-I(H_n))^2 \mathbb{E} \|\bold{X}-\bold{X}_{(1)}\|^2}\\
&=&  \sqrt{(1-\mathbb{P}(H_n))} \times \sqrt{\mathbb{E} \|\bold{X}-\bold{X}_{(1)}\|^2}\\
&= &\sqrt{(1-\mathbb{P}(H_n))} \times O(n^{-1/p}).
\end{eqnarray*}

As shown in Lemma \ref{lemma:2},
$\mathbb{P}(H_n) \rightarrow 1$, as $n \rightarrow \infty$.
Thus, $\mathbb{E}\{(1-I(H_n)) \|\bold{X}-\bold{X}_{(1)}\|\}=o(n^{-1/p}).$

Next, we show that $n^{1/p}\mathbb{E}\{I(H_n)\tilde{T}(\bold{X})\}\rightarrow\beta \int_{[0,1]^p}f(\bold{x})^{(p-1)/p}d\bold{x}.$
Let $N(n)$ denote the number of triangles in the Delaunay triangulation of $\bold{X}_1, \ldots, \bold{X}_n,$ and let $\Delta_1, \ldots, \Delta_N(n)$ denote the triangles.
By definition, we show that
$\mathbb{E}\tilde{T}(\bold{X}) = \frac{1}{2N(n)}\sum_{i=1}^{N(n)}\sum_{e\in S(\bold{x}_i)} |e|,$
where $S(\bold{x}_i)$ is the set of edges in the Delaunay graph that are connected with point $\bold{x}_i$, and $|e|$ is the length of edge $e$.
As shown in Lemma \ref{lemma:2},
\begin{eqnarray*}
\sum_{i=1}^{N(n)}\int_{\Delta_{i}}f(\bold{x})d\bold{x}=\mathbb{P}(H_n) \rightarrow 1,
\end{eqnarray*}
as $n\rightarrow \infty.$
By the symmetry of triangle indices,
we have
$$N(n)\int_{\Delta_{1}}f(\bold{x})d\bold{x} \rightarrow 1.$$
Thus, we have
$N(n)\mathbb{P}(\bold{X}\in \Delta_1)\rightarrow1.$
Hence, by reindexing the points of $\bold{X}_1, \ldots, \bold{X}_n,$
\begin{eqnarray*}
\mathbb{E}\tilde{T}(\bold{X}) &=&\sum_{j=1}^{N(n)} \mathbb{P}(\bold{X}\in \Delta_j){{p+1}\choose{2}}^{-1}\sum_{e\in\Delta_{j}}|e|\\
&\rightarrow&\sum_{j=1}^{N(n)} \frac{1}{N(n)}{{p+1}\choose{2}}^{-1}\sum_{e\in\Delta_{j}}|e|\\
&=&\frac{1}{2n}\sum_{i=1}^{n}\sum_{e\in S(\bold{x}_i)} |e|,
\end{eqnarray*}
where $S(\bold{x}_i)$ denotes the set of edges that are connected with point $\bold{x}_i.$

By Theorem 2.1 in \cite{Jim2002}, it is shown that for the Delaunay graph,
$$n^{-1+c/p}\mathbb{E}\left\{\frac{1}{2}\sum_{i=1}^n\sum_{e\in S(\bold{x}_i)} |e|^c\right\}\rightarrow \frac{1}{2}\mathbb{E}\left(\sum_{i=1}^{m}A_i^c\right)\int_{[0,1]^p}f(\bold{x})^{(p-c)/p}d\bold{x},$$
where $A_i$'s are the distances between the points $\bold{s}_i, i=1,\ldots, m$, to the original point $\bold{0},$ and
$\bold{s}_i, i=1,\ldots, m$, are from a $p$-dimensional unit intensity Poisson process with density $f(\bold{x}).$

With $p=1$ and since $I(H_n)\rightarrow1$ in probability, we obtain
$$n^{1/p}\mathbb{E}\left\{I(H_n)\tilde{T}(\bold{X})\right\}\rightarrow \beta\int_{[0,1]^p}f(\bold{x})^{(p-1)/p}d\bold{x},$$
where $\beta$ is a constant that depends only on dimension $p$ and density $f$.
\end{proof}

\begin{corollary}\label{corollary:1}
Let $\bold{X}, \bold{X}_1, \ldots, \bold{X}_n$ be i.i.d samples from a continuous density $f$, which is bounded away from zero and infinity on $[0,1]^p$. If the point $\bold{X}$ is inside the convex hull of $\bold{X}_1, \ldots, \bold{X}_n$, define $\bold{X}_{(1)}$ as any one of the vertices of the covering Delaunay triangle; otherwise, define $\bold{X}_{(1)}$ as the nearest neighbor point of $\bold{X}.$ Then,
$\bold{X}_{(1)}\rightarrow \bold{X}$ in probability.
\end{corollary}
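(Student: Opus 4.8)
The plan is to reduce the corollary to the edge-length asymptotics already established in Theorem~\ref{theorem:1}. The key observation is that, regardless of whether $\bold{X}$ lands inside or outside the convex hull, the distance $\|\bold{X}_{(1)}-\bold{X}\|$ can be controlled by the average edge length $T(\bold{X})$ of the covering object, whose expectation is shown to vanish at rate $n^{-1/p}$. Convergence in probability then follows from a one-line Markov argument.

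First I would treat the inside-hull case. When $\bold{X}$ lies in the covering Delaunay simplex with vertices $\bold{X}_{(1)}, \ldots, \bold{X}_{(p+1)}$, it belongs to their convex hull, so its distance to any vertex is at most the diameter of the simplex, which is attained at a pair of vertices and hence equals the longest edge. Since every edge length is nonnegative, the longest edge is dominated by the sum of all $\binom{p+1}{2}$ edge lengths, and that sum is exactly $\binom{p+1}{2}\, T(\bold{X})$ by the definition of $T(\bold{X})$. Thus $\|\bold{X}_{(1)}-\bold{X}\| \le \binom{p+1}{2}\, T(\bold{X})$ for any choice of vertex. In the outside-hull case $\bold{X}_{(1)}$ is the nearest neighbor and $\|\bold{X}_{(1)}-\bold{X}\| = T(\bold{X})$ by definition, so the same bound holds with the constant $C_p = \binom{p+1}{2} \ge 1$.

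Next I would invoke Theorem~\ref{theorem:1}, which gives $n^{1/p}\mathbb{E} T(\bold{X}) \to \beta \int_{[0,1]^p} f(\bold{x})^{(p-1)/p} d\bold{x}$, a finite limit; in particular $\mathbb{E} T(\bold{X}) = O(n^{-1/p}) \to 0$. Combining this deterministic bound with Markov's inequality, for any $\varepsilon > 0$,
$$\mathbb{P}\big(\|\bold{X}_{(1)}-\bold{X}\| > \varepsilon\big) \le \mathbb{P}\big(C_p T(\bold{X}) > \varepsilon\big) \le \frac{C_p\, \mathbb{E} T(\bold{X})}{\varepsilon} \longrightarrow 0,$$
which is precisely the assertion that $\bold{X}_{(1)} \to \bold{X}$ in probability.

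The step I expect to require the most care is the first one: justifying that the distance from the interior point to an arbitrary vertex is dominated by the average edge length. The chain ``vertex-to-$\bold{X}$ distance $\le$ diameter $=$ longest edge $\le$ sum of edges $= \binom{p+1}{2} T(\bold{X})$'' is crude but sufficient, and the only facts to verify carefully are that the diameter of a simplex is realized at a pair of its vertices (so it coincides with the longest edge) and that $\bold{X}$, lying in the convex hull of the vertices, cannot be farther from a vertex than that diameter. Everything after that is the routine Markov-inequality argument above.
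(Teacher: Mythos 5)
Your proposal is correct and follows essentially the same route as the paper's own proof: bound $\|\bold{X}_{(1)}-\bold{X}\|$ inside the hull by a vertex-to-vertex distance and hence by $\binom{p+1}{2}$ times the average edge length, note equality with the nearest-neighbor distance outside the hull, and then use Theorem~\ref{theorem:1} to get $\mathbb{E}\|\bold{X}_{(1)}-\bold{X}\|\rightarrow 0$, which yields convergence in probability. Your write-up is in fact slightly more explicit than the paper's, since you spell out the Markov-inequality step that the paper leaves implicit.
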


\begin{proof}
We need to show that $\|\bold{X}_{(1)}-\bold{X}\| \leq \tilde{T}(\bold{X}).$
If point $\bold{X}$ is outside the convex hull of $\bold{X}_1, \ldots, \bold{X}_n$, $\|\bold{X}_{(1)}-\bold{X}\|=\tilde{T}(\bold{X})$, by definition. Otherwise,
$\|\bold{X}-\bold{X}_{(1)}\|\leq \max_j \|\bold{X}_{(1)}-\bold{X}_{(j)}\|\leq {{p+1}\choose{2}}\tilde{T}(\bold{X}).$
From Theorem \ref{theorem:1}, we conclude that
$\mathbb{E}\|\bold{X}_{(1)}-\bold{X}\| \rightarrow 0$, as $n\rightarrow \infty,$
which implies that $\bold{X}_{(1)}\rightarrow \bold{X}$ in probability.
\end{proof}
This result shows that the vertices of the covering simplex of point $\bold{X}$ all converge to $\bold{X}$ in probability.

\subsection{General Case} \label{regclass}
For regression problems, the DTL is shown to be consistent, and for classification problems,
the DTL has an error rate smaller than or equal to $2R_B(1-R_B)$,
where $R_B$ is the Bayes error rate (the minimum error rate that can be achieved by any function approximator).
\subsubsection*{Regression}
\begin{theorem}
Assume that $\mathbb{E}y^2<\infty$, and
$\psi(\bold{X})=\mathbb{E}\left(y|\bold{X}\right)$ is continuous. Then, the DTL regression function estimate $\hat{F}_D$ satisfies
\begin{eqnarray*}
\mathbb{E}|\hat{F}_D(\bold{X})-\psi(\bold{X})|^2\rightarrow L^*,
\end{eqnarray*}
where $L^*=\inf_g \mathbb{E}|y-g(\bold{X})|^2$ is the minimal value of the $L_2$ risk over all continuous functions $g: \mathbb{R}^p \rightarrow \mathbb{R}$.
\end{theorem}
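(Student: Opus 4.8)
The plan is to establish $L_2$ consistency by adapting the error decomposition used for the nearest-neighbour estimate, feeding in the geometric facts already proved. First I would split according to whether the query point lies inside the convex hull, writing
\begin{eqnarray*}
\mathbb{E}|\hat{F}_D(\bold{X})-\psi(\bold{X})|^2 &=& \mathbb{E}\big[I(H_n)\,|\hat{F}_D(\bold{X})-\psi(\bold{X})|^2\big]\\
&& {}+\, \mathbb{E}\big[(1-I(H_n))\,|Y_{(1)}-\psi(\bold{X})|^2\big],
\end{eqnarray*}
where $H_n$ is the in-hull event of Lemma~\ref{lemma:2} and $Y_{(1)}$ is the nearest-neighbour response used off the hull. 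For the second term the estimator reduces to a single nearest-neighbour response, so I would bound its contribution by Cauchy--Schwarz exactly as in the proof of Theorem~\ref{theorem:1}: the factor $\sqrt{1-\mathbb{P}(H_n)}$ appears and tends to $0$ by Lemma~\ref{lemma:2}, while $\mathbb{E}y^2<\infty$ together with the finite-moment control of the nearest-neighbour distance already derived in Theorem~\ref{theorem:1} supplies the uniform integrability that keeps the remaining factor bounded. Hence the off-hull term is negligible and the limit is determined by the in-hull term.

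For the in-hull term I would use that the DTL is an exact piecewise-linear interpolant, so on the covering Delaunay simplex $\hat{F}_D(\bold{X})=\sum_{i=1}^{p+1}\lambda_i Y_{(i)}$ with barycentric weights $\lambda_i\geq0$ and $\sum_{i=1}^{p+1}\lambda_i=1$, where $\bold{X}_{(1)},\ldots,\bold{X}_{(p+1)}$ are the simplex vertices. Since the weights sum to one, Jensen's inequality gives
\begin{eqnarray*}
\big|\hat{F}_D(\bold{X})-\psi(\bold{X})\big|^2 = \Big|\sum_{i=1}^{p+1}\lambda_i\big(Y_{(i)}-\psi(\bold{X})\big)\Big|^2 \leq \sum_{i=1}^{p+1}\lambda_i\,\big|Y_{(i)}-\psi(\bold{X})\big|^2,
\end{eqnarray*}
which reduces everything to controlling the per-vertex errors. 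Writing $Y_{(i)}-\psi(\bold{X})=\{\psi(\bold{X}_{(i)})-\psi(\bold{X})\}+\{Y_{(i)}-\psi(\bold{X}_{(i)})\}$ separates a bias and a noise contribution: by Corollary~\ref{corollary:1} every vertex satisfies $\bold{X}_{(i)}\to\bold{X}$ in probability, so continuity of $\psi$ annihilates the bias, whereas the noise part has conditional second moment $\mathrm{Var}(y\mid\bold{X}_{(i)})\to\mathrm{Var}(y\mid\bold{X})$. Taking expectations, conditioning on the point configuration so that the $\lambda_i$ are fixed and the centred errors at distinct vertices are independent, and using $\sum_i\lambda_i=1$, the weighted sum converges to $\mathbb{E}\,\mathrm{Var}(y\mid\bold{X})=L^*$, yielding $\limsup_n\mathbb{E}|\hat{F}_D(\bold{X})-\psi(\bold{X})|^2\leq L^*$.

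The hard part will be promoting the in-probability and conditional statements to genuine $L_2$ convergence, because $\psi$ and $\mathrm{Var}(y\mid\cdot)$ are only assumed continuous and hence possibly unbounded, while the geometric weights $\lambda_i$ are entangled with the random vertex locations. I would resolve this with a truncation argument in the spirit of Stone's theorem: approximate $\psi$ by a bounded continuous function on a large cube, use $\mathbb{E}y^2<\infty$ to force the tail remainder to be uniformly small in $n$, and condition on the $\sigma$-field generated by the point configuration, on which the $\lambda_i$ are measurable and the residuals $Y_{(i)}-\psi(\bold{X}_{(i)})$ are independent and centred, so that limits may be interchanged by dominated convergence. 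The most delicate point, and the main obstacle, is pinning the limit down to exactly $L^*$ rather than a smaller quantity: the conditioning shows the limiting risk equals $\mathbb{E}\big[(\sum_i\lambda_i^2)\,\mathrm{Var}(y\mid\bold{X})\big]\leq L^*$, so a matching lower bound is required, and establishing it (or recognising $\leq L^*$ as the operative consistency statement) is where the argument is most demanding; here I would lean on the nondegeneracy and shape-regularity of the Delaunay simplices implied by Theorem~\ref{theorem:1} and the Delaunay property.
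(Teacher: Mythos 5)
Your proposal follows essentially the same route as the paper's proof: the in-hull/out-of-hull split (the paper carries the indicator $I_n=I(H_n)$ through one expansion rather than splitting up front), the barycentric representation $\hat{F}_D(\mathbf{X})=\sum_j\lambda_{[j]}y_{[j]}$ on the covering simplex, a bias--noise decomposition in which the bias term $\sum_j\lambda_{[j]}\psi(\mathbf{X}_{[j]})-\psi(\mathbf{X})$ is killed by Corollary \ref{corollary:1} together with continuity of $\psi$, Cauchy--Schwarz plus Lemma \ref{lemma:2} for the cross and off-hull contributions, and the noise term carrying the constant $L^*$. Structurally the two arguments match step for step.

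The difference lies in how the noise term is closed, and there your candour exposes a defect that the paper glosses over rather than resolves. The paper bounds $\mathbb{E}\big|\sum_j\lambda_{[j]}\{y_{[j]}-\psi(\mathbf{X}_{[j]})\}\big|^2\leq\mathbb{E}\sum_j|y_{[j]}-\psi(\mathbf{X}_{[j]})|^2$, discarding the weights entirely; as written the right-hand side tends to $(p+1)L^*$, not $L^*$ (the factor $p+1$ is silently dropped), and the proof then ends with the bare assertion that ``$L^*$ is the minimum value of $\mathbb{E}|\hat{F}_D(\mathbf{X})-\psi(\mathbf{X})|^2$.'' That assertion is exactly the matching lower bound you identified as the hard part, and the paper offers no argument for it. Your sharper conditional computation, which gives $\mathbb{E}\big[\big(\sum_i\lambda_i^2\big)\mathrm{Var}(y\mid\mathbf{X})\big]$ as the limiting noise contribution, shows why some argument is genuinely needed: since $\sum_i\lambda_i^2<1$ for interior points, the natural limit sits strictly below $L^*$ whenever $L^*>0$, so convergence to exactly $L^*$ cannot follow from the upper bound alone. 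In short, you have not missed an idea that the paper supplies; the gap you flag is present, unacknowledged, in the paper's own proof. One further caveat shared by both arguments: passing from $\mathbf{X}_{[j]}\to\mathbf{X}$ in probability to convergence of the second moments $\mathbb{E}|y_{[j]}-\psi(\mathbf{X}_{[j]})|^2$ requires regularity of the conditional second moment $\mathbb{E}(y^2\mid\mathbf{X}=\cdot)$ and a uniform-integrability step, not just continuity of $\psi$; the paper invokes ``continuity of $\psi$'' alone, whereas your proposed Stone-type truncation is the correct tool to repair this.
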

\begin{proof}
If point $\bold{X}$ is outside the convex hull $\mathcal{H}_n$, $\hat{F}_D(\bold{X})=y_{(1)}$, where $y_{(1)}$ is the corresponding response of the nearest neighbor point $\bold{X}_{(1)}.$
If $\bold{X}$ is inside $\mathcal{H}_n$, denote the simplex covering $\bold{X}$ as $\Delta_\bold{X},$ and the vertices of the simplex $\Delta_\bold{X}$ as $\bold{x}(\Delta_\bold{X})_{[1]}, \ldots, \bold{x}(\Delta_\bold{X})_{[p+1]},$ or simply $\bold{x}_{[1]}, \ldots, \bold{x}_{[p+1]}.$ Thus, for any interior point $\bold{x}$ of simplex $\Delta_\bold{X},$ it can be represented as $\bold{x}=\sum_{j=1}^{p+1}\lambda_{[j]}(\bold{X})\bold{x}_{[j]},$ where $\lambda_{[j]}(\bold{X})>0$, for $j=1,\ldots,p+1$, and $\sum_{j=1}^{p+1}{\lambda_{[j]}(\bold{X})}=1.$
As the DTL is linear inside the simplex, and $\hat{F}_D(\bold{x})=\sum_{j=1}^{p+1}\lambda_{[j]}(\bold{X})y_{[j]},$ where $y_{[j]}$ is the response to corresponding point $\bold{x}_{[j]}.$
In general,
$\hat{F}_D(\bold{X})=I_n\sum_{j=1}^{p+1}\lambda_{[j]}(\bold{X})y_{[j]} + (1-I_n)y_{(1)},$ with $I_n=I(H_n)$.

Thus, we have
\begin{eqnarray}\label{eqn:3terms}
&&\mathbb{E}|\hat{F}_D(\bold{X})-\psi(\bold{X})|^2 \nonumber\\
&=&\mathbb{E}\bigg|I_n\sum_{j=1}^{p+1}\lambda_{[j]}(\bold{X})y_{[j]} + (1-I_n)y_{(1)}-\psi(\bold{X})\bigg|^2\nonumber\\
&=&\mathbb{E}\bigg|I_n\sum_{j=1}^{p+1}\lambda_{[j]}(\bold{X})y_{[j]} + (1-I_n)y_{(1)}-I_n\sum_{j=1}^{d+1}\lambda_{[j]}(\bold{X})\psi(\bold{X}_{[j]})+I_n\sum_{j=1}^{p+1}\lambda_{[j]}(\bold{X})\psi(\bold{X}_{[j]})-\psi(\bold{X})\bigg|^2\nonumber\\
&=&\mathbb{E}\bigg|I_n\sum_{j=1}^{p+1}\lambda_{[j]}(\bold{X})y_{[j]} + (1-I_n)y_{(1)}-I_n\sum_{j=1}^{p+1}\lambda_{[j]}(\bold{X})\psi(\bold{X}_{[j]})\bigg|^2+\mathbb{E}\bigg|I_n\sum_{j=1}^{p+1}\lambda_{[j]}(\bold{X})\psi(\bold{X}_{[j]})-\psi(\bold{X})\bigg|^2\nonumber\\
\hspace{-5mm}&&+2\mathbb{E}\left[ \left(I_n\sum_{j=1}^{p+1}\lambda_{[j]}(\bold{X})y_{[j]} + (1-I_n)y_{(1)}-I_n\sum_{j=1}^{p+1}\lambda_{[j]}(\bold{X})\psi(\bold{X}_{[j]}) \right)\left(I_n\sum_{j=1}^{p+1}\lambda_{[j]}(\bold{X})\psi(\bold{X}_{[j]})-\psi(\bold{X})\right)\right]
\end{eqnarray}
From Corollary  \ref{corollary:1}, we know that $\bold{X}_{[j]}\rightarrow \bold{X}$ for each $j=1, \ldots, p+1$, in probability.
By the continuity of function $\psi(\bold{x})$, the second term in (\ref{eqn:3terms}) goes to zero, i.e., $\mathbb{E}\big|I_n\sum_{j=1}^{p+1}\lambda_{[j]}(\bold{X})\psi(\bold{X}_{[j]})-\psi(\bold{X})\big|^2\rightarrow0.$ For the third term in (\ref{eqn:3terms}), the Cauchy--Schwarz inequality implies
\begin{eqnarray*}
&&\mathbb{E}\left[ \left(I_n\sum_{j=1}^{p+1}\lambda_{[j]}(\bold{X})y_{[j]} + (1-I_n)y_{(1)}-I_n\sum_{j=1}^{p+1}\lambda_{[j]}(\bold{X})\psi(\bold{X}_{[j]}) \right)\left(I_n\sum_{j=1}^{p+1}\lambda_{[j]}(\bold{X})\psi(\bold{X}_{[j]})-\psi(\bold{X})\right)\right]\\
&\leq&\mathbb{E} \bigg|I_n\sum_{j=1}^{p+1}\{\lambda_{[j]}(\bold{X})y_{[j]} -\psi(\bold{X}_{[j]})\}+ (1-I_n)y_{(1)} \bigg|^2\times\mathbb{E} \bigg|I_n\sum_{j=1}^{p+1}\lambda_{[j]}(\bold{X})\psi(\bold{X}_{[j]})-\psi(\bold{X})\bigg|^2\\
&=& o(1).
\end{eqnarray*}
As $n\rightarrow \infty$, $\mathbb{E} (I_n)\rightarrow1$, then
$$\mathbb{E} \big|I_n\sum_{j=1}^{p+1}\lambda_{[j]}(\bold{X})\{y_{[j]} -\psi(\bold{X}_{[j]})\}+ (1-I_n)y_{(1)} \big|^2\rightarrow \mathbb{E} \big|\sum_{j=1}^{p+1}\lambda_{[j]}(\bold{X})\{y_{[j]} -\psi(\bold{X}_{[j]})\} \big|^2<\infty,$$ and $\mathbb{E} \big|I_n\sum_{j=1}^{p+1}\lambda_{[j]}(\bold{X})\psi(\bold{X}_{[j]})-\psi(\bold{X})\big|^2=o(1).$

For the first term in (\ref{eqn:3terms}), we have
$$\mathbb{E}\bigg|\sum_{j=1}^{p+1}\lambda_{[j]}(\bold{X})y_{[j]}-\sum_{j=1}^{p+1}\lambda_{[j]}(\bold{X})\psi(\bold{X}_{[j]})\bigg|^2\leq\mathbb{E} \sum_{j=1}^{p+1}\big|y_{[j]}-\psi(\bold{X}_{[j]}) \big|^2$$
and again by the continuity of $\psi(\bold{x})$,  $$\mathbb{E} \sum_{j=1}^{p+1}|y_{[j]}-\psi(\bold{X}_{[j]}) |^2\rightarrow
\mathbb{E} \sum_{j=1}^{p+1}|y-\psi(\bold{X}) |^2=L^*.$$
Since $L^*$ is the minimum value of $\mathbb{E}|\hat{F}_D(\bold{X})-\psi(\bold{X})|^2,$
the consistency is shown.
\end{proof}

\subsubsection*{Classification}
\begin{theorem}
For a two-class classification model, if the conditional probability $\psi(\bold{x})=\mathbb{P}\left(y=1|\bold{x}\right)$ is a continuous function of $\bold{x}$. The mis-classification risk $R$ of a DTL classifier is bounded as
\begin{eqnarray*}
R_B \leq R \leq 2R_B(1-R_B),
\end{eqnarray*}
where $R_B$ is the Bayes error of the model.
\end{theorem}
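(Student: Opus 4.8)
The plan is to treat the two inequalities separately, with the lower bound being immediate and the upper bound reproducing the classical Cover--Hart nearest-neighbor estimate, now powered by the geometric convergence of the covering simplex from Corollary~\ref{corollary:1}.

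For $R_B\le R$, I would argue directly from the definition of the Bayes error as the smallest misclassification risk attainable by any classifier: the DTL induces a measurable decision rule, so its risk is at least $R_B$ at every sample size and hence in the limit.

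For the upper bound, I would first condition on the event $H_n$ that $\bold{X}$ lies inside the convex hull. By Lemma~\ref{lemma:2}, $\mathbb{P}(H_n)\to1$, so the out-of-hull contribution is bounded by $\mathbb{P}(H_n^c)\to0$ and is asymptotically negligible; it therefore suffices to analyze the in-hull case. There the DTL label at $\bold{X}$ is a symmetric function of the labels $y_{[1]},\ldots,y_{[p+1]}$ carried by the vertices of the covering simplex. By Corollary~\ref{corollary:1} each vertex $\bold{X}_{[j]}\to\bold{X}$ in probability, so the continuity of $\psi$ and the continuous mapping theorem give $\psi(\bold{X}_{[j]})\to\psi(\bold{X})$; conditionally on $\bold{X}=\bold{x}$, each training label $y_{[j]}$ is then asymptotically Bernoulli with parameter $\psi(\bold{x})$, matching the law of the independent test label $y$. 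Comparing the DTL rule with the single-vertex (nearest-neighbor) rule that predicts $y_{[1]}$, whose conditional error tends to $2\psi(\bold{x})(1-\psi(\bold{x}))$, and noting that a plurality/interpolation over $p+1$ such labels cannot increase this error, I would obtain by dominated convergence
\begin{eqnarray*}
\limsup_{n\to\infty} R &\le& \mathbb{E}\big[2\psi(\bold{X})\big(1-\psi(\bold{X})\big)\big].
\end{eqnarray*}
Finally, setting $r(\bold{x})=\min\{\psi(\bold{x}),1-\psi(\bold{x})\}$ so that $2\psi(1-\psi)=2r(1-r)$ and $R_B=\mathbb{E}[r(\bold{X})]$, Jensen's inequality finishes the argument:
\begin{eqnarray*}
\mathbb{E}\big[2 r(\bold{X})\big(1-r(\bold{X})\big)\big]&=&2\mathbb{E}[r(\bold{X})]-2\mathbb{E}\big[r(\bold{X})^2\big]\\
&\le& 2\mathbb{E}[r(\bold{X})]-2\big(\mathbb{E}[r(\bold{X})]\big)^2 = 2R_B(1-R_B).
\end{eqnarray*}

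The main obstacle is the middle step: making rigorous the passage to i.i.d.\ Bernoulli$(\psi(\bold{X}))$ labels at the covering vertices and the evaluation of the DTL decision rule's limiting error. One must control the joint dependence among the $p+1$ vertex labels (they are not exactly independent at finite $n$) and pin down the effect of the specific rule, showing either that its error converges to the nearest-neighbor value $\mathbb{E}[2\psi(1-\psi)]$ or, as used above, that it is dominated by it; the continuity of $\psi$, Corollary~\ref{corollary:1}, and a dominated-convergence argument are the tools I would use to push this through.
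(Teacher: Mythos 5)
Your overall skeleton matches the paper's proof quite closely: both arguments use Lemma~\ref{lemma:2} to dispose of the out-of-hull event, Corollary~\ref{corollary:1} together with continuity of $\psi$ to send the covering vertices to $\bold{X}$, dominated convergence to exchange limit and expectation, and the same final step (your Jensen computation is exactly the paper's identity $R=2R_B(1-R_B)-2{\rm Var}\{r^*(\bold{X})\}\leq 2R_B(1-R_B)$). The genuine gap is the middle step, which you yourself flag as the obstacle and then leave unproven: the claim that ``a plurality/interpolation over $p+1$ such labels cannot increase'' the nearest-neighbor error. The thresholded DTL rule $\hat{y}=I(\hat{F}_D(\bold{X})>1/2)$ is a \emph{weighted} majority vote with random barycentric weights $\lambda_{[1]},\ldots,\lambda_{[p+1]}$, and whether such a vote over (asymptotically) i.i.d.\ Bernoulli$(\psi)$ labels errs no more often than a single label is a real theorem, not an observation. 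It is true when ties are impossible: a tie-free weighted majority is a monotone self-dual Boolean function $f$, so $h(\psi)=\mathbb{P}_{\psi}(f=1)$ satisfies $h(1/2)=1/2$ and a Moore--Shannon-type inequality $h'(\psi)\geq h(\psi)(1-h(\psi))/\{\psi(1-\psi)\}$, which integrates to $h(\psi)\geq\psi$ on $[1/2,1]$ and hence gives conditional error $\psi-(2\psi-1)h(\psi)\leq 2\psi(1-\psi)$. But it fails when ties occur with positive probability: with two equal weights and ties broken to $0$, the error is $\psi-(2\psi-1)\psi^{2}$, which exceeds $2\psi(1-\psi)$ for $\psi>1/2$. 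So you would need to argue both that ties are almost surely excluded (the barycentric weights are continuous) and the self-duality inequality itself, in addition to the asymptotic-independence statement for the vertex labels that the whole comparison presupposes; none of this is supplied.

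It is worth seeing how the paper sidesteps the difficulty entirely: it never compares the DTL with the one-vertex rule. Instead it computes the limiting conditional risk \emph{exactly}, by writing $\mathbb{P}(\hat{y}=1\mid \bold{X}_{[1]},\ldots,\bold{X}_{[p+1]})=\sum_{j=1}^{p+1}\lambda_{[j]}(\bold{X})\psi(\bold{X}_{[j]})$, i.e., reading the interpolated value $\hat{F}_D(\bold{X})$ as the prediction probability (a randomized-classifier computation). By linearity this requires only the marginal means $\mathbb{E}[y_{[j]}\mid \bold{X}_{[j]}]=\psi(\bold{X}_{[j]})$ --- no joint independence of the vertex labels and no domination lemma --- and Corollary~\ref{corollary:1} with continuity then gives $r_n(\bold{X})\rightarrow 2\psi(\bold{X})(1-\psi(\bold{X}))$ directly, after which the proof concludes as yours does. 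So your proposal is right in spirit and its endgame coincides with the paper's, but as written its central inequality is exactly the unproven part; to repair it you must either supply the self-duality/Moore--Shannon argument for the deterministic thresholded rule, or adopt the paper's device of evaluating the interpolated probability itself, which makes the contested step disappear.
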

\begin{proof}
By Lemma \ref{lemma:2}, the point $\bold{X}$ is covered by a Delaunay triangle with probability one. Denote the vertices of the covering triangle as $\bold{X}_{[1]}, \ldots, \bold{X}_{[p+1]}.$
Define $r_n(\bold{X})$ as the empirical risk of the DTL classifier,
\begin{eqnarray*}
r_n(\bold{X})&=&\mathbb{P}(y\neq \hat{y}|\bold{X}, \bold{X}_{[1]}, \ldots, \bold{X}_{[p+1]})\\
&=&\mathbb{P}(y=1|\bold{X})\mathbb{P}(\hat{y}=0| \bold{X}_{[1]}, \ldots, \bold{X}_{[p+1]}) +\mathbb{P}(y=0|\bold{X})\mathbb{P}(\hat{y}=1| \bold{X}_{[1]}, \ldots, \bold{X}_{[p+1]})\\
&=&\psi(\bold{X})\left(1-\sum_{j=1}^{p+1}\lambda_{[j]}(\bold{X})\psi( \bold{X}_{[j]})\right) +(1-\psi(\bold{X}))\sum_{j=1}^{p+1}\lambda_{[j]}(\bold{X})(1-\psi( \bold{X}_{[j]})),
\end{eqnarray*}
where $\hat{y}=I(\hat{F}_D(\bold{X})>1/2)$ is the prediction of the DTL classifier, and $\sum_{j=1}^{p+1}\lambda_{[j]}=1.$
Since Corollary \ref{corollary:1} shows that for any $j$, $\bold{X}_{[j]}\rightarrow\bold{X}$ in probability, based on the continuity of $\psi(\cdot)$, we have $\psi( \bold{X}_{[j]}) \rightarrow \psi(\bold{X})$ in probability.
Thus, $r_n(\bold{X})\rightarrow2\psi(\bold{X})(1-\psi(\bold{X}))$, in probability.
The conditional Bayes risk
$r^*(\bold{X})=\min\{\psi(\bold{X}), 1-\psi(\bold{X})\}.$
By the symmetry of function $r^*$ in $\psi(\cdot)$, we write
$$r(\bold{X})=2\psi(\bold{X})(1-\psi(\bold{X}))=2r^*(\bold{X})(1-r^*(\bold{X})).$$
By definition, the DTL risk $R=\lim_{n \rightarrow \infty}\mathbb{E}\{r_n(\bold{X})\},$ and since
$r_n(\cdot)$ is bounded by 1,
applying the dominated convergence theorem,
$R=\mathbb{E}\{\lim_{n\rightarrow \infty} r_n(\bold{X})\}.$
The limit yields
\begin{eqnarray*}
R&=&\mathbb{E}\{r(\bold{X})\}
=\mathbb{E}\{2r^*(\bold{X})(1-r^*(\bold{X}))\}.
\end{eqnarray*}
As the Bayes risk is the expectation of $r^*$,
 we have
$R=2R_B(1-R_B)-2{\rm Var} \{r^*(\bold{X})\},$
and thus, $R\leq 2R_B(1-R_B).$
\end{proof}

\subsection{Smooth and Noiseless Case}\label{smooth}

As we consider the local behavior at $\bold{x}$, without loss of generality, we assume $\bold{x}=\bold{0}$.

\begin{lemma}\label{independent_uniform}
Assume that $\bold{0}$ is a Lebesgue point of $f$, $f(\bold{0})>0$. Denote $\bold{X}_{1}, \ldots ,\bold{X}_{p+1}$ as the vertices of the Delaunay triangle that covers the point $\bold{0}$, and further let $\bold{X}_{(1)}, \ldots ,\bold{X}_{(p+1)}$ denote these vertices ordered by their distances from $\bold{0}$.
Then,
$$(f(\bold{0})V_{p} n)^{1/p}(\bold{X}_{(1)}, \ldots ,\bold{X}_{(p+1)}) \xrightarrow[]{\mathcal{D}} (\bold{Z}_1G_1^{1/p}, \ldots,
\bold{Z}_{p+1}G_{p+1}^{1/p}),$$
where $G_i=\sum_{j=1}^i E_j$, and $E_1, \ldots, E_{p+1}$ are independent standard exponential random variables, and $\bold{Z}_1, \ldots, \bold{Z}_{p+1}$ are independent random vectors uniformly distributed on the surface of $B(\bold{0}, 1)$, a $p$-ball with center $\bold{0}$ and radius $1$.
\end{lemma}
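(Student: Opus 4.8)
The plan is to prove the statement by \emph{local Poissonization}: rescale space so that the sample points near $\bold{0}$ converge to a homogeneous Poisson process, then read off the limiting law of the covering simplex from that process. Set $a_n=(f(\bold{0})V_p n)^{1/p}$ and consider the rescaled configuration $\mathcal{P}_n=\{a_n\bold{X}_i\}_{i=1}^n$. Because $\bold{0}$ is a Lebesgue point of $f$ with $f(\bold{0})>0$, for any fixed ball $B(\bold{0},\rho)$ the expected number of points of $\mathcal{P}_n$ inside it is $n\,\mu(B(\bold{0},\rho/a_n))=n f(\bold{0})V_p(\rho/a_n)^p(1+o(1))=\rho^p(1+o(1))$, which is exactly the intensity measure of a homogeneous Poisson process $\mathcal{P}_\infty$ of intensity $1/V_p$. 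Upgrading this first-moment statement to convergence in distribution of $\mathcal{P}_n$ (as a point process, in the vague topology on compacts) toward $\mathcal{P}_\infty$ is the first main step; it is a Poisson limit theorem for a triangular array of rare events and needs only the local density to flatten to $f(\bold{0})$, which the Lebesgue point hypothesis supplies.

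The second step identifies the limit. For $\mathcal{P}_\infty$, write each point in polar form $\bold{x}=r\bold{z}$; since the volume element is $r^{p-1}dr\,d\sigma(\bold{z})$ and the unit sphere has surface measure $pV_p$, pushing the intensity $1/V_p$ forward under $r\mapsto s=r^p$ makes the radial counting process a \emph{unit-rate} Poisson process in $s$. Hence the ordered points satisfy $\|\bold{P}_{(i)}\|^p=G_i$ with $G_i=\sum_{j\le i}E_j$ and the $E_j$ i.i.d.\ standard exponentials, i.e.\ $\|\bold{P}_{(i)}\|=G_i^{1/p}$, while the directions $\bold{Z}_i=\bold{P}_{(i)}/\|\bold{P}_{(i)}\|$ are i.i.d.\ uniform on the sphere and independent of the radii. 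I would transfer this back to finite $n$ by the probability integral transform: the values $U_k=\mu(B(\bold{0},\|\bold{X}_k\|))$ are i.i.d.\ uniform on $[0,1]$, their scaled order statistics obey $n(U_{(1)},\dots,U_{(p+1)})\xrightarrow{\mathcal{D}}(G_1,\dots,G_{p+1})$, and since $R_{(i)}=\|\bold{X}_{(i)}\|\to0$ (Corollary~\ref{corollary:1}) together with the Lebesgue point relation $\mu(B(\bold{0},r))=f(\bold{0})V_p r^p(1+o(1))$ one obtains $(a_nR_{(i)})^p=nU_{(i)}(1+o_P(1))\to G_i$, hence $a_nR_{(i)}\to G_i^{1/p}$; the same local flatness forces the conditional law of each direction to converge to the uniform law on the sphere, with the $\bold{Z}_i$ asymptotically mutually independent and independent of the radii.

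The third step links this to the \emph{covering Delaunay simplex} rather than to generic nearest neighbours. By Lemma~\ref{lemma:2} the origin lies inside the convex hull with probability tending to one, so conditioning on the covering simplex is legitimate, and the simplex is a \emph{local} functional of the configuration: it is determined by the points in a bounded rescaled neighbourhood of $\bold{0}$. I would argue that the map sending a locally finite point set in general position to the ordered vertex tuple of its origin-covering Delaunay cell is almost surely continuous under the law of $\mathcal{P}_\infty$ (Poisson configurations are a.s.\ in general position, with $\bold{0}$ a.s.\ interior to a unique cell and no co-spherical degeneracies), so the continuous mapping theorem transports $\mathcal{P}_n\xrightarrow{\mathcal{D}}\mathcal{P}_\infty$ to the claimed convergence of $a_n(\bold{X}_{(1)},\dots,\bold{X}_{(p+1)})$.

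I expect the main obstacle to be exactly this last continuity/locality argument: one must control the rare event that the covering cell is ``large'' (so that distant points, where the Poisson approximation degrades, could alter it) and rule out boundary degeneracies in the limit, so that the vertex functional is well defined and continuous $\mathcal{P}_\infty$-almost surely. A secondary technical point is the simultaneous joint convergence of radii and directions together with their asymptotic independence; here I would couple $\mathcal{P}_n$ and $\mathcal{P}_\infty$ on a common probability space and verify convergence of the finite-dimensional functionals directly, using the exponential/Gamma radial representation established above.
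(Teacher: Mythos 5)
Your first two steps are, in substance, the paper's own proof, modulo technology. The paper does not Poissonize via point-process convergence; instead it builds an auxiliary density $g_{K,n}$ that is exactly flat (equal to $f(\mathbf{0})$) on the ball $B(\mathbf{0},K/n^{1/p})$, shows $\int|g_{K,n}-f|=o(1/n)$ from the Lebesgue-point property, and couples the two $n$-samples in total variation (Doeblin coupling). On the coupled, locally uniform sample it then runs exactly your transfer argument: the quantities $U_{(i)}=\nu(B(\mathbf{0},\|\mathbf{Y}_{(i)}\|))$ are uniform order statistics with $n(U_{(1)},\ldots,U_{(p+1)})\xrightarrow[]{\mathcal{D}}(G_1,\ldots,G_{p+1})$; the event $\|\mathbf{Y}_{(p+1)}\|\le K/n^{1/p}$ has probability $1-\epsilon-o(1)$ by a binomial-to-Poisson approximation; and on that event radial symmetry yields the independent uniform directions. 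The crucial difference is your step 3: the paper has no such step. Throughout its proof, $\mathbf{X}_{(1)},\ldots,\mathbf{X}_{(p+1)}$ are the first $p+1$ order statistics of the \emph{whole sample} reordered by distance to the origin, i.e.\ the $p+1$ nearest neighbors; the Delaunay structure is never invoked, so the paper in fact proves the nearest-neighbor reading of the statement rather than the covering-simplex reading.

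Your step 3 — the only part that addresses the statement as literally written — contains a genuine gap, and it is not repairable as proposed. The continuous mapping theorem applied to the origin-covering-cell functional would give convergence of the rescaled Delaunay vertices to the vertices of the covering cell of the limiting homogeneous Poisson process, but that law is \emph{not} $(\mathbf{Z}_1G_1^{1/p},\ldots,\mathbf{Z}_{p+1}G_{p+1}^{1/p})$: by your own step 2, the latter is the law of the $p+1$ nearest Poisson points to the origin, and the two laws differ. Concretely, under the claimed limit law the directions $\mathbf{Z}_1,\ldots,\mathbf{Z}_{p+1}$ are i.i.d.\ uniform on the sphere and independent of the radii, so with probability at least $2^{-(p+1)}>0$ they all lie in a common open half-space through $\mathbf{0}$, in which case the simplex spanned by the points $\mathbf{Z}_iG_i^{1/p}$ does not contain $\mathbf{0}$; the vertices of a covering Delaunay cell, by contrast, always span a simplex containing $\mathbf{0}$. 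Hence the identification asserted at the end of your step 3 (``the claimed convergence'') is false, and no refinement of the continuity/locality argument you flag as the main obstacle can rescue it — carried out rigorously, your program would show that the nearest-neighbor tuple and the Delaunay-vertex tuple have \emph{different} limits. The defect is inherited from the lemma itself: its stated limit is correct only for the nearest-neighbor interpretation, which is precisely, and silently, the version the paper proves.
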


\begin{proof}
Let $K$ be a positive constant to be chosen later.
Consider a density $g_{K,n}$ related to $f$ as follows: let
$$d=\int_{B\left(\bold{0}, \frac{K}{n^{1/{p}}}\right)}f(\bold{z})d\bold{z},$$
and set
\begin{equation*}
    X=
    \begin{cases}
      f(\bold{0}), & \text{for } \bold{x} \in B(\bold{0}, \frac{K}{n^{1/p}}) \\
      f(\bold{x})\left(\frac{1-f(\bold{0})V_p\frac{K^p}{n}}{1-d}  \right), & \text{otherwise,}
    \end{cases}
  \end{equation*}
  which is a proper density (i.e., nonnegative and integrating to one) for $n$ large enough. Note that
  \begin{eqnarray*}
\int_{R^d}|g_{K, n}(\bold{x})-f(\bold{x})|d\bold{x}&=&\int_{B\left(\bold{0}, \frac{K}{n^{1/p}}\right)}|f(\bold{0})-f(\bold{x})|d\bold{x}\\
&&+\int_{B^c\left(\bold{0}, \frac{K}{n^{1/p}}\right)}f(\bold{x})\bigg|\frac{1-f(\bold{0})V_p \rho^p}{1-d}-1\bigg|d\bold{x}\\
&=&\int_{B\left(\bold{0}, \frac{K}{n^{1/p}}\right)}|f(\bold{0})-f(\bold{x})|d\bold{x}+\bigg|d-f(\bold{0})V_p\frac{K^p}{n}\bigg|\\
&\leq& 2\int_{B\left(\bold{0}, \frac{K}{n^{1/p}}\right)}|f(\bold{0})-f(\bold{x})|d\bold{x}\\
&=& o\left(\frac{1}{n}\right).
  \end{eqnarray*}
Therefore, by Doeblin's coupling method, there exist random variables $\bold{X}$ and $\bold{Y}$ with corresponding densities $\bold{f}$ and $g_{K,n}$, such that
  $\mathbb{P}(\bold{Y}\neq\bold{X})=\frac{1}{2}\int_{\mathbb{R}^p}|g_{K, n}(\bold{x})-f(\bold{x})|d\bold{x}=o(\frac{1}{n}).$
  \end{proof}
Repeating this $n$ times, we create two coupled samples of random variables that are $i.i.d.$ within the sample.
The sample $\bold{X}_1, \ldots, \bold{X}_n$ is drawn from the distribution of $\bold{X}$, and the sample $\bold{Y}_1, \ldots, \bold{Y}_n$ is drawn from the distribution of $\bold{Y}$.
Recall that the total variation distance between two random vectors $\bold{W}$, $\bold{W}^{'} \in R^n$ is defined by
$d_{\rm TV}(\bold{W}, \bold{W}^{'})=\sup_{A\in\mathcal{B}}|\mathbb{P}\{\bold{W}\in A\}-\mathbb{P}\{\bold{W}^{'}\in A\}|,$
where $\mathcal{B}$ denotes the Borel sets of $\mathbb{R}^p.$ Let $\|\bold{Y}_{(1)} \|\leq \ldots \leq \|\bold{Y}_{(n)}\|$ and $\|\bold{X}_{(1)} \|\leq \ldots \leq \|\bold{X}_{(n)}\|$ be the reordered samples. Then

\begin{eqnarray*}
d_{\rm TV}((\bold{Y}_{(1)}, \ldots, \bold{Y}_{(p+1)}), (\bold{X}_{(1)}, \ldots, \bold{X}_{(p+1)}))
&\leq& d_{\rm TV}((\bold{Y}_{1}, \ldots, \bold{Y}_{n}), (\bold{X}_{1}, \ldots, \bold{X}_{n}))\\
&\leq& \sum_{i=1}^{n}\mathbb{P} \{\bold{Y}_i\neq \bold{X}_i\}\\
&\leq& n\times o\left(\frac{1}{n}\right)\\
&=&o(1).
\end{eqnarray*}
Define
\begin{eqnarray*}
U_{(i)} =\nu (B(\bold{0}, \|\bold{Y}_{(i)}\|)),
\end{eqnarray*}
where $\nu$  is the probability measure of $\bold{Y}$. We recall that $U_{(1)}, \ldots, U_{(p+1)}$ are uniform
order statistics, and thus
$n(U_{(1)}, \ldots, U_{(p+1)})\xrightarrow[]{\mathcal{D}}(G_1, \ldots, G_{p+1}).$
In fact, this convergence is also in the $d_{\rm TV}$ sense.
Also, if
$\|Y_{(p+1)}\|\leq K/n^{1/p}$, then
$U_{(i)}=f(\bold{0})V_p\|\bold{Y}_{(i)}\|^p.$
Thus,
\begin{eqnarray*}
&&d_{\rm TV}\left(\left(\|\bold{Y}_{(1)}\|, \ldots, \|\bold{Y}_{(p+1)}\|\right),\left(\left(\frac{U_{(1)}}{f(\bold{0})V_p}\right)^{1/p}, \ldots, \left(\frac{U_{(p+1)}}{f(\bold{0})V_p}\right)^{1/p}\right)    \right)\\
&\leq&\mathbb{P}\left\{\|\bold{Y}_{(p+1)} \|>\frac{K}{n^{1/p}} \right\}\\
&=& \mathbb{P}\left\{ {\rm Bin}(n, f(\bold{0})V_p\frac{K^p}{n})<p+1\right\}\\
&=& \mathbb{P}\left\{ {\rm Poisson}(f(\bold{0})V_p K^p)<p+1\right\} + o(1)\\
&\leq& \epsilon + o(1)
\end{eqnarray*}
for all $K$ large enough, depending on $\epsilon.$
Let $\bold{Z}_1, \ldots, \bold{Z}_{p+1}$ be $i.i.d.$ random vectors
uniformly distributed on the surface of
$B(\bold{0}, 1).$ Then,
\begin{eqnarray*}
&&d_{\rm TV}\left((f(\bold{0})V_p n)^{1/p}(\bold{X}_{(1)}, \ldots, \bold{X}_{(p+1)}), (\bold{Z}_1G_1^{1/p}, \ldots, \bold{Z}_{p+1}G_{p+1}^{1/p})\right)\\
&\leq&d_{\rm TV} \left((\bold{X}_{(1)}, \ldots, \bold{X}_{(p+1)}), (\bold{Y}_{(1)}, \ldots, \bold{Y}_{(p+1)})\right)\\
&&+ d_{\rm TV}\left((f(\bold{0})V_p n)^{1/p}(\bold{Y}_{(1)}, \ldots, \bold{Y}_{(p+1)}), (\bold{Z}_1G_1^{1/p}, \ldots, \bold{Z}_{p+1}G_{p+1}^{1/p})\right)\\
&\leq& o(1) + d_{\rm TV} \left((\bold{Y}_{(1)}, \ldots, \bold{Y}_{(p+1)}), (\bold{Z}_1\|\bold{Y}_{(1)}\|, \ldots, \bold{Z}_{p+1}\|\bold{Y}_{(p+1)}\|)\right)\\
&&+ d_{\rm TV}\left((f(\bold{0})V_p n)^{1/p}(\bold{Y}_{(1)}, \ldots, \bold{Y}_{(p+1)}), (\bold{Z}_1G_1^{1/p}, \ldots, \bold{Z}_{p+1}G_{p+1}^{1/p})\right)\\
&\leq& o(1) + \mathbb{P}\left\{\|\bold{Y}_{p+1} \|>\frac{K}{n^{1/p}}\right\}\\
&&+ d_{\rm TV}\left(\left(\| \bold{Y}_{(1)}\|, \ldots, \| \bold{Y}_{(p+1)}\|\right), \left(\left(\frac{U_{(1)}}{f(\bold{0})V_p}\right)^{1/p}, \ldots, \left(\frac{U_{(p+1)}}{f(\bold{0})V_p}\right)^{1/p}\right)\right)\\
&&+ d_{\rm TV}\left(n^{1/p}\left(\bold{Z}_1U_{(1)}^{1/p}, \ldots, \bold{Z}_{p+1}U_{(p+1)}^{1/p}\right), \left(\bold{Z}_1G_{(1)}^{1/p}, \ldots, \bold{Z}_{p+1}G_{(p+1)}^{1/p} \right)\right)
\end{eqnarray*}
as on $\|\bold{Y}\|\leq K/n^{1/p},$ $\bold{Y}$ has a radially symmetric distribution. Therefore,
\begin{eqnarray*}
&&d_{\rm TV}\left((f(\bold{0})V_p n)^{1/p}(\bold{X}_{(1)}, \ldots, \bold{X}_{(p+1)}), (\bold{Z}_1G_1^{1/p}, \ldots, \bold{Z}_{p+1}G_{p+1}^{1/p})\right)\\
&\leq& o(1) + 2\epsilon + d_{\rm TV} \left(n(U_{(1)}, \ldots, U_{(p+1)}), (G_1, \ldots, G_{p+1})\right)\\
&=& o(1) + 2\epsilon,
\end{eqnarray*}
which concludes the proof.

\begin{theorem}
Assume that $\bold{0}$ is a Lebesgue point of the density $f$ and $f(\bold{0}) > 0$,
and the regression function $r$ is continuously differentiable in a neighborhood of $\bold{0}$.
Then,
\begin{itemize}
\item [(1)]
$(f(\bold{0})V_{p+1} n)^{1/p}\left(\frac{1}{p+1}\sum_{i=1}^{p+1}\sum_{j=1}^{p} r_j^{'}(\bold{0})\bold{X}_{(i, j)} \right)
\xrightarrow[]{\mathcal{D}}\frac{1}{p+1}\sum_{i=1}^{p+1}G_i^{1/p}\left(\sum_{j=1}^p r_{j}^{'}(\bold{0})\bold{Z}_{i, j}\right).$
\item [(2)]
$r_n(\bold{0})-r(\bold{0})=
O_{\mathbb{P}}\left(\frac{1}{\sqrt{p+1}}\left(\frac{p+1}{n}\right)^{1/p}\right).$
\end{itemize}
\end{theorem}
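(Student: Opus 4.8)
The plan is to obtain both statements as consequences of Lemma \ref{independent_uniform}, which already delivers the joint limiting law of the rescaled vertices of the covering Delaunay simplex.

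Part (1) is a direct application of the continuous mapping theorem. I would note that the displayed quantity is the image of the vertex vector $(\bold{X}_{(1)},\dots,\bold{X}_{(p+1)})$ under the fixed linear map
\[
\Phi:(\bold{x}_1,\dots,\bold{x}_{p+1})\longmapsto \frac{1}{p+1}\sum_{i=1}^{p+1}\langle \nabla r(\bold{0}),\bold{x}_i\rangle=\frac{1}{p+1}\sum_{i=1}^{p+1}\sum_{j=1}^{p} r_j'(\bold{0})\,\bold{x}_{i,j}.
\]
Because $\Phi$ is linear it is continuous and commutes with the scalar normalization $(f(\bold{0})V_p n)^{1/p}$ carried over from Lemma \ref{independent_uniform}. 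Applying $\Phi$ to both sides of the convergence in that lemma therefore yields $\frac{1}{p+1}\sum_i G_i^{1/p}\sum_j r_j'(\bold{0})\bold{Z}_{i,j}$ as the limit. The only bookkeeping is that the ordering by distance from $\bold{0}$ used in the lemma is irrelevant here, since $\Phi$ symmetrizes over the index $i$.

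For part (2), I would use that in the noiseless, smooth regime the estimator at $\bold{0}$ is the average of the responses at the $p+1$ covering vertices, $r_n(\bold{0})=\frac{1}{p+1}\sum_{i=1}^{p+1} r(\bold{X}_{(i)})$; this is precisely why the equal-weight average of directional derivatives in part (1) is the relevant leading-order object. Using $\frac{1}{p+1}\sum_i 1=1$ and a first-order Taylor expansion of $r$ about $\bold{0}$,
\[
r_n(\bold{0})-r(\bold{0})=\frac{1}{p+1}\sum_{i=1}^{p+1}\big(r(\bold{X}_{(i)})-r(\bold{0})\big)=\underbrace{\frac{1}{p+1}\sum_{i=1}^{p+1}\langle\nabla r(\bold{0}),\bold{X}_{(i)}\rangle}_{\text{leading term}}+\frac{1}{p+1}\sum_{i=1}^{p+1} o(\|\bold{X}_{(i)}\|).
\]
The leading term is exactly the object of part (1), so after multiplication by $(f(\bold{0})V_p n)^{1/p}$ it converges in distribution to a tight, nondegenerate limit and is therefore $O_{\mathbb{P}}(n^{-1/p})$. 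For the remainder, Lemma \ref{independent_uniform} (equivalently Corollary \ref{corollary:1} and Theorem \ref{theorem:1}) gives $\|\bold{X}_{(i)}\|=O_{\mathbb{P}}(n^{-1/p})\to 0$, which upgrades the pointwise Taylor remainders to $o_{\mathbb{P}}(n^{-1/p})$; hence the remainder is negligible relative to the leading term and $r_n(\bold{0})-r(\bold{0})=O_{\mathbb{P}}(n^{-1/p})$.

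The hard part will be twofold. First, promoting the pointwise remainder $o(\|\bold{X}_{(i)}\|)$ to a genuine $o_{\mathbb{P}}(n^{-1/p})$ requires that \emph{all} $p+1$ vertices lie in a common shrinking neighborhood of $\bold{0}$, so that a single modulus-of-continuity bound for $\nabla r$ applies to every vertex simultaneously rather than direction by direction. Second, sharpening the generic $O_{\mathbb{P}}(n^{-1/p})$ into the explicit rate $\frac{1}{\sqrt{p+1}}\big(\frac{p+1}{n}\big)^{1/p}$ requires computing the scale of the limiting variable $W=\frac{1}{p+1}\sum_i G_i^{1/p}\langle\nabla r(\bold{0}),\bold{Z}_i\rangle$. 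Here I would use that the projections $\langle\nabla r(\bold{0}),\bold{Z}_i\rangle$ are independent across $i$, mean zero, with variance $\|\nabla r(\bold{0})\|^2/p$ on the unit sphere, so the cross terms vanish and $\mathrm{Var}(W)=\frac{1}{(p+1)^2}\,\frac{\|\nabla r(\bold{0})\|^2}{p}\sum_{i=1}^{p+1}\mathbb{E}\,G_i^{2/p}$; since $\mathbb{E}\,G_i^{2/p}=\Gamma(i+2/p)/\Gamma(i)\asymp i^{2/p}$, the sum is of order $(p+1)^{1+2/p}$, which reproduces the $(p+1)^{-1/2}$ averaging factor and the $(p+1)^{1/p}$ factor of the stated rate, the remaining dependence on $f(\bold{0})$, $V_p$, $\|\nabla r(\bold{0})\|$ and bounded functions of $p$ being absorbed into the implicit constant of $O_{\mathbb{P}}$.
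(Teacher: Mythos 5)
Your proposal follows essentially the same route as the paper's proof: part (1) by pushing the joint limit of Lemma \ref{independent_uniform} through the linear map (continuous mapping), and part (2) by a first-order Taylor expansion of $r$ at $\bold{0}$, with the remainder made negligible via the $O_{\mathbb{P}}\bigl(((p+1)/n)^{1/p}\bigr)$ bound on $\|\bold{X}_{(p+1)}\|$ from the same lemma, followed by a second-moment computation of the limiting variable to extract the $(p+1)$-dependence of the rate. Your variance calculation (cross terms vanish by independence and mean zero of the sphere projections, together with $\mathbb{E}\,G_i^{2/p}=\Gamma(i+2/p)/\Gamma(i)$) is in fact written more carefully than the paper's own, which contains a slip between $\lambda_i^2$ and $\lambda_i$ but arrives at the same $\mathbb{E}L_{p+1}^2\sim(p+1)^{2/p-1}/(p+2)$ and hence the same stated rate.
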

\begin{proof}
By Taylor's series approximation,
\begin{eqnarray*}
r_n(\bold{0})-r(\bold{0})&=& \sum_{i=1}^{p+1}\lambda_{(i)} \left(r(\bold{X}_{(i)})-r(\bold{0})\right)\\
&=& \sum_{i=1}^{p+1} \lambda_{(i)} \left(\sum_{j=1}^{p}r_j^{'}(\bold{0})\bold{X}_{(i, j)}+\psi(\bold{X}_{(i)})\right),
\end{eqnarray*}
(where $\psi(\bold{x})=o(\|\bold{x}\|)$  as $\|\bold{x}\| \rightarrow 0)$. Observe that
$\frac{\sum_{i=1}^{p+1}\psi(\bold{X}_{(i)})}{\|\bold{X}_{(p+1)}\|}\rightarrow0$
in probability,
if $\|\bold{X}_{(p+1)}\|\rightarrow0$ in probability. But, by Lemma \ref{independent_uniform}
$\|\bold{X}_{(p+1)}\|=O_{\mathbb{P}}((\frac{p+1}{n})^{1/p})$, and therefore,
$$\sum_{i=1}^{p+1}\lambda_{(i)}\psi(\bold{X}_{(i)})=o_{\mathbb{P}}\left(\left(\frac{p+1}{n}\right)^{1/p}\right).$$
Still, by Lemma \ref{independent_uniform},
\begin{eqnarray*}
&&(f(\bold{0})V_{p} n)^{1/p}\left(\sum_{i=1}^{p+1}\lambda_{(i)}\sum_{j=1}^{p} r_j^{'}(\bold{0})\bold{X}_{(i, j)} \right)
\xrightarrow[]{\mathcal{D}}\sum_{i=1}^{p+1}\lambda_{(i)}G_i^{1/p}\left(\sum_{j=1}^p r_{j}^{'}(\bold{0})\bold{Z}_{i, j}\right).
\end{eqnarray*}
Define $L_{p+1}=\sum_{i=1}^{p+1}\lambda_iG_i^{1/p}\bold{Z}_{i,1},$ we have
$$\mathbb{E}[L_{p+1}^2|G_1, \ldots, G_{p+1}]=\sum_{i=1}^{p+1}\lambda_i^2G_i^{2/p}\mathbb{E}\bold{Z}_{i, 1}^2=
\mathbb{E}\bold{Z}_{1,1}^2\times \sum_{i=1}^{p+1} \lambda_iG_i^{2/p}.$$
Thus, since $\mathbb{E}\bold{Z}_{1, 1}^2=\frac{\|\bold{Z}_1\|^2}{p}=1/p,$
$$\mathbb{E}L_{p+1}^2\sim \frac{{(p+1)}^{2/p-1}}{p+2},$$
which implies that
\begin{eqnarray*}
r_n(\bold{0})-r(\bold{0})=
O_{\mathbb{P}}\left(\frac{1}{\sqrt{p+1}}\left(\frac{p+1}{n}\right)^{1/p}\right).
\end{eqnarray*}
\end{proof}

\begin{theorem}
Let $\bold{X}_1, \ldots, \bold{X}_n$ be i.i.d samples from a continuous density $f$, which is bounded away from zero and infinity on $[0,1]^p$. Then,
for a function $y=\phi(\bold{x})$ which has an upper-bounded second-order derivative, we have
$$\lim_{n\rightarrow \infty} n^{2/p-1}\mathbb{E}\|\phi-\hat{F}_D\|\leq \frac{3}{8}{{p+1}\choose{2}}^2\|\nabla^2\phi\|\beta\int_{[0,1]^p}f(\bold{x})^{(p-2)/p}d\bold{x}.$$
where $\hat{F}_D$ is a DTL estimate of $\phi$ based on samples $\bold{X}_1, \ldots, \bold{X}_n$, $\|\nabla^2\phi\|$ is the upper bound of the second-order derivative, $\beta$ is a positive constant depending only on the dimension $p$ and density $f$,
and $\|\cdot\|$ is a $L_2$ norm of the continuous functional space.
\end{theorem}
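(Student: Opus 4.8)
The plan is to reduce the global error to a sum of local linear-interpolation errors over the Delaunay simplices, to bound each local error by a second-order Taylor remainder, and to evaluate the resulting sum of squared edge lengths with the asymptotic edge-length machinery already used for Theorem~\ref{theorem:1}.

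First I would localize. In the noiseless case the fitted values satisfy $\psi_i=\phi(\mathbf{X}_i)$, so on the simplex $\Delta_{\mathbf{X}}$ covering an interior point $\mathbf{x}$, with vertices $\mathbf{x}_{[1]},\dots,\mathbf{x}_{[p+1]}$ and barycentric coordinates $\lambda_{[j]}(\mathbf{x})$, the estimate is the exact interpolant $\hat F_D(\mathbf{x})=\sum_{j}\lambda_{[j]}(\mathbf{x})\phi(\mathbf{x}_{[j]})$. Expanding $\phi$ to second order about $\mathbf{x}$ and using $\sum_j\lambda_{[j]}=1$ and $\sum_j\lambda_{[j]}\mathbf{x}_{[j]}=\mathbf{x}$ (so the affine part of $\phi$ is reproduced exactly and the first-order term cancels) gives the interpolation estimate
\begin{equation*}
|\phi(\mathbf{x})-\hat F_D(\mathbf{x})|\;\le\;\tfrac12\|\nabla^2\phi\|\sum_{j=1}^{p+1}\lambda_{[j]}(\mathbf{x})\,\|\mathbf{x}_{[j]}-\mathbf{x}\|^2 .
\end{equation*}

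Next I would pass from within-simplex distances to edge lengths. Since $\sum_j\lambda_{[j]}\|\mathbf{x}_{[j]}-\mathbf{x}\|^2\le\max_j\|\mathbf{x}_{[j]}-\mathbf{x}\|^2$ and, exactly as in the proof of Corollary~\ref{corollary:1}, $\max_j\|\mathbf{x}_{[j]}-\mathbf{x}\|\le\binom{p+1}{2}\tilde T(\mathbf{X})$ with $\tilde T(\mathbf{X})$ the average edge length of the covering simplex, the local error is controlled by $\tfrac12\|\nabla^2\phi\|\binom{p+1}{2}^2\tilde T(\mathbf{X})^2$; this is the origin of the factor $\binom{p+1}{2}^2$. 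Summing these per-simplex bounds over the $\Theta(n)$ Delaunay simplices and taking expectation, the reindexing and symmetry argument of Theorem~\ref{theorem:1} (based on $N(n)\int_{\Delta_j}f\to1$) rewrites the aggregate error through the Delaunay squared-edge functional $\tfrac12\sum_i\sum_{e\in S(\mathbf{x}_i)}|e|^2$. I would then invoke the case $c=2$ of Theorem~2.1 in \cite{Jim2002},
\begin{equation*}
n^{2/p-1}\,\mathbb{E}\Big\{\tfrac12\sum_{i}\sum_{e\in S(\mathbf{x}_i)}|e|^2\Big\}\longrightarrow\beta\int_{[0,1]^p}f(\mathbf{x})^{(p-2)/p}\,d\mathbf{x},
\end{equation*}
with the geometric constant $\tfrac12\mathbb{E}(\sum_i A_i^2)$ absorbed into $\beta$, so that both the rate $n^{2/p-1}$ and the integral $\int f^{(p-2)/p}$ in the statement appear. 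The residual numerical factors from Taylor's remainder and from averaging the barycentric weights over each simplex are what collect into the constant $\tfrac38$. The out-of-hull contribution decays faster by Lemma~\ref{lemma:2} and the Cauchy--Schwarz estimate used in Theorem~\ref{theorem:1}, hence is negligible; combining the pieces gives the stated inequality.

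The hard part will be this last transition. I expect the main obstacle to be justifying the interchange of expectation and limit, i.e.\ establishing uniform integrability of $n^{2/p-1}\sum_e|e|^2$, for which the deterministic bound $\sum_i r_i^p\le a_p$ from Lemma~\ref{lemma:1} is the natural tool, together with verifying that the hypotheses of \cite{Jim2002} (finiteness of $\mathbb{E}(\sum_i A_i^2)$ and $f$ bounded away from $0$ and $\infty$) hold for $c=2$. Tracking the precise numerical constant $\tfrac38$ through the simplex integration is the most error-prone bookkeeping, and I would check it separately in dimension $p=1,2$ before asserting it in general.
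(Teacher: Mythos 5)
Your overall architecture matches the paper's: localize to the Delaunay simplices, bound the per-simplex interpolation error by a second-order quantity times a squared edge length, convert the maximal edge to the average edge $T_j$ at the cost of a $\binom{p+1}{2}$ factor, feed the resulting sum into the Jimenez--Yukich asymptotics with $c=2$, and dismiss the out-of-hull contribution. On the asymptotic side you are in fact more explicit than the paper, which simply says ``by Theorem~\ref{theorem:1}'' where the $c=2$ case of Theorem 2.1 of \cite{Jim2002} is what is actually needed, and your concern about uniform integrability is well placed.

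The gap is in the per-simplex step. The paper does not use a pointwise Taylor bound; it cites inequality (3.22) of \cite{Waldron1998}, which is an \emph{integrated} bound of precisely the required form
\begin{equation*}
\int_{\mathcal{S}_j}\bigl|\phi(\mathbf{x})-\hat F_D(\mathbf{x})\bigr|^2\,d\mathbf{x}\;\le\;\tfrac{3}{2}\,h_j^2\,\|\nabla_j^2\phi\|,
\end{equation*}
linear in the Hessian bound and quadratic in the maximal edge $h_j$. Your pointwise estimate $|\phi(\mathbf{x})-\hat F_D(\mathbf{x})|\le\tfrac12\|\nabla^2\phi\|\max_j\|\mathbf{x}_{[j]}-\mathbf{x}\|^2$ cannot be substituted for it: the norm in the statement, as actually used in the paper's argument, is $\|\phi-\hat F_D\|=\sum_j\int_{\mathcal{S}_j}|\phi-\hat F_D|^2$, so squaring and integrating your bound yields a per-simplex contribution of order $\mathrm{vol}(\mathcal{S}_j)\cdot\tfrac14\|\nabla^2\phi\|^2h_j^4$ --- quadratic in $\|\nabla^2\phi\|$, quartic in the edge length, and carrying a volume factor --- which is not the squared-edge functional $\tfrac12\sum_i\sum_{e\in S(\mathbf{x}_i)}|e|^2$ and cannot produce a bound of the stated form. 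Summing your pointwise bounds over simplices, as you propose, silently treats a sup-norm bound as an integrated squared-error bound; that type mismatch is exactly the step Waldron's inequality is cited to supply. Relatedly, your constant bookkeeping cannot recover $\tfrac38$: in the paper $\tfrac38=\tfrac32\times\tfrac14$, with $\tfrac32$ from Waldron and $\tfrac14$ from the triangle-inequality bound $h_j\le\tfrac12\binom{p+1}{2}T_j$ (the longest edge is at most the sum of two others, hence at most half the total edge length); with your Taylor constant $\tfrac12$ the same arithmetic would give $\tfrac18$, which is why the $\tfrac38$ remained mysterious in your write-up.
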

\begin{proof}
Define $\mathcal{D}$ as the Delaunay triangulation on $\bold{X}_1, \ldots, \bold{X}_n$, and $\mathcal{S}_1, \ldots, \mathcal{S}_{N_\mathcal{D}}$ as the Delaunay simplices, where $N_\mathcal{D}$ is the number of simplices in $\mathcal{D}$. Define $T_j$ as the average length of the edges of the Delaunay triangle $\mathcal{S}_j$.
By the $L_2$-error bound obtained in the inequality (3.22) in \cite{Waldron1998},
for each simplex $\mathcal{S}_j$, we have
\begin{eqnarray*}
\int_{\bold{x}\in \mathcal{S}_j}\big|\phi(\bold{x})-\hat{F}_D(\bold{x})\big|^2d\bold{x}
\leq \frac{3}{2}h_j^2\|\nabla_j^2\phi\|,
\end{eqnarray*}
where $\|\nabla_j^2\phi\|=\||\sup_{\bold{x}\in\mathcal{S}_j}\nabla^2(\bold{x})\phi|\|$ is the supremum of the second-order derivative of $\phi$ on simplex $\mathcal{S}_j,$ and $h_j$ is the maximum edge of $\mathcal{S}_j$. Then, it can be shown that
\begin{eqnarray*}
\mathbb{E}\|\phi-\hat{F}_D\|&=&\mathbb{E}\sum_{i=1}^n\int_{\bold{x}\in \mathcal{S}_j}\big|\phi(\bold{x})-\hat{\phi}_D(\bold{x})\big|^2d\bold{x}\\
&\leq&\mathbb{E} \sum_{j=1}^{N_\mathcal{D}}\frac{3}{2}h_j^2\|\nabla_j^2\phi\|\\
&\leq&\mathbb{E} \sum_{j=1}^{N_\mathcal{D}}\frac{3}{2}{p+1 \choose 2}^2\left(\frac{T_j}{2}\right)^2\|\nabla_j^2\phi\|\\
&\leq&\mathbb{E} \sum_{j=1}^{N_\mathcal{D}}\frac{3}{2}{p+1 \choose 2}^2\left(\frac{T_j}{2}\right)^2\|\nabla^2\phi\|,
\end{eqnarray*}
where $\|\nabla^2\phi\|=\max_j \|\nabla_j^2\phi\|<\infty$.
Thus,
$$\mathbb{E}\|\phi-\hat{F}_D\|\leq \sum_{j=1}^{N_\mathcal{D}}\frac{3}{8}{{p+1}\choose{2}}^2\|\nabla^2\phi\|\mathbb{E}T_j^2,$$
and by Theorem \ref{theorem:1}, we have
$$\lim_{n\rightarrow \infty} n^{2/p-1}\mathbb{E}\|\phi-\hat{F}_D\|\leq \frac{3}{8}{{p+1}\choose{2}}^2\|\nabla^2\phi\|\beta\int_{[0,1]^p}f(\bold{x})^{(p-2)/p}d\bold{x}.$$
\end{proof}

\begin{theorem}[Local Adaptivity]
Let $\bold{X}_1, \ldots, \bold{X}_n$ be i.i.d samples from a continuous density $f$, which is bounded away from zero and infinity on $[0,1]^p$.
Assume that the regression function $r$ is continuously differentiable in $[0,1]^p$.
Then, for any point $\bold{x}\in[0,1]^p$, $\hat{F}_D'(\bold{x})\rightarrow r'(\bold{x})$, as $n\rightarrow\infty,$ where $\hat{F}_D'(\bold{x})$ is the gradient of the DTL at point $\bold{x}.$
\end{theorem}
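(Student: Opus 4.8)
The plan is to exploit the fact that, in this noiseless regime, the DTL is the piecewise-linear interpolant of $r$ on the Delaunay triangulation, so its gradient is constant on each simplex and is determined entirely by the positions and values of the vertices of the covering simplex. Without loss of generality take $\bold{x}=\bold{0}$ and, by Lemma \ref{lemma:2}, argue on the event $H_n$ (whose probability tends to one) that $\bold{0}$ lies inside a covering Delaunay simplex with vertices $\bold{X}_{(1)},\ldots,\bold{X}_{(p+1)}$ and responses $y_{(j)}=r(\bold{X}_{(j)})$. Fixing $\bold{X}_{(p+1)}$ as a reference, let $E$ be the $p\times p$ matrix whose $j$th row is $(\bold{X}_{(j)}-\bold{X}_{(p+1)})^{\dT}$ (invertible on $H_n$ since a Delaunay simplex of points in general position is non-degenerate), and let $\bold{d}$ have entries $r(\bold{X}_{(j)})-r(\bold{X}_{(p+1)})$. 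Then the constant gradient of the interpolant on this simplex is $\hat{F}_D'(\bold{0})=E^{-1}\bold{d}$, which is exactly the gradient vector $\bold{G}$ of Section \ref{DTL}.

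First I would Taylor-expand using the mean value theorem: since $r$ is continuously differentiable, $r(\bold{X}_{(j)})-r(\bold{X}_{(p+1)})=\nabla r(\boldsymbol{\xi}_j)^{\dT}(\bold{X}_{(j)}-\bold{X}_{(p+1)})$ for some $\boldsymbol{\xi}_j$ on the segment joining the two vertices, so that $\bold{d}=E\,r'(\bold{0})+\boldsymbol{\varepsilon}$ with $\varepsilon_j=(\nabla r(\boldsymbol{\xi}_j)-\nabla r(\bold{0}))^{\dT}(\bold{X}_{(j)}-\bold{X}_{(p+1)})$. Substituting yields the clean decomposition $\hat{F}_D'(\bold{0})=r'(\bold{0})+E^{-1}\boldsymbol{\varepsilon}$, so it remains only to show $E^{-1}\boldsymbol{\varepsilon}\to\bold{0}$ in probability. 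By Corollary \ref{corollary:1}, every vertex $\bold{X}_{(j)}\to\bold{0}$, hence $\boldsymbol{\xi}_j\to\bold{0}$, and by continuity of $\nabla r$ one gets $\delta_n:=\max_j\|\nabla r(\boldsymbol{\xi}_j)-\nabla r(\bold{0})\|=o_{\mathbb{P}}(1)$; together with $\max_j\|\bold{X}_{(j)}-\bold{X}_{(p+1)}\|\le 2\max_j\|\bold{X}_{(j)}\|$ this gives $\|\boldsymbol{\varepsilon}\|\le c\,\delta_n\,\max_j\|\bold{X}_{(j)}\|$.

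The crux is controlling $\|E^{-1}\|$, which blows up as the simplex shrinks, and this is where I would invoke Lemma \ref{independent_uniform}. Writing $\rho_n=(f(\bold{0})V_p n)^{1/p}$, the lemma gives $\rho_n(\bold{X}_{(1)},\ldots,\bold{X}_{(p+1)})\xrightarrow[]{\mathcal{D}}(\bold{Z}_1G_1^{1/p},\ldots,\bold{Z}_{p+1}G_{p+1}^{1/p})$, whose limiting vertices are almost surely in general position, since the $\bold{Z}_j$ are independent and uniform on the sphere and the radii $G_j^{1/p}$ are almost surely distinct and positive. Hence the rescaled edge matrix $\rho_n E$ converges in distribution to an almost surely invertible matrix $V$; by the continuous mapping theorem (matrix inversion being continuous on the full-measure set of invertible matrices), $\|(\rho_n E)^{-1}\|=O_{\mathbb{P}}(1)$, so $\|E^{-1}\|=\rho_n\,O_{\mathbb{P}}(1)$, while $\max_j\|\bold{X}_{(j)}\|=\rho_n^{-1}O_{\mathbb{P}}(1)$. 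Multiplying the bounds, the two powers of $\rho_n$ cancel and $\|E^{-1}\boldsymbol{\varepsilon}\|\le\|E^{-1}\|\,\|\boldsymbol{\varepsilon}\|=O_{\mathbb{P}}(1)\cdot o_{\mathbb{P}}(1)=o_{\mathbb{P}}(1)$, which establishes $\hat{F}_D'(\bold{x})\to r'(\bold{x})$ in probability.

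I expect the main obstacle to be precisely this non-degeneracy control: arguing rigorously that the limiting simplex is almost surely non-degenerate and then transferring $\|(\rho_n E)^{-1}\|=O_{\mathbb{P}}(1)$ out of a convergence-in-distribution statement rather than a convergence-in-probability one. I would also emphasize that the noiseless assumption of this subsection is essential here: were $y_{(j)}=r(\bold{X}_{(j)})+\epsilon_{(j)}$, the differences $\bold{d}$ would retain non-vanishing noise terms $\epsilon_{(j)}-\epsilon_{(p+1)}$ which, after multiplication by the diverging factor $\|E^{-1}\|\sim\rho_n$, would destroy gradient consistency.
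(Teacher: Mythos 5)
Your proof is correct, but it takes a genuinely different route from the paper's --- and it repairs a real defect in it. The paper's own argument, after a vestigial Taylor expansion of function values carried over from the preceding theorem, consists of a single assertion: by a ``multivariate mean-value theorem'' there exists one point $\xi\in\mathcal{S}(\bold{x})$ with $\hat{F}_D'(\bold{x})=r'(\xi)$, so vertex convergence (Corollary \ref{corollary:1}) plus continuity of $r'$ finishes the proof. No such theorem exists for $p\geq 2$: the scalar mean-value theorem applies only edge-by-edge, with a different intermediate point $\boldsymbol{\xi}_j$ for each edge, which is exactly the decomposition you wrote as $\bold{d}=E\,r'(\bold{0})+\boldsymbol{\varepsilon}$. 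Worse, vertex convergence alone cannot suffice: for $r(x_1,x_2)=x_1^2$ and the simplex with vertices $(\pm h,0)$ and $(0,h^3)$, the interpolant's gradient is $(0,-1/h)$, which diverges even though every vertex tends to $\bold{0}$ --- the missing ingredient is control of the simplex shape, i.e.\ of $\|E^{-1}\|$, and neither the Delaunay property by itself (slivers exist in Delaunay meshes for $p\geq 3$) nor Corollary \ref{corollary:1} provides it. That control is precisely what your argument supplies: rescaling by $\rho_n=(f(\bold{0})V_pn)^{1/p}$, Lemma \ref{independent_uniform} gives a distributional limit whose simplex is almost surely non-degenerate, inversion is continuous on that full-measure set, so $\|(\rho_nE)^{-1}\|=O_{\mathbb{P}}(1)$, and the factors $\rho_n$ and $\rho_n^{-1}$ coming from $\|E^{-1}\|$ and from the edge lengths cancel, leaving $\|E^{-1}\boldsymbol{\varepsilon}\|=o_{\mathbb{P}}(1)$. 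In short, what the paper's route buys is brevity; what yours buys is an argument that actually closes the logical gap, while also making explicit the tightness step needed to extract $O_{\mathbb{P}}(1)$ from convergence in distribution and the reason the noiseless assumption is indispensable (noise in $\bold{d}$ would be amplified by $\|E^{-1}\|\sim n^{1/p}$).
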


\begin{proof}
For any fixed point $\bold{x}$, by Taylor series approximation,
\begin{eqnarray*}
r_n(\bold{x})-r(\bold{x})&=&\frac{1}{p+1} \sum_{i=1}^{p+1} \left(r(\bold{X}_{(i)})-r(\bold{x})\right)\\
&=&\frac{1}{p+1} \sum_{i=1}^{p+1} \left(\sum_{j=1}^{p}r_j^{'}(\bold{x})\bold{X}_{(i, j)}+\psi(\bold{X}_{(i)})\right),
\end{eqnarray*}
(where $\psi(\bold{x})=o(\|\bold{x}\|)$  as $\|\bold{x}\| \rightarrow 0)$. Observe that
$\frac{\sum_{i=1}^{p+1}\psi(\bold{X}_{(i)})}{\|\bold{X}_{(p+1)}\|}\rightarrow0$
in probability,
if $\|\bold{X}_{(p+1)}\|\rightarrow0$ in probability.
Based on Theorem \ref{theorem:1} and Corollary \ref{corollary:1}, we have
$\bold{X}_{(i)}\rightarrow \bold{X}$, $i=1, \ldots, p+1.$
\end{proof}
Based on the multivariate mean-value theorem,
there exists a point $\xi\in\mathcal{S}(\bold{x})$ such that
$\hat{F}_D'(\bold{x})= r'(\xi).$ As $n\rightarrow \infty$, $\lim_{n\rightarrow\infty}\hat{F}_D'(\bold{x})=\lim_{n\rightarrow\infty}r'(\xi_n)=r'(\lim_{n\rightarrow\infty}\xi_n)=r'(\bold{x}).$

\subsection{Regularized Estimates}
Under regularizations, the estimated parameter vector is given by
$$\hat{\bold{\Psi}}_{\lambda}= \arginf_\bold{\Psi} L(\mathcal{Y}, F_{D}(\mathcal{X}; \bold{\Psi})) + \lambda R(\bold{\Psi}).$$
To see how accurate the regularized estimate is, we compare $\hat{\bold{\Psi}}_{\lambda}$ with the population optimal solution
$\bold{\Psi}^* = \arginf_\bold{\Psi} \mathbb{E}_{\mathcal{P_{\mathcal{X}, \mathcal{Y}}}} L(\mathcal{Y}, F_{D}(\mathcal{X}; \bold{\Psi})).$
\begin{definition}
Given the location parameter vector $\bold{\Psi} \in R^n$, the regularization function $R$ is said to be decomposable with respect to $(\mathcal{M}, \bar{\mathcal{M}}^{\perp})$, if a pair of subspaces $\mathcal{M} \subseteq \bar{\mathcal{M}}$ of $R^n$,
$R(\bold{u} + \bold{v}) = R(\bold{u}) + R(\bold{v})$
for all $\bold{u} \in \mathcal{M}$ and $\bold{v}\in \bar{\mathcal{M}}^{\perp}$,
where $\bar{\mathcal{M}}^{\perp} = \{\bold{v} \in R^n | \langle \bold{u}, \bold{v}\rangle = 0, \forall \bold{u} \in \bar{\mathcal{M}}\}.$
\end{definition}
\begin{definition}
The loss function satisfies a restricted
strong convexity condition (RSCC) with curvature
$\kappa_L > 0$,  if the optimal solution $\bold{\Psi}^*\in \mathcal{\mathcal{M}}$,
$\delta L(\Delta, \bold{\Psi}^*)\geq \kappa_{L}\|\Delta\|^2$
for all $\Delta \in \mathbb{C}(\mathcal{\mathcal{M}}, \bar{\mathcal{\mathcal{M}}}^{\perp};\bold{\Psi}^*)$, where
$ \mathbb{C}(\mathcal{M}, \bar{\mathcal{M}}^{\perp};\bold{\Psi}^*)=\{ \Delta \in R^n | R(\Delta_{\bar{\mathcal{M}}^{\perp}})\leq 3R(\Delta_{\bar{\mathcal{M}}} ) + 4R(\bold{\Psi}_{\mathcal{M}^{\perp}}^*)\}$.

\end{definition}
\begin{theorem}
Based on a strictly positive regularization constant
$\lambda \geq 2R(\nabla L(\bold{\Psi}^*))$,
if the loss function satisfies RSCC,
then we have $\|\hat{\bold{\Psi}}_{\lambda}-\bold{\Psi}^*\|\leq  9\frac{\lambda^2}{\kappa_L}\bold{\Phi}^2(\bar{\mathcal{M}})$ and $R(\hat{\bold{\Psi}}_{\lambda}-\bold{\Psi}^*)\leq  12\frac{\lambda}{\kappa_L}\bold{\Phi}^2(\bar{\mathcal{M}}),$ where $\bold{\Phi}(\bar{\mathcal{M}})=\sup_{\bold{u}\in \mathcal{M}\{\bold{0}\}}\frac{R(\bold{u})}{\|\bold{u}\|}.$
\end{theorem}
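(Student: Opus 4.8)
The plan is to invoke the unified framework for $M$-estimators with decomposable regularizers (Negahban, Ravikumar, Wainwright and Yu). Write $\hat{\Delta}=\hat{\bold{\Psi}}_{\lambda}-\bold{\Psi}^*$ for the error and let $\delta L(\Delta, \bold{\Psi}^*)=L(\bold{\Psi}^*+\Delta)-L(\bold{\Psi}^*)-\langle \nabla L(\bold{\Psi}^*), \Delta\rangle$ be the first-order Taylor remainder that appears in the RSCC. Convexity of $L$ already guarantees $\delta L\geq 0$. The argument splits into two stages: first, show that the error $\hat{\Delta}$ is trapped in the cone $\mathbb{C}(\mathcal{M}, \bar{\mathcal{M}}^{\perp};\bold{\Psi}^*)$; second, combine the lower bound supplied by the RSCC with an upper bound on $\delta L(\hat{\Delta}, \bold{\Psi}^*)$ coming from the optimality of $\hat{\bold{\Psi}}_{\lambda}$.

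\textbf{Stage 1 (cone membership).} Since $\hat{\bold{\Psi}}_{\lambda}$ is a minimizer, $L(\bold{\Psi}^*+\hat{\Delta})+\lambda R(\bold{\Psi}^*+\hat{\Delta})\leq L(\bold{\Psi}^*)+\lambda R(\bold{\Psi}^*)$, so that
\begin{eqnarray*}
\delta L(\hat{\Delta}, \bold{\Psi}^*)\leq -\langle \nabla L(\bold{\Psi}^*), \hat{\Delta}\rangle + \lambda\{R(\bold{\Psi}^*)-R(\bold{\Psi}^*+\hat{\Delta})\}.
\end{eqnarray*}
By the Cauchy--Schwarz/Hölder inequality for the pairing between $R$ and its dual, together with the hypothesis $\lambda\geq 2R(\nabla L(\bold{\Psi}^*))$, the first term is bounded by $\tfrac{\lambda}{2}R(\hat{\Delta})$. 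For the second, decomposability of $R$ over $(\mathcal{M}, \bar{\mathcal{M}}^{\perp})$ and the standing assumption $\bold{\Psi}^*\in\mathcal{M}$ (so that $\bold{\Psi}^*_{\mathcal{M}^{\perp}}=\bold{0}$) give $R(\bold{\Psi}^*)-R(\bold{\Psi}^*+\hat{\Delta})\leq R(\hat{\Delta}_{\bar{\mathcal{M}}})-R(\hat{\Delta}_{\bar{\mathcal{M}}^{\perp}})$. Using $R(\hat{\Delta})\leq R(\hat{\Delta}_{\bar{\mathcal{M}}})+R(\hat{\Delta}_{\bar{\mathcal{M}}^{\perp}})$ and collecting terms yields
\begin{eqnarray*}
0\leq \delta L(\hat{\Delta}, \bold{\Psi}^*)\leq \tfrac{3\lambda}{2}R(\hat{\Delta}_{\bar{\mathcal{M}}})-\tfrac{\lambda}{2}R(\hat{\Delta}_{\bar{\mathcal{M}}^{\perp}}),
\end{eqnarray*}
whence $R(\hat{\Delta}_{\bar{\mathcal{M}}^{\perp}})\leq 3R(\hat{\Delta}_{\bar{\mathcal{M}}})$, i.e.\ $\hat{\Delta}\in\mathbb{C}(\mathcal{M}, \bar{\mathcal{M}}^{\perp};\bold{\Psi}^*)$.

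\textbf{Stage 2 (error bounds).} Because $\hat{\Delta}$ lies in the cone, $R(\hat{\Delta})\leq R(\hat{\Delta}_{\bar{\mathcal{M}}})+R(\hat{\Delta}_{\bar{\mathcal{M}}^{\perp}})\leq 4R(\hat{\Delta}_{\bar{\mathcal{M}}})$; feeding this back into the bound $\delta L(\hat{\Delta}, \bold{\Psi}^*)\leq \tfrac{\lambda}{2}R(\hat{\Delta})+\lambda R(\hat{\Delta}_{\bar{\mathcal{M}}})$ from Stage 1 gives $\delta L(\hat{\Delta}, \bold{\Psi}^*)\leq 3\lambda R(\hat{\Delta}_{\bar{\mathcal{M}}})$. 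On the other hand, the RSCC now applies and supplies $\delta L(\hat{\Delta}, \bold{\Psi}^*)\geq \kappa_L\|\hat{\Delta}\|^2$. Controlling the regularizer by the subspace compatibility constant, $R(\hat{\Delta}_{\bar{\mathcal{M}}})\leq \bold{\Phi}(\bar{\mathcal{M}})\|\hat{\Delta}_{\bar{\mathcal{M}}}\|\leq \bold{\Phi}(\bar{\mathcal{M}})\|\hat{\Delta}\|$, I obtain $\kappa_L\|\hat{\Delta}\|^2\leq 3\lambda\bold{\Phi}(\bar{\mathcal{M}})\|\hat{\Delta}\|$. Solving gives $\|\hat{\Delta}\|\leq 3\lambda\bold{\Phi}(\bar{\mathcal{M}})/\kappa_L$, which is the stated $\ell_2$ bound (constant $9$ after squaring), and substituting this estimate into $R(\hat{\Delta})\leq 4\bold{\Phi}(\bar{\mathcal{M}})\|\hat{\Delta}\|$ produces the companion bound $R(\hat{\bold{\Psi}}_{\lambda}-\bold{\Psi}^*)\leq 12\lambda\bold{\Phi}^2(\bar{\mathcal{M}})/\kappa_L$.

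The main obstacle is Stage 1: one must orchestrate convexity of $L$, the dual-norm inequality that links $R(\nabla L(\bold{\Psi}^*))$ to $R(\hat{\Delta})$, and the decomposability identity so that the $\bar{\mathcal{M}}^{\perp}$ and $\bar{\mathcal{M}}$ components separate into the cone inequality, while verifying that the $\bold{\Psi}^*_{\mathcal{M}^{\perp}}$ contribution genuinely vanishes under $\bold{\Psi}^*\in\mathcal{M}$. Once the cone condition is secured, Stage 2 is routine algebra, with the numerical constants $9$ and $12$ emerging from re-inserting $R(\hat{\Delta})\leq 4R(\hat{\Delta}_{\bar{\mathcal{M}}})$ before chaining against the restricted strong convexity lower bound.
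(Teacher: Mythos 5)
Your proposal takes a genuinely different route from the paper. The paper's proof treats the oracle inequality itself as a black box: it cites Theorem 1 and Corollary 1 of Negahban et al.\ (2012) and spends its few lines on the one DTL-specific step, namely constructing a pair of subspaces for which the geometric regularizer is decomposable (given $\mathcal{M}$, it takes $\bar{\mathcal{M}}$ to be the set of points joined by a Delaunay edge to the convex hull $\mathcal{H}(\mathcal{M})$, and asserts $R(\bold{u}+\bold{v})=R(\bold{u})+R(\bold{v})$ for $\bold{u}\in\mathcal{M}$, $\bold{v}\in\bar{\mathcal{M}}^{\perp}$). You do the opposite: you re-derive the Negahban machinery from scratch. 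Your two stages reproduce that argument correctly --- the minimizer inequality plus the dual-norm bound and decomposability give cone membership $R(\hat{\Delta}_{\bar{\mathcal{M}}^{\perp}})\leq 3R(\hat{\Delta}_{\bar{\mathcal{M}}})$, then RSCC and subspace compatibility give $\|\hat{\Delta}\|\leq 3\lambda\bold{\Phi}(\bar{\mathcal{M}})/\kappa_L$ and $R(\hat{\Delta})\leq 12\lambda\bold{\Phi}^2(\bar{\mathcal{M}})/\kappa_L$, which are Negahban's actual constants (the powers of $\lambda$ and $\kappa_L$ in the paper's statement are garbled relative to this, and you reasonably match the correct form). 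Your route has the merit of making the argument self-contained where the paper leans entirely on a citation.

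However, there is a genuine gap relative to what this theorem requires in this paper. Everything in your Stage 1 is conditional on properties of $R$ that you never verify for the DTL: decomposability with respect to some concrete pair $(\mathcal{M},\bar{\mathcal{M}})$, and the norm-like behavior (subadditivity for the triangle-inequality steps, and a dual pairing for the H\"older step bounding $\langle\nabla L(\bold{\Psi}^*),\hat{\Delta}\rangle$ by $\tfrac{\lambda}{2}R(\hat{\Delta})$). Here $R$ is not an abstract norm; it is the total discrete curvature penalty of equation (3.2), a sum of angle/inner-product terms in the up-norm vectors, for which none of these properties is automatic --- establishing (or at least asserting, as the paper does) the decomposition over Delaunay-neighborhood subspaces is precisely the only DTL-specific content of the proof, and it is absent from your write-up. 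Relatedly, the stated hypothesis is $\lambda\geq 2R(\nabla L(\bold{\Psi}^*))$ with $R$ itself, whereas your H\"older step needs the dual function $R^*(\nabla L(\bold{\Psi}^*))$; you silently substituted the correct condition, which should be flagged rather than glossed over. In short: your Stage 1/Stage 2 correctly unpack the cited theorem, but the proof of this theorem, as the paper frames it, is the verification that the DTL regularizer fits that framework at all, and that step is missing.
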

\begin{proof}
We first show that the regularization function of DTL is decomposable, and then we use Theorem 1 and Corollary 1 in \cite{Negahban2012} to prove the theorem.
For a given set of parameters $\mathcal{M}$, define $\mathcal{H}(\mathcal{M})$ as its convex hull, and $\bar{\mathcal{M}}$ as the set of points that are connected to any point of $\mathcal{H}(\mathcal{M})$ by an edge of the Delaunay triangulation $\mathcal{D}(\mathcal{X})$. Then, it is clear that based on the definition of geometric regularization function, we have $R(\bold{u} + \bold{v}) = R(\bold{u}) + R(\bold{v})$ for all $\bold{u} \in \mathcal{M}$ and $\bold{v} \in \bar{\mathcal{M}}^{\perp}$.
Thus if the loss function satisfies RSCC for such an $\mathcal{M}$, then we can make the conclusion.

\end{proof}

\section{Comparisons of Statistical Learners}\label{Comparisons}
\begin{figure}[t!]
    \subfloat[DTL]{%
    \includegraphics[width=0.3\linewidth]{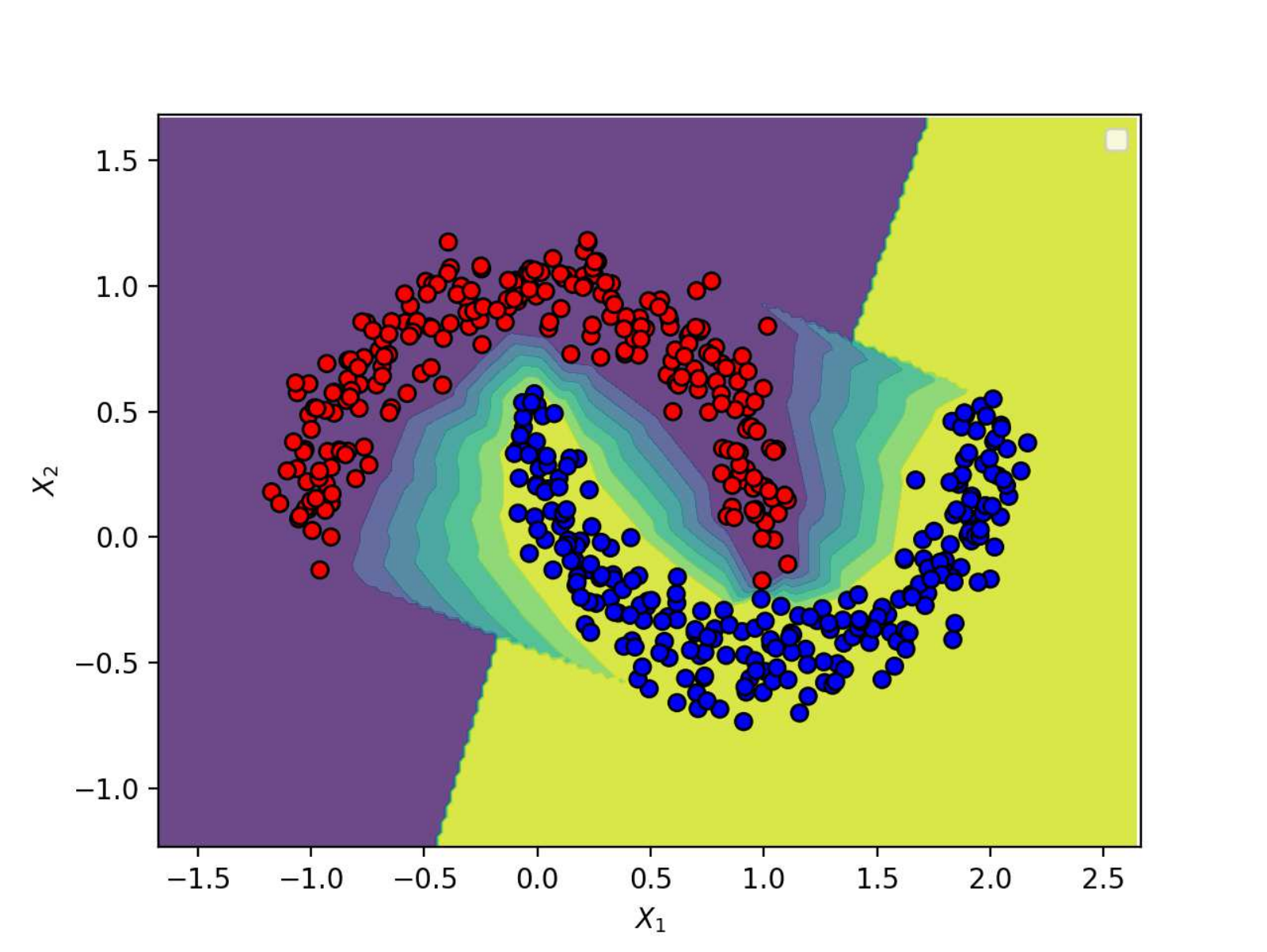}
    }
    \hfill
    \subfloat[Neural Network]{%
    \includegraphics[width=0.3\linewidth]{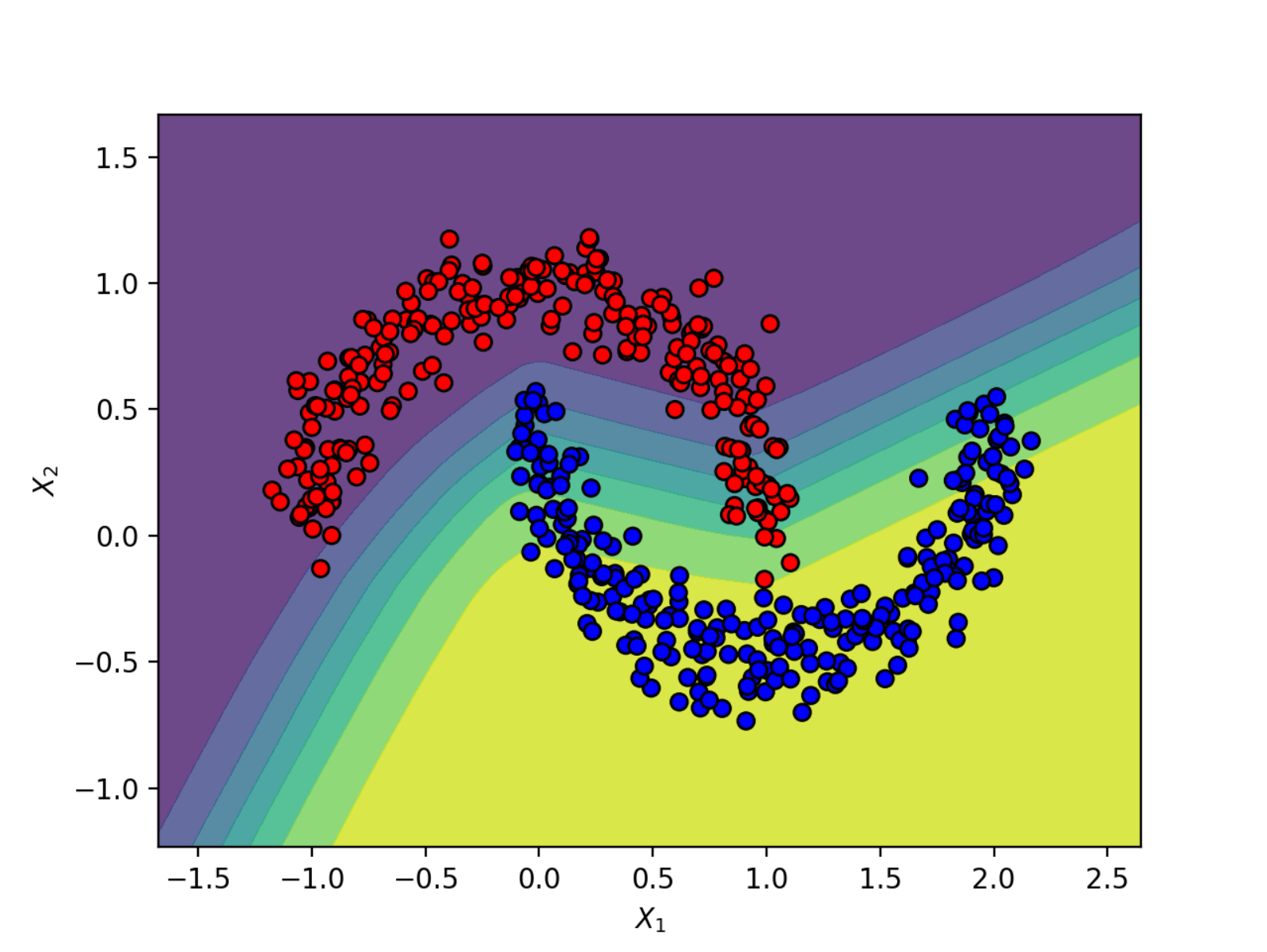}
    }
    \hfill
    \subfloat[Decision tree]{%
    \includegraphics[width=0.3\linewidth]{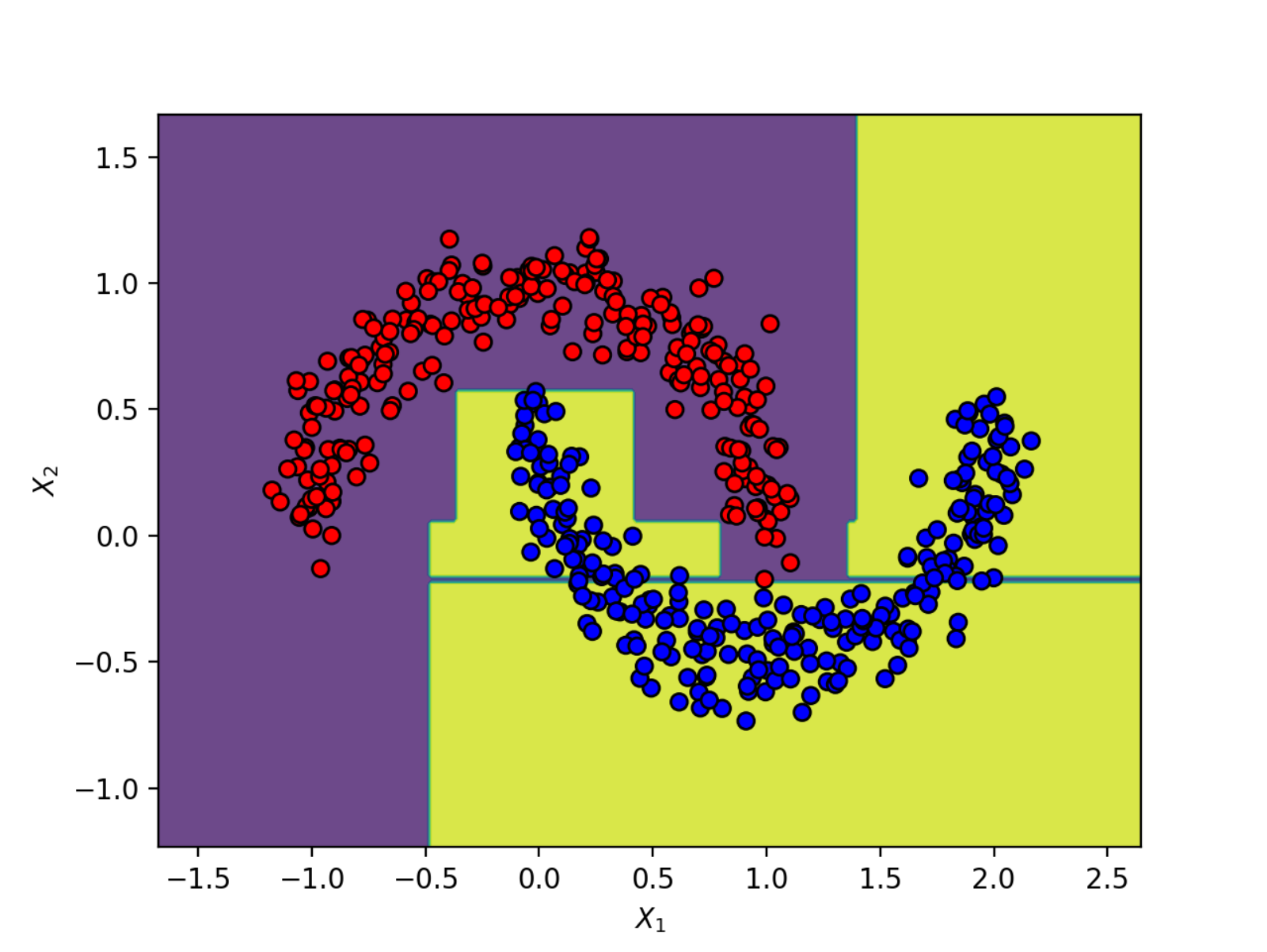}
    }
    \hfill
    \caption{Classification results of the `make moons' dataset}
    \label{fig:classification_comparison2}
    \end{figure}

\begin{figure}[t!]
    \subfloat[DTL]{%
    \includegraphics[width=0.3\linewidth]{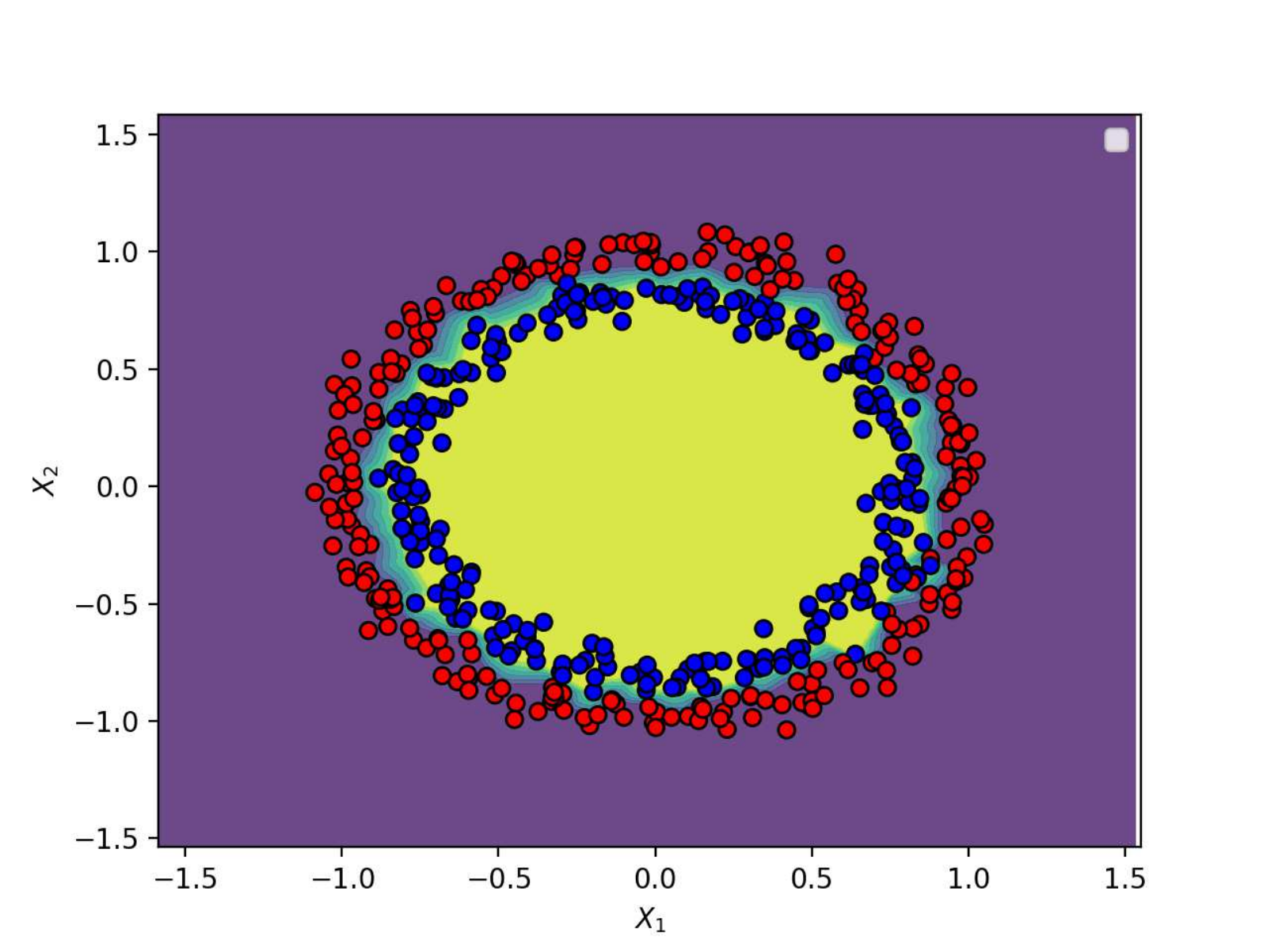}
    }
    \hfill
    \subfloat[Neural Network]{%
    \includegraphics[width=0.3\linewidth]{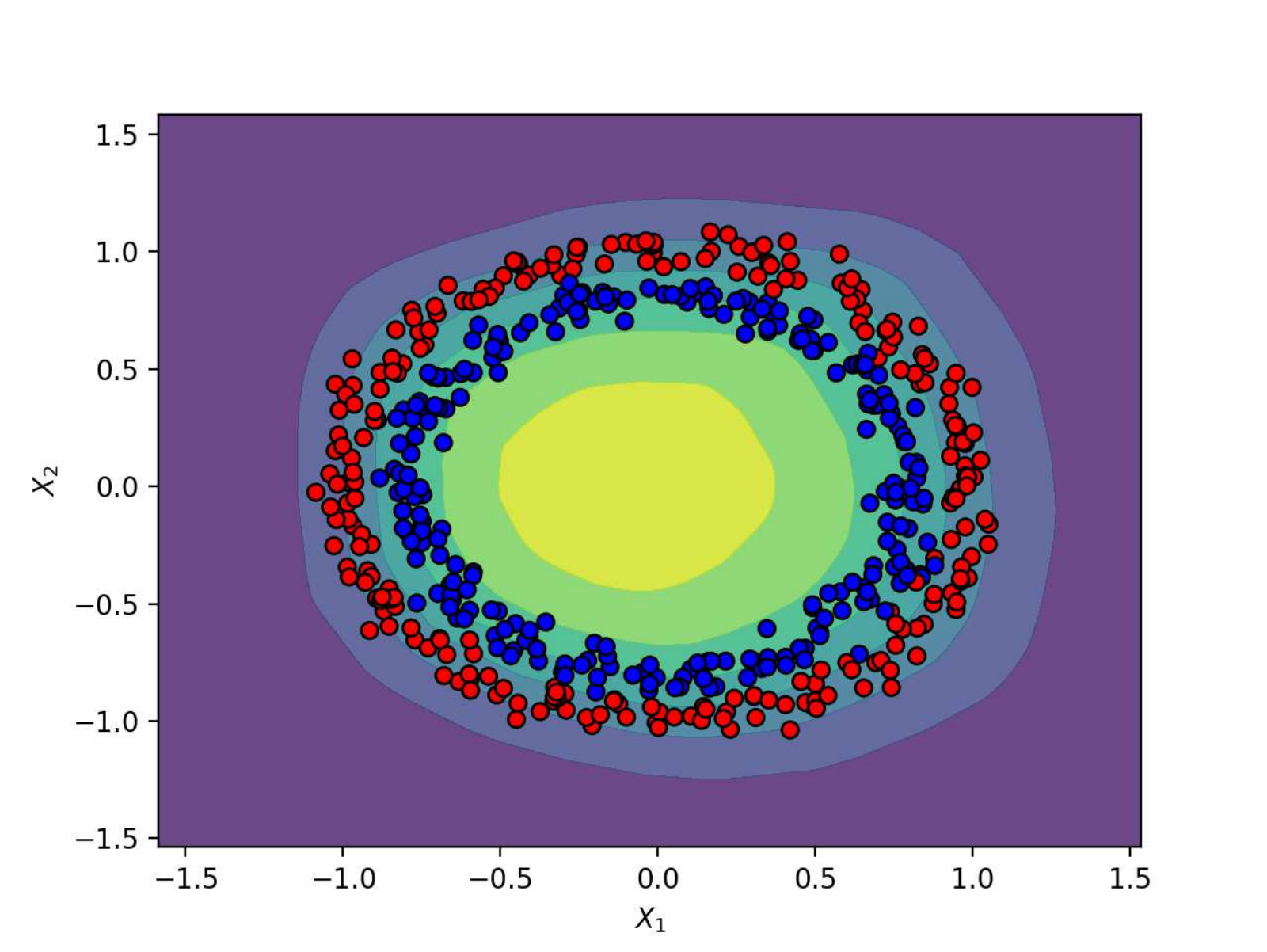}
    }
    \hfill
    \subfloat[Decision tree]{%
    \includegraphics[width=0.3\linewidth]{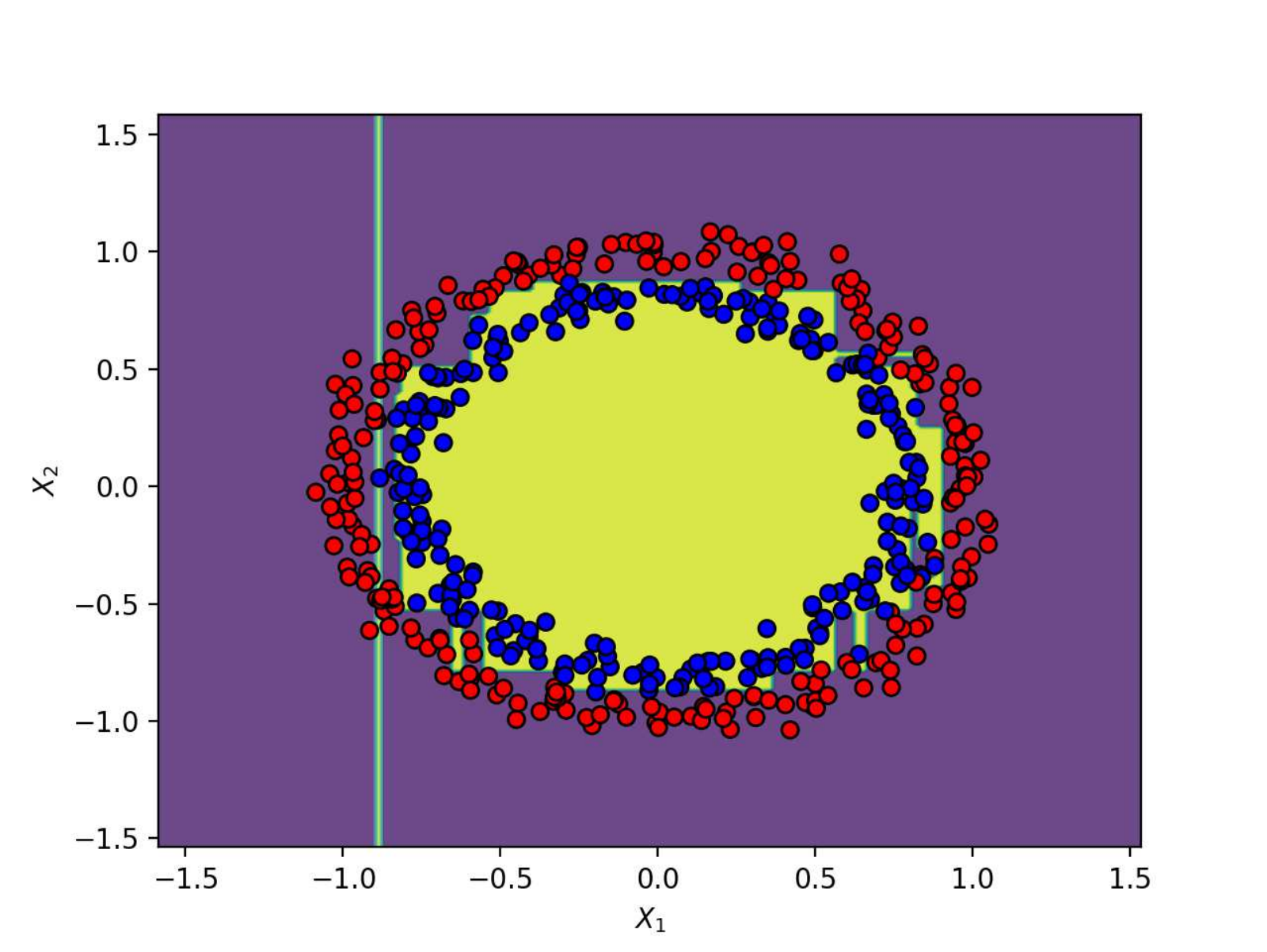}
    }

    \caption{Classification results of the `circles' dataset}
\label{fig:classification_comparison3}
    \end{figure}
We use two artificial classification datasets to visualize how the DTL-based methods
differ from the neural network and tree-based methods. In particular, we demonstrate the smoothness and robustness of the DTL-based methods, when handling feature interaction problems.

The artificial datasets are generated from three models of the Python Scikit learn package.
The `moons' model has two clusters of points with moon shapes, and
the `circles' model consists of two clusters of points distributed
along two circles with different radiuses.
As shown in Figures \ref{fig:classification_comparison2}--\ref{fig:classification_comparison3},
the data are displayed with blue points for $y=1$ and red points for $y=0$.
Three classifiers are considered including the DTL, neural network,
decision tree,
and the estimated probability function of each classifier is plotted with color maps.

We first illustrate the local adaptivity of the DTL. As shown in
panel (a) of Figures \ref{fig:classification_comparison2}--\ref{fig:classification_comparison3},
the estimated probability functions of the DTL are piecewise linear
in the convex hull of the observations. This feature provides the DTL the
advantage when approximating smooth boundary, since it can locally choose to use a piecewise
linear model to estimate the probability function.
Figure \ref{fig:classification_comparison2} (b) exhibits a less adaptive
classification probability (either 0 or 1) using neural network, and
Figure \ref{fig:classification_comparison2} (c) shows a clear stairwise shape
of the classification region using the decision tree.

We next point out that the DTL can more
reliably capture the feature interactions.
As exhibited in Figure \ref{fig:classification_comparison2} (c) and Figure \ref{fig:classification_comparison3} (c),
there are
some spiky regions in the estimated probability functions
using the decision tree. It shows the weakness of the decision tree
in capturing the feature interactions via the marginal tree splitting approach.

\section{Discussion}\label{Discussion}
We have presented the DTL as a differentiable and nonparametric algorithm that has simple geometric interpretations.
The geometrical and statistical properties of the DTL are investigated
to explain its advantages under various settings,
including general regression, classification models, smooth and noiseless models.
These properties signify the importance of developing more applications
based on this piecewise linear learner, e.g., the DTL can be used as an alternative
approach to substituting the nonlinear activation function in the neural network.
The DTL can also be generalized into manifold learning approaches,
with multi-dimensional input and output.
There exist  a series of open questions warranting further investigations.
In terms of theory, we have focused on low-dimensional settings ($p<n$)
for simplicity, while it would be more interesting to generalize the DTL
into high-dimensional settings. As for computation, investigation on
GPU-based parallel computing for DTL is also warranted.

\end{document}